\documentclass{article}


\usepackage[nonatbib,final]{neurips_2025}
\usepackage{enumitem}




\usepackage{wrapfig}
\usepackage[utf8]{inputenc} 
\usepackage[T1]{fontenc}    
\usepackage{hyperref}       
\usepackage{url}            
\usepackage{booktabs}       
\usepackage{amsfonts}       
\usepackage{nicefrac}       
\usepackage{microtype}      
\usepackage{xcolor}         

\usepackage{enumitem}
\usepackage{amsmath}
\usepackage{amssymb}
\usepackage{mathtools}
\usepackage{amsthm}
\usepackage{mathrsfs}
\usepackage{psfrag}
\usepackage{acronym}
\usepackage{multirow}
\usepackage{upgreek}
\usepackage{relsize}
\usepackage{subcaption}
\usepackage{algorithm}
\usepackage[font=small]{caption}
\usepackage{adjustbox}
\usepackage[noend]{algorithmic}
\usepackage{setspace}
\usepackage{mathtools}
\usepackage{mathrsfs}
\usepackage{amsthm}
\usepackage{wrapfig}
\usepackage{amssymb}
\usepackage{url}
\usepackage{graphicx}
\usepackage{zref-base}
\usepackage{atveryend}
\usepackage{cite}
\newcommand\V[1]  { \mathbf{#1} }
\newcommand\B[1]  { \boldsymbol{#1} }
\newcommand\up[1] {\mathrm{#1}}
\newcommand\set[1] {\mathcal{#1}}

\theoremstyle{definition}
\newtheorem{theorem}{Theorem}

\newtheorem{lemma}[theorem]{Lemma}

\theoremstyle{definition}

\theoremstyle{remark}

\acrodef{SVM}{support vector machine}
\acrodef{SRM}{structural risk minimization}
\acrodef{NN}{neural network}
\acrodef{NB}{naive Bayes}
\acrodef{LR}{logistic regression}
\acrodef{CRF}{conditional random field}
\acrodef{LP}{linear program}
\acrodef{GP}{geometric program}
\acrodef{ERM}{empirical risk minimization}
\acrodef{RKHS}{reproducing kernel Hilbert space}
\acrodef{MRC}{minimax risk classifier}
\acrodef{RRM}{robust risk minimization}
\acrodef{LLN}{law of large numbers}
\acrodef{MMD}{maximum mean discrepancy}
\acrodef{DRLR}{distributionally robust logistic regression}
\acrodef{ASM}{accelerated subgradient method}
\acrodef{SM}{subgradient method}
\acrodef{SMRC}{semi-supervised minimax risk classifier}
\acrodef{PCA}{principal component analysis}
\acrodef{IPM}{integral probability metric}
\acrodef{RV}{random variable}

\title{Robust Minimax Boosting with \\ Performance Guarantees}

%

\author{%
  Santiago~Mazuelas$^{1, 3}$\quad Ver\'onica~\'Alvarez$^{2,1}$ \\
  $^1$Basque Center of Applied Mathematics (BCAM)\\ $^{2}$Massachusetts Institute of Technology (MIT)\\ $^{3}$IKERBASQUE-Basque Foundation for Science\\
  \texttt{smazuelas@bcamath.org}, \texttt{vealvar@mit.edu}
}


\begin{document}

\maketitle

\begin{abstract}
Boosting methods often achieve excellent classification accuracy, but can experience notable performance degradation in the presence of label noise. Existing robust methods for boosting provide theoretical robustness guarantees for certain types of label noise, and can 
exhibit only moderate performance degradation. However, previous theoretical results do not account for realistic types of noise and finite training sizes, and existing robust methods can provide unsatisfactory accuracies, even without noise. This paper presents methods for robust minimax boosting (RMBoost) that minimize worst-case error probabilities and are robust to general types of label noise. In addition, we provide finite-sample performance guarantees for RMBoost with respect to the error obtained without noise and with respect to the best possible error (Bayes risk). The experimental results corroborate that RMBoost is not only resilient to label noise but can also provide strong classification accuracy.
\end{abstract}
\section{Introduction}
Boosting methods provide excellent predictive performance in numerous practical scenarios (see e.g., \cite{GriOyaVar:22}). These 
methods determine a linear combination of base-rules through a sequential optimization process that minimizes a certain functional (often 
the empirical average of a convex potential). After the introduction of \mbox{AdaBoost} in \cite{FreSch:97}, multiple 
boosting methods have been presented \cite{Fri:01} together with highly-efficient implementations \cite{CheGue:16,GuoMenFin:17}. 
Unfortunately, it has been widely observed that the performance of boosting methods can be significantly affected by the presence of 
label noise (see e.g., \cite{SchFre:12,FreSch:96,Die:00}). 
Certain boosting methods such as LogitBoost and \mbox{GentleBoost} provide an improved resilience to noise by using alternative 
convex potentials or optimization approaches \cite{Fri:01,FriHasTib:00}. However, as shown in \cite{LonSer:10}, any boosting method 
that minimizes empirical averages of a convex and bounded potential can lead to poor performances in the presence of label noise (even 
if only a very small portion of labels are incorrect). Such a result posed a serious concern on boosting methods, as label noise is often unavoidable in practice and its extent is typically hard to quantify. For instance, in cases where an adversary intentionally modifies some labels, the machine learning practitioner may be entirely unaware of the resulting noise in the training data.
 
Multiple alternative boosting methods have been proposed to bypass the negative result of Long and Servedio in \cite{LonSer:10}  by minimizing empirical averages of non-convex or unbounded potentials \cite{Fre:99,Fre:09,RooMenWill:15,GhoManSas:15,ChaLeeSug:19}. These robust methods can result in performances that are only mildly affected by label noise. Notably, previous theoretical results show that methods based on specific potentials (e.g., unhinged, quadratic, and sigmoid) are provably robust to certain types of label noise \cite{RooMenWill:15,GhoManSas:15,ChaLeeSug:19}. Specifically,
the accuracy of these methods is not degraded by symmetric and uniform label noise for large enough training sizes. However, previous theoretical results do not show how the performance of boosting methods is affected by more realistic types of label noise and finite training sizes. In addition, existing robust methods do not provide theoretical guarantees with respect to the best possible error (Bayes risk). Indeed, certain robust methods have shown to achieve unsatisfactory classification performance \cite{RooMenWill:15,LonSer:22}, since their accuracy can be low even without noise.

This paper presents robust minimax boosting (RMBoost) methods that eliminate the need to select a potential function by directly minimizing worst-case error probabilities. Our results demonstrate that RMBoost is robust to general types of label noise with finite training sizes, and can also provide strong classification performance. The main contributions presented in the paper are as follows.
\vspace{-0.cm}
\begin{itemize}
\itemsep-0cm
\item We show how RMBoost rules can be learned by solving a linear optimization problem with optimum value that corresponds to RMBoost minimax risk.
\item We provide finite-sample performance guarantees for RMBoost with respect to the error obtained without noise and with respect to the Bayes risk.
\item We present efficient algorithms for RMBoost learning that greedily obtain a sequence of linear combinations of base-rules with decreasing minimax risks.
\item The experiments show that RMBoost can outperform existing methods in the presence of noisy labels and also achieve strong classification accuracies without noise.
\end{itemize}
\vspace{-0.cm}
\emph{Notations:} Calligraphic letters represent sets; bold lowercase letters represent vectors;  $\text{sign}(\cdot)$ denotes the sign of its argument; $\| \cdot \|_1$ and $\| \cdot\|_{\infty}$ denote the $1$-norm and the infinity norm of its argument, respectively; $(\,\cdot\,)_+$ and $[\,\cdot\,]^{\top}$ denote the positive part and the transpose of its argument, respectively;  $\V{1}$ denotes the vector with all components equal to 1; $\preceq$ and $\succeq$ denote vector inequalities; and $\mathbb{E}_{\up{p}}\{\,\cdot\,\}$ and $\mathbb{V}\text{ar}_{\up{p}}\{\,\cdot\,\}$ denote, respectively, the expectation and variance of its argument with respect to distribution $\up{p}$.

 \vspace{-0.cm}
\section{Preliminaries}
This section first recalls the setting for boosting methods and states the notation used in the paper. Then, we further describe related methods and results. 
\vspace{-0.cm}
\subsection{Problem formulation}
\vspace{-0cm}
Classification rules assign instances in a Borel set \mbox{$\set{X}\subset\mathbb{R}^d$} with labels in a finite set $\set{Y}$. As is commonly done in the boosting literature, in the following we consider binary classification problems, i.e., $\set{Y}=\{-1,+1\}$. We denote by $\Delta(\set{X}\times\set{Y})$ the set of probability distributions on $\set{X}\times\set{Y}$, endowed with a suitable sigma-algebra, while the set of classification rules (both deterministic and randomized) is denoted by $\text{T}(\set{X},\set{Y})$. For $\up{p}\in\Delta(\set{X}\times\set{Y})$, we denote by $\up{p}_x\in\Delta(\set{X})$ the marginal distribution over $\set{X}$ and by $\up{p}(y|x)$ the conditional probability of label $y\in\set{Y}$ given $x\in\set{X}$. For a classification rule \mbox{$\up{h}\in  \text{T}(\set{X},\set{Y})$}, we denote by $\up{h}(y|x)$ the probability with which instance $x\in\set{X}$ is assigned the label $y\in\set{Y}$ (note that $\up{h}(y|x)\in\{0,1\}$ if $\up{h}$ is a deterministic rule). With a slight abuse of notation we denote by $\up{h}(x)$ the label assignment provided by the rule $\up{h}$ for instance $x$, which is a random variable if $\up{h}$ is a randomized classifier.

Supervised classification methods use training samples to obtain a classification rule $\up{h}$ with small error probability $R(\up{h})$, referred to as risk. If $\up{p}^*\in\Delta(\set{X}\times\set{Y})$ is the underlying distribution of instance-label pairs, the error probability of a classification rule $\up{h}\in \text{T}(\set{X},\set{Y})$ is its expected $0\text{-}1$ loss, that is, \mbox{$R(\up{h})=\mathbb{E}_{\up{p}^*}\{\ell_{0\text{-}1}(\up{h},(x,y))\}$}, where \begin{align}\label{0-1}\ell_{0\text{-}1}(\up{h},(x,y))=\mathbb{P}\{\up{h}(x)\neq y\}=1-\up{h}(y|x)\end{align} is the  $0\text{-}1$ loss of rule $\up{h}$ at instance-label pair $(x,y)$. 

The $n$ training samples 
$(x_1,y_1), (x_2,y_2),\ldots,(x_n,y_n)$ available for learning may be affected by label noise. We consider general types of label noise, namely, for each instance $x\in\set{X}$ the label $y$ is flipped to $-y$ with a probability $0\leq\rho_y(x)\leq 1$ for which no assumptions are imposed. Noise-less cases correspond to $\rho_{+1}(x)=\rho_{-1}(x)=0$ $\forall$ $x\in\set{X}$, symmetric noise corresponds to \mbox{$\rho_{+1}(x)=\rho_{-1}(x)$ $\forall$ $x\in\set{X}$}, and uniform noise corresponds to $\rho_{y}(x)=\rho_{y}(x')$ $\forall$ $x,x'\in\set{X}, y\in\set{Y}$. In practice, it is expected that the noise probabilities of most instances are rather small or zero, while those of other instances are non-negligible and unknown. 

The label noise considered in the paper covers arbitrary forms of label corruption in the training samples. In particular, the results in the paper even account for deliberate manipulations of labels, where an adversary may consistently modify the labels of specific instances ($\rho_{+1}(x)$ or $\rho_{-1}(x)$ may be $1$ for certain instances $x\in\set{X}$ that the adversary deems most influential to learning). With noisy labels, the distribution of training samples $\up{p}^{\text{tr}}\in\Delta(\set{X}\times\set{Y})$ is different to the underlying distribution $\up{p}^*$. Specifically, the marginals coincide $\up{p}^{\text{tr}}_x=\up{p}^*_x$ while the label conditionals satisfy \begin{align}\label{transition}\up{p}^{\text{tr}}(y|x)=(1-\rho_y(x))\up{p}^*(y|x)+\rho_{-y}(x)\up{p}^*(-y|x).\end{align}
Boosting methods obtain classification rules given by combinations of base-rules in a set \mbox{$\set{H}=\{\hbar_1,\hbar_2,\ldots,\hbar_T\}\subset\text{T}(\set{X},\set{Y})$}. The set of base-rules considered often contains an extremely large number $T$ of simple rules, e.g., all the decision trees with a bounded number of nodes given by components of instances in the training set. Often, base-rules are themselves classification rules, i.e., $\hbar(x)\in\{-1,1\}$ for any $x\in\set{X}$. We only assume the common case in which the base-rules are bounded measurable functions $\hbar(x)\in[-1,1]$ for any $x\in\set{X}$, and that $-\hbar\in\set{H}$ if $\hbar\in\set{H}$. 
\vspace{-0.cm}
\subsection{Related work}\label{sec-pre}
\vspace{-0.cm}
Most of boosting methods can be interpreted as \ac{ERM} techniques that learn classification rules by solving the optimization problem 
\vspace{-0.cm}
\begin{align}
\label{erm-potential}
\min_{\B{\mu}}\  \frac{1}{n}\sum_{i=1}^n\phi\big(y_i\pmb{\hbar}(x_i)^\top\B{\mu}\big)\  
\vspace{-0.cm}
\end{align}
where the vector $\pmb{\hbar}(x)=[\hbar_1(x),\hbar_2(x),\ldots,\hbar_T(x)]^{\top}$ is given by predictions of the base-rules in $\set{H}$. Then, the classification rule is given by $\up{h}(x)=\text{sign}(\pmb{\hbar}(x)^\top\B{\mu}^*)$ with $\B{\mu}^*$ a solution of \eqref{erm-potential}. The function $\phi(\cdot)$ in \eqref{erm-potential} is referred to as potential function and its argument $y_i\pmb{\hbar}(x_i)^\top\B{\mu}$ is referred to as the margin of sample $(x_i,y_i)$ for parameters $\B{\mu}$ (see e.g., \cite{SchFre:12}).
Each potential function gives rise to a different boosting method (see e.g., \cite{Fri:01, SchFre:12}). For instance, AdaBoost corresponds to the potential function $\phi(z)=\exp(-z)$, and LogitBoost corresponds to the potential function  \mbox{$\phi(z)=\log(1+\exp(-z))$}. In particular, the resilience to noise of LogitBoost is attributed to the lower values taken by the logistic potential for~$z<0$.


The results in \cite{LonSer:10} showed that even a very small fraction of noisy labels can lead to poor performances using any convex and bounded potential (i.e., $\phi(z)$ convex, $\phi'(0)<0$, and \mbox{$\lim_{z\to\infty}\phi(z)=0$)}. Multiple methods have been proposed to bypass the negative result in \cite{LonSer:10} by using non-convex or unbounded potentials, such as the sigmoid potential $\phi(z)=(1+\exp(z))^{-1}$, the quadratic potential $\phi(z)=(1-z)^2$, and the unhinged potential $\phi(z)=1-z$. These potential functions have been shown to result in methods that are robust to noise in the sense that the corresponding optimization \eqref{erm-potential} is not affected by symmetric and uniform label noise for large enough training sizes \cite{RooMenWill:15,GhoManSas:15,ManSas:13}. Specifically, for some potentials including sigmoid and unhinged, the expected potential with symmetric and uniform noise is proportional to that without noise \cite{GhoManSas:15}. For the quadratic potential, minimizers of the expected potential with symmetric and uniform noise are equivalent to those without noise \cite{ManSas:13}. On the other hand, it has been shown that such potential functions can lead to poor classification performances, even without noise \cite{RooMenWill:15,LonSer:22}.

The existing robustness results do not show how the performance is affected by more realistic types of noise and finite training sizes. Only the results in \cite{GhoManSas:15} go beyond symmetric and uniform cases and provide certain extensions of the above-described results to cases with symmetic non-uniform noise ($\rho_{+1}(x)=\rho_{-1}(x)$ varying with $x$).
Furthermore, existing robustness results do not provide finite-sample generalization guarantees since they analyze the potential's actual expectation, not cases with empirical averages. In boosting methods, results for finite-sample empirical averages cannot be derived from those for actual expectations because performance bounds based on the convergence of the potential averages are inadequate for boosting (see e.g., Sec. 4.1 in \cite{SchFre:12}).


The following presents boosting methods that avoid the need to select a potential function by directly minimizing worst-case error probabilities.
\vspace{-0.cm}
\section{Minimax boosting}\label{sec-3}
\vspace{-0.cm}
RMBoost methods learn classification rules by solving the minimax problem
\begin{align}\label{minimax}
\adjustlimits\min_{\up{h}\in \text{T}(\set{X},\set{Y})\,\,} \max_{\up{p}\in\set{U}}\,\,\mathbb{E}_{\up{p}}\big\{\ell_{0\text{-}1}(\up{h},(x,y))\big\}.
\end{align}
Such an optimization considers general classification rules $\text{T}(\set{X},\set{Y})$, probability distributions in a subset \mbox{$\set{U}\subset\Delta(\set{X}\times\set{Y})$} referred to as uncertainty set, and expected 0-1 losses (i.e., error probabilities). Minimax approaches such as that in \eqref{minimax} are commonly known as robust risk minimization or distributionally robust techniques \cite{LeeRag:18,MazRomGru:23,MazZanPer:20,ShaKuhMoh:19,AbaMohKuh:15}. Unlike an \ac{ERM} approach, the optimization in \eqref{minimax} considers multiple distributions beyond the empirical distribution of training samples, so that RMBoost methods can achieve enhanced robustness as shown in the following. In addition, the optimal value of~\eqref{minimax} referred to as the minimax risk $\overline{R}$ can be used to assess RMBoost classification error. 

Unlike other distributionally robust methods, \mbox{RMBoost} considers uncertainty sets defined by the set of base-rules $\set{H}$. Specifically, the uncertainty set of distributions $\set{U}$ in \eqref{minimax} is given by the training samples and the base-rules $\set{H}$ as
\vspace{-0.cm}
\begin{align}\label{uncertainty}
\set{U}=\Big\{\up{p}\in&\Delta(\set{X}\times\set{Y})\mbox{ s.t. } \big\|\mathbb{E}_{\up{p}}\{y\pmb{\hbar}(x)\}-\frac{1}{n}\sum_{i=1}^ny_i\pmb{\hbar}(x_i)\big\|_\infty\leq\lambda\Big\}
\vspace{-0.cm}
\end{align}
where the vector $\pmb{\hbar}(x)=[\hbar_1(x),\hbar_2(x),\ldots,\hbar_T(x)]^{\top}$ is given by the predictions of base-rules in $\set{H}$ as in \eqref{erm-potential}. The parameter $\lambda>0$ accounts for the error in the finite-sample average in \eqref{uncertainty} and can be selected using standard cross-validation approaches. This selection can be enhanced taking into account the family of base-rules used or prior knowledge on the amount of label noise. In particular, more complex families of base-rules or increased levels of noise can benefit from higher values for $\lambda$. A simple default value for such parameter is $\lambda=1/\sqrt{n}$, which is the value used in all the experimental results in the paper (Appendix~\ref{sec-sensitivity} further analyzes the sensitivity of the proposed methods to the choice of that hyperparameter).

The uncertainty set in \eqref{uncertainty} comprises probability distributions over instance-label pairs that are similar to the empirical distribution of training samples, as is commonly done in distributionally robust methods. While most existing methods define this similarity in terms of metrics such as the Kullback-Leibler divergence or the Wasserstein distance \cite{LeeRag:18}, the proposed approach defines similarity in terms of the set of base-rules considered (e.g., the set of decision trees with $t$ decision nodes). Specifically, two distributions are regarded as similar if, for any base-rule $\up{h}\in\set{H}$, the expected value of $y\up{h}(x)$ changes only slightly when computed under either distribution. This notion of similarity offers two key advantages: it can yield quite restricted uncertainty sets (since common sets of base-rules are fairly expressive), and provides strong theoretical guarantees (since common sets of base-rules facilitate the fast and uniform convergence of empirical expectations).



The minimax formulation in \eqref{minimax} followed by RMBoost methods is particularly suitable to obtain robust classification rules since it minimizes worst-case error probabilities. However, the minimax problem in \eqref{minimax} may seem to be computationally prohibitive in practice.
The next result shows that RMBoost classification rules can be obtained by solving the convex optimization problem
\vspace{-0.cm}
\begin{align}\label{dual}
\min_{\B{\mu}}&\,\,F(\B{\mu})\coloneqq\frac{1}{2}-\frac{1}{n}\sum_{i=1}^ny_i\pmb{\hbar}(x_i)^{\top}\B{\mu}+\lambda\|\B{\mu}\|_1\\
\text{s.t.}\hspace{0.1cm}&\hspace{1.4cm}-\frac{1}{2}\leq\pmb{\hbar}(x)^{\top}\B{\mu}\leq\frac{1}{2},\  \forall x\in\set{X}.\nonumber
\end{align} 

\begin{theorem}\label{th1}
If $\B{\mu}^*$ is a solution of \eqref{dual}, the classification rule $\up{h}_{\B{\mu}^*}\in\text{T}(\set{X},\set{Y})$ given by
\begin{align}\label{solution}
\up{h}_{\B{\mu}^*}(y|x)=y\pmb{\hbar}(x)^{\top}\B{\mu}^*+1/2
\end{align}
is a solution of the minimax problem in \eqref{minimax}. In addition, the minimax risk $\overline{R}$ coincides with the optimum of \eqref{dual}, that is $\overline{R}=F(\B{\mu}^*)$.
\end{theorem}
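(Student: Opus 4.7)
The plan is to prove Theorem~\ref{th1} via Lagrangian duality applied to the $\ell_\infty$-constraint that defines $\set{U}$, followed by analytic evaluation of the remaining inner optimizations.

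First I would dualize the constraint in \eqref{uncertainty} by introducing an unconstrained multiplier $\boldsymbol{\mu}\in\mathbb{R}^T$; since the dual of $\|\cdot\|_\infty$ is $\|\cdot\|_1$, this contributes the penalty $\lambda\|\boldsymbol{\mu}\|_1$. Strong duality holds because the objective $\mathbb{E}_{\up{p}}\{1-\up{h}(y|x)\}$ is linear in $\up{p}$, the set $\Delta(\set{X}\times\set{Y})$ is convex, and the empirical distribution of the training samples furnishes a Slater point for any $\lambda>0$. Writing the inner supremum over probability distributions on $\set{X}\times\set{Y}$ as a pointwise supremum over $(x,y)$, the minimax value becomes
\begin{align*}
\overline{R} \;=\; \min_{\up{h},\,\boldsymbol{\mu}}\; \sup_{x\in\set{X},\,y\in\set{Y}} \big\{1-\up{h}(y|x)+y\,\pmb{\hbar}(x)^\top\boldsymbol{\mu}\big\}\;-\;\tfrac{1}{n}\sum_{i=1}^n y_i\,\pmb{\hbar}(x_i)^\top\boldsymbol{\mu}\;+\;\lambda\|\boldsymbol{\mu}\|_1.
\end{align*}

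Next I would swap the two infima and solve the innermost minimization over $\up{h}$, which decouples across $x$. For each fixed $\boldsymbol{\mu}$ and $x$, the problem reduces to choosing $t=\up{h}(+1|x)\in[0,1]$ so as to minimize $\max\{1-t+\pmb{\hbar}(x)^\top\boldsymbol{\mu},\,t-\pmb{\hbar}(x)^\top\boldsymbol{\mu}\}$. A one-variable calculation shows that the two branches intersect at $t^\star=1/2+\pmb{\hbar}(x)^\top\boldsymbol{\mu}$, with common value $1/2$ whenever $|\pmb{\hbar}(x)^\top\boldsymbol{\mu}|\leq 1/2$, and a clipped endpoint $t^\star\in\{0,1\}$ with value $|\pmb{\hbar}(x)^\top\boldsymbol{\mu}|$ otherwise. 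Since $t^\star=1/2+(+1)\pmb{\hbar}(x)^\top\boldsymbol{\mu}$ and $1-t^\star=1/2+(-1)\pmb{\hbar}(x)^\top\boldsymbol{\mu}$, this is exactly the rule \eqref{solution}, which is a valid probability precisely under the constraint of \eqref{dual}.

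Finally I would reduce the resulting outer minimization
\begin{align*}
\min_{\boldsymbol{\mu}}\; \sup_{x\in\set{X}} \max\!\left\{\tfrac{1}{2},\;|\pmb{\hbar}(x)^\top\boldsymbol{\mu}|\right\}\;-\;\tfrac{1}{n}\sum_{i=1}^n y_i\,\pmb{\hbar}(x_i)^\top\boldsymbol{\mu}\;+\;\lambda\|\boldsymbol{\mu}\|_1
\end{align*}
to the constrained form \eqref{dual}. Every feasible $\boldsymbol{\mu}$ achieves the same value in both formulations. For any infeasible $\boldsymbol{\mu}$ with $M\coloneqq\sup_x|\pmb{\hbar}(x)^\top\boldsymbol{\mu}|>1/2$, the rescaled point $\boldsymbol{\mu}/(2M)$ satisfies the constraint of \eqref{dual} and, using the elementary bound $|\tfrac{1}{n}\sum_i y_i\,\pmb{\hbar}(x_i)^\top\boldsymbol{\mu}|\leq M$, gives an objective no larger than the unconstrained value at $\boldsymbol{\mu}$, so the two problems share the same optimum and $\overline{R}=F(\boldsymbol{\mu}^*)$. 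The principal obstacle is the formal justification of strong duality when $\up{p}$ ranges over probability measures on a Borel set in $\mathbb{R}^d$ rather than a finite space; this can be handled by standard duality results for moment problems, or by an application of Sion's minimax theorem to a suitable weak topology on $\Delta(\set{X}\times\set{Y})$.
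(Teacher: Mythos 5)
Your proposal is correct and follows essentially the same route as the paper: dualize the moment constraints defining $\set{U}$ (the paper's Lemma~\ref{lemma}, proved via Fenchel duality with the empirical distribution as the interior/Slater point), solve the inner minimization over $\up{h}$ pointwise to obtain the term $\sup_{x\in\set{X}}\max\{1/2,|\pmb{\hbar}(x)^{\top}\B{\mu}|\}$ together with the rule \eqref{solution}, and then pass to the constrained form \eqref{dual} by the same rescaling of an infeasible $\B{\mu}$ (your $\B{\mu}/(2M)$ is the paper's $\B{\mu}^*/C$). The one step you defer to citation---strong duality over probability measures on $\set{X}\times\set{Y}$---is precisely what Appendix~\ref{ap-lemma} establishes, so there is no substantive gap.
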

\vspace{-0.cm}
\begin{proof}
See Appendix~\ref{apd:proof_th_1}.
\end{proof}
\vspace{-0.cm}
The result above shows that the minimax problem in \eqref{minimax} is equivalent to the convex optimization problem in \eqref{dual}. This equivalence not only provides a tractable formulation for RMBoost learning but also enables the interpretation of RMBoost methods  in terms of margins. In particular, the formulation in \eqref{dual} reveals that RMBoost maximizes the average margin while enforcing both upper and lower margin constraints. As a result of these constraints, an increased average margin leads to an overall increase in the distribution of margins (an average margin near $1/2$ pushes all the margins to be near $1/2$). In contrast, methods that only aim to maximize the average margin may result in instances with very low margins since others are allowed to have large margins. Hence, in existing methods the average margin is often maximized while simultaneously minimizing the margin variance \cite{GaoZho:13,SheLi:10}. Other methods such as LPBoost \cite{DemBenSha:02} and \mbox{Arc-Gv} \cite{Bre:99} that maximize the minimum margin often lead to poor classification performance since they only account for the minimum margin and not for the distribution of margins \cite{ReySch:06}. The interpretation of RMBoost in terms of margins also provides further insights for its robustness to noise. In conventional boosting methods, a sample with an incorrect label can highly impact the learning process if its margin takes a large negative value because it would result in a large potential value in \eqref{erm-potential}. For methods based on quadratic or unhinged potentials, even samples with large positive margins can significantly impact the learning process because they would also result in large potential values. Such type of effects are not present in the methods proposed because the margins are bounded due to the constraints in the optimization problem \eqref{dual}.

Theorem~\ref{th1} shows that the classification rule with the minimum worst-case error probability is given by a linear combination of base-rules. This minimax classification rule can be learned by solving the \mbox{optimization \eqref{dual}}, which carries out an L1-regularization (term $\lambda\|\B{\mu}\|_1$) leading to a sparse combination of base-rules. The methods proposed in \cite{BalFre:15} also minimize worst-case error probabilities using a combination of base-rules. However, such work considers a transductive scenario and aims to combine a reduced set of base-rules using prior knowledge of their classification errors.

The classification rule $\up{h}_{\B{\mu}^*}$ that minimizes the worst-case error probability randomly assigns labels with probabilities given by the predictions of base-rules, as shown in \eqref{solution}. Similarly to other methods (e.g., PAC-Bayes techniques \cite{GerLacLavMarRoy:15}), it is often preferred in practice to use the corresponding deterministic classifier denoted by $\up{h}_{\B{\mu}^*}^{\text{d}}$ which assigns the label corresponding to the highest probability, i.e., $\up{h}_{\B{\mu}^*}^{\text{d}}(x)=\text{sign}(\pmb{\hbar}(x)^{\top}\B{\mu}^*)$. The error probability of the deterministic classifier is ensured to satisfy $R(\up{h}_{\B{\mu}^*}^{\text{d}})\leq 2R(\up{h}_{\B{\mu}^*})$ (see e.g., \cite{GerLacLavMarRoy:15,MazRomGru:23}) and often satisfies \mbox{$R(\up{h}_{\B{\mu}^*}^{\text{d}})\leq R(\up{h}_{\B{\mu}^*})$} in practice.

Efficient learning algorithms for RMBoost can be developed by leveraging general-purpose optimization techniques. Using as variables the positive and negative parts of $\B{\mu}$, the optimization problem \eqref{dual} is equivalent to a linear program that often has sparse solutions, as described above. Therefore, highly efficient algorithms for large-scale linear optimization can be utilized for \mbox{RMBoost} learning. In particular, Section~\ref{sec-4} presents an efficient learning algorithm that address \eqref{dual} using column generation methods. In addition, we next show that \mbox{RMBoost} does not require to solve the optimization in \eqref{dual} with high accuracy, for instance the presented methods only need that the expected constraint violation in \eqref{dual} is small.  

As described above, the formulation of RMBoost by means of the optimization problem \eqref{dual} enables to develop effective learning algorithms and also to interpret RMBoost methods in terms of margins. As shown in the following, the equivalent formulation of \mbox{RMBoost} in~\eqref{minimax} as a minimax method enables to obtain performance guarantees for general types of label noise.
\vspace{-0.cm}
\section{Generalization and robustness guarantees}\label{sec-guarantees}
\vspace{-0.cm}
This section characterizes RMBoost generalization performance with respect to the performance obtained without noise and the best possible error probability.

As shown in Theorem~\ref{th1}, the classification rule given by \eqref{solution} minimizes the worst-case error probability if the parameter $\B{\mu}^*$ is a solution of \eqref{dual}. Any other $\B{\mu}$ can be similarly used to define classification rules as \vspace{-0.cm}
\begin{align}\label{h_mu}\up{h}_{\B{\mu}}(y|x)&=\Big[y\pmb{\hbar}(x)^{\top}\B{\mu}+\frac{1}{2}\Big]_0^1,\  \  \up{h}_{\B{\mu}}^{\text{d}}(x)=\text{sign}(\pmb{\hbar}(x)^{\top}\B{\mu})\end{align}
where $[\,\cdot\,]_0^1$ denotes the clip function \mbox{$[\,z\,]_0^1=(\min(z,1))_+$}.

The usage of efficient optimization algorithms for \eqref{dual} can lead to suboptimal solutions that result in a value larger than the minimax risk $\overline{R}$ or fail to satisfy all the constraints. We say that $\B{\mu}$ is an $\varepsilon_{\text{opt}}$-solution of \eqref{dual} if the sum of the value suboptimality and the expected constraint violation is at most $\varepsilon_{\text{opt}}$, that is
\vspace{-0.cm}
{\setlength{\belowdisplayskip}{-5pt}%
 \setlength{\belowdisplayshortskip}{0pt}\begin{align}\label{epsilon_opt}\Big(F(\B{\mu})-\overline{R}\,\Big)+\mathbb{E}_{\up{p}_x^*}\Big(|\pmb{\hbar}(x)^{\top}\B{\mu}|-\frac{1}{2}\Big)_+\leq\varepsilon_{\text{opt}}.\end{align}}
 
The next theorem provides generalization bounds for RMBoost with respect to the error obtained by an ideal RMBoost learned without label noise and with infinite training samples.
\begin{theorem}\label{th3}
Let $P_\text{noise}$ be the probability with which a label is incorrect at training, i.e., \mbox{$P_\text{noise}=\mathbb{E}_{\up{p}^*}\{\rho_y(x)\}$}, and $\varepsilon_{\text{est}}$ be a bound for the concentration of training averages of base-rules, that is
\begin{align}\label{est-error}\Big|\mathbb{E}_{\up{p}^{\text{tr}}}\{y\hbar(x)\}-\frac{1}{n}\sum_{i=1}^ny_i\hbar(x_i)\Big|\leq\varepsilon_{\text{est}}, \mbox{  }\forall\, \hbar\in\set{H}.\vspace{-0.cm}\mbox{ }\end{align}%
If $\B{\mu}$ is an \mbox{$\varepsilon_{\text{opt}}$-optimal} solution of \eqref{dual} corresponding to $n$ training samples, and 
$\B{\mu}_{\text{o}}$ is an exact solution of \eqref{dual} using the exact expectation without noise $\mathbb{E}_{\up{p}^*}\{y\pmb{\hbar}(x)\}$ instead of $(1/n)\sum_{i=1}^ny_i\pmb{\hbar}(x_i)$. Then, we have
\begin{align}
R(\up{h}_{\B{\mu}})\leq\, &R(\up{h}_{\B{\mu}_{\text{o}}})+\varepsilon_{\text{opt}}+(\varepsilon_\text{est}+2P_{\text{noise}}+\lambda)\|\B{\mu}-\B{\mu}_{\text{o}}\|_1.\label{bound-noise}
\end{align}

In addition, if $P_{\text{noise}}<1/2$, we have
\begin{align}
\label{bound-noise-sym}
R(\up{h}_{\B{\mu}})\leq\, &R(\up{h}_{\B{\mu}_{\text{O}}})+\frac{\varepsilon_{\text{opt}}}{1-2P_{\text{noise}}}+\frac{\varepsilon_{\text{est}}+2\sqrt{\mathbb{V}\text{ar}_{\up{p}^*}\{\rho_y(x)\}}+\lambda}{1-2P_{\text{noise}}}\|\B{\mu}-\B{\mu}_{\text{o}}\|_1.
\end{align}
\end{theorem}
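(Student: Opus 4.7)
The plan is to bound the noise-free risk $R(\up{h}_\B{\mu})$ by an affine functional of $\B{\mu}$ plus the expected constraint-violation term of \eqref{epsilon_opt}, then combine this with the $\varepsilon_{\text{opt}}$-optimality of $\B{\mu}$ and a ``noise bridge'' relating the noisy empirical average $(1/n)\sum_i y_i\pmb{\hbar}(x_i)$ to the true expectation $\mathbb{E}_{\up{p}^*}\{y\pmb{\hbar}(x)\}$. Bounds \eqref{bound-noise} and \eqref{bound-noise-sym} will follow from a crude and a refined bridge, respectively.

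First I would apply the elementary pointwise inequality $1-[z]_0^1\leq 1-z+(z-1)_+$ with $z=y\pmb{\hbar}(x)^{\top}\B{\mu}+1/2$; since $|y|=1$ the excess is dominated by $(|\pmb{\hbar}(x)^{\top}\B{\mu}|-1/2)_+$, so taking expectations under $\up{p}^*$ gives $R(\up{h}_\B{\mu})\leq 1/2-\mathbb{E}_{\up{p}^*}\{y\pmb{\hbar}(x)^{\top}\B{\mu}\}+\mathbb{E}_{\up{p}_x^*}(|\pmb{\hbar}(x)^{\top}\B{\mu}|-1/2)_+$. Because $\B{\mu}_{\text{o}}$ is feasible for \eqref{dual} its violation term vanishes and $R(\up{h}_{\B{\mu}_{\text{o}}})=1/2-\mathbb{E}_{\up{p}^*}\{y\pmb{\hbar}(x)^{\top}\B{\mu}_{\text{o}}\}$; subtracting yields $R(\up{h}_\B{\mu})-R(\up{h}_{\B{\mu}_{\text{o}}})\leq\mathbb{E}_{\up{p}^*}\{y\pmb{\hbar}(x)^{\top}(\B{\mu}_{\text{o}}-\B{\mu})\}+\mathbb{E}_{\up{p}_x^*}(|\pmb{\hbar}(x)^{\top}\B{\mu}|-1/2)_+$. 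Feasibility of $\B{\mu}_{\text{o}}$ also implies $\overline{R}\leq F(\B{\mu}_{\text{o}})$, so expanding \eqref{epsilon_opt} and using $\|\B{\mu}_{\text{o}}\|_1-\|\B{\mu}\|_1\leq\|\B{\mu}_{\text{o}}-\B{\mu}\|_1$ yields $(1/n)\sum_i y_i\pmb{\hbar}(x_i)^{\top}(\B{\mu}_{\text{o}}-\B{\mu})+\mathbb{E}_{\up{p}_x^*}(|\pmb{\hbar}(x)^{\top}\B{\mu}|-1/2)_+\leq\varepsilon_{\text{opt}}+\lambda\|\B{\mu}_{\text{o}}-\B{\mu}\|_1$.

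The quantitative step is the noise bridge: from \eqref{transition} a short computation gives $\mathbb{E}_{\up{p}^{\text{tr}}}\{y\hbar(x)\}=\mathbb{E}_{\up{p}^*}\{(1-2\rho_y(x))y\hbar(x)\}$ for every $\hbar\in\set{H}$. For \eqref{bound-noise} the crude estimate $|\mathbb{E}_{\up{p}^{\text{tr}}}\{y\hbar(x)\}-\mathbb{E}_{\up{p}^*}\{y\hbar(x)\}|\leq 2P_{\text{noise}}$ (using $|y\hbar(x)|\leq 1$) combined with \eqref{est-error} and $\ell_1$-$\ell_\infty$ duality gives $|\mathbb{E}_{\up{p}^*}\{y\pmb{\hbar}(x)^{\top}\B{v}\}-(1/n)\sum_i y_i\pmb{\hbar}(x_i)^{\top}\B{v}|\leq(2P_{\text{noise}}+\varepsilon_{\text{est}})\|\B{v}\|_1$, which applied to $\B{v}=\B{\mu}_{\text{o}}-\B{\mu}$ and substituted into the previous display delivers \eqref{bound-noise}. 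For \eqref{bound-noise-sym} I would refine the bridge by writing $\rho_y(x)=P_{\text{noise}}+(\rho_y(x)-P_{\text{noise}})$ and applying Cauchy--Schwarz with $(y\hbar(x))^2\leq 1$ to obtain $|\mathbb{E}_{\up{p}^{\text{tr}}}\{y\hbar(x)\}-(1-2P_{\text{noise}})\mathbb{E}_{\up{p}^*}\{y\hbar(x)\}|\leq 2\sqrt{\mathbb{V}\text{ar}_{\up{p}^*}\{\rho_y(x)\}}$; chaining with \eqref{est-error}, substituting into the optimality inequality above, dividing by $1-2P_{\text{noise}}>0$, and observing that the re-entering violation term carries the nonpositive coefficient $1-1/(1-2P_{\text{noise}})\leq 0$ (so it can be dropped) produces \eqref{bound-noise-sym}.

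The main obstacle is this variance-bridge step: the essential insight is that the mean-zero part $\rho_y(x)-P_{\text{noise}}$ contributes only $\sqrt{\mathbb{V}\text{ar}_{\up{p}^*}\{\rho_y(x)\}}$ via Cauchy--Schwarz rather than the crude $P_{\text{noise}}$ bound, and that after dividing by $1-2P_{\text{noise}}$ the constraint-violation term acquires precisely the right sign for cancellation. The remaining work is routine bookkeeping with $\ell_1$-$\ell_\infty$ duality and the triangle inequality on $\|\cdot\|_1$.
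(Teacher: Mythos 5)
Your proposal is correct and follows essentially the same route as the paper's proof: the clipping bound $R(\up{h}_{\B{\mu}})\leq 1/2-\mathbb{E}_{\up{p}^*}\{y\pmb{\hbar}(x)^{\top}\B{\mu}\}+\mathbb{E}_{\up{p}_x^*}(|\pmb{\hbar}(x)^{\top}\B{\mu}|-1/2)_+$ (the paper's Lemma~\ref{helper}), the $\varepsilon_{\text{opt}}$-optimality combined with $\overline{R}\leq F(\B{\mu}_{\text{o}})$, the identity $\mathbb{E}_{\up{p}^{\text{tr}}}\{y\hbar(x)\}=\mathbb{E}_{\up{p}^*}\{y\hbar(x)(1-2\rho_y(x))\}$ with the crude $2P_{\text{noise}}$ bound for \eqref{bound-noise} and the centered Cauchy--Schwarz/variance bound after dividing by $1-2P_{\text{noise}}$ for \eqref{bound-noise-sym}, finished by H\"older and the reverse triangle inequality. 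Your explicit observation that the constraint-violation term re-enters with nonpositive coefficient $1-1/(1-2P_{\text{noise}})$ is exactly what the paper does implicitly when grouping terms in its intermediate display.
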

\vspace{-0.cm}
\begin{proof}
See Appendix~\ref{apd:proof_th_3}.
\end{proof}
\vspace{-0.cm}
The result above shows how RMBoost error is affected by the usage of: training samples with noisy labels ($P_{\text{noise}}$), finite training sizes ($\varepsilon_{\text{est}}$), and suboptimal learning algorithms ($\varepsilon_{\text{opt}}$). The probability $P_{\text{noise}}=\mathbb{E}_{\up{p}^{*}}\{\rho_y(x)\}$ is rather small in common situations where most of the training labels are correct. The error term $\varepsilon_{\text{est}}$ due to the finite number of training samples can be bounded with high-probability using conventional concentration bounds (see e.g., \cite{MehRos:18,SchFre:12}). In particular, if $\set{R}$ and $\set{D}$ are, respectively, the Rademacher complexity and VC dimension of the family of base-rules $\set{H}$, with probability at least $1-\delta$ we have  
 {\setlength{\belowdisplayskip}{4pt}%
 \setlength{\belowdisplayshortskip}{0pt}\begin{align}\label{bound_est}\varepsilon_{\text{est}}\leq 2\set{R}+\sqrt{\frac{\log2/\delta}{2n}}\leq 2\sqrt{\frac{2\set{D}\log(3n/\set{D})}{n}}+\sqrt{\frac{\log2/\delta}{2n}}\end{align}\newline}
so that the sample error $\varepsilon_{\text{est}}$ generally decreases with the training size at a rate $\set{O}(\sqrt{(\log n)/n})$. The error term $\varepsilon_{\text{opt}}$ remains small when appropriate algorithms for large-scale linear optimization are employed. Although problem \eqref{dual} involves a large number of constraints, small expected constraint violations are sufficient to ensure a low $\varepsilon_{\text{opt}}$. In particular, the algorithm presented in the next Section achieves an $\varepsilon_{\text{opt}}$ of order $\set{O}(\sqrt{(\log n)/n})$ by solving a sequence of low-dimensional linear programs.





The bound in \eqref{bound-noise-sym} further describes how RMBoost error is affected by the non-uniformity and asymmetry of the label noise. In particular, in cases with uniform and symmetric label noise we have  $\mathbb{V}\text{ar}_{\up{p}^*}\{\rho_y(x)\}=0$, so that the bound \eqref{bound-noise-sym} shows that RMBoost is robust to uniform and symmetric label noise. Specifically, the error of \mbox{RMBoost} is not affected by the presence of uniform and symmetric label noise for a large enough training size (in that case, $\varepsilon_{\text{opt}}$, $\varepsilon_{\text{est}}$, and $\lambda$ can be taken to be much smaller than $1-2P_{\text{noise}}$). 

Differently from existing results, Theorem~\ref{th3} provides performance bounds that account for finite training sizes and describe the effect of general types of label noise, including the effect due to deviations from uniform and symmetric cases (term $\mathbb{V}\text{ar}_{\up{p}^*}\{\rho_y(x)\}$). The next result provides performance guarantees for RMBoost in terms of the best possible error (Bayes risk).

\begin{theorem}\label{th4}
Let $\up{h}_{\text{Bayes}}$ be the Bayes rule and $\B{\mu}_{\text{B}}$ be a parameter that satisfies\vspace{-0.cm}
\vspace{-0.cm}
 {\setlength{\belowdisplayskip}{4pt}%
 \setlength{\belowdisplayshortskip}{0pt}%
\begin{align}\label{ep-approx}\sup_{x\in\set{X}}\big|\up{h}_{\text{Bayes}}(x)-2\pmb{\hbar}(x)^\top\B{\mu}_{\text{B}}\big|\leq\varepsilon_{\text{approx}}.
\end{align}\newline}
If $\B{\mu}$ is an $\varepsilon_{\text{opt}}$-optimal solution of \eqref{dual} corresponding to $n$ training samples possibly affected by noise, we have {\setlength{\belowdisplayskip}{0pt}%
 \setlength{\belowdisplayshortskip}{0pt}%
\begin{align}\label{Bayes}
R(\up{h}_{\B{\mu}})\leq \, R(\up{h}_{\text{Bayes}})+\varepsilon_{\text{opt}}+\varepsilon_{\text{approx}}+(\varepsilon_{\text{est}}+2P_\text{noise}+\lambda)(\|\B{\mu}-\B{\mu}_{\text{B}}\|_1+\|\B{\mu}_{\text{B}}\|_1).
\end{align}}
\end{theorem}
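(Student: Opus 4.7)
The plan is to mirror the strategy of Theorem~\ref{th3}, using the Bayes-approximating parameter $\B{\mu}_{\text{B}}$ as the reference point in place of the exact noise-free solution $\B{\mu}_{\text{o}}$. I begin by upper-bounding the risk through a hinge-style identity: since $1 - [a]_0^1 \leq (1-a)_+ = (1-a) + (a-1)_+$ and $|y|=1$, the risk of $\up{h}_{\B{\mu}}$ satisfies $R(\up{h}_{\B{\mu}}) \leq 1/2 - \mathbb{E}_{\up{p}^*}\{y\pmb{\hbar}(x)^\top \B{\mu}\} + \mathbb{E}_{\up{p}_x^*}\{(|\pmb{\hbar}(x)^\top \B{\mu}| - 1/2)_+\}$, whose last term is precisely the expected constraint violation appearing in \eqref{epsilon_opt}. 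The Bayes risk admits the analogous identity $R(\up{h}_{\text{Bayes}}) = 1/2 - \mathbb{E}_{\up{p}^*}\{y\,\up{h}_{\text{Bayes}}(x)/2\}$. Subtracting and inserting $\B{\mu}_{\text{B}}$, the approximation condition \eqref{ep-approx} combined with $|y|=1$ contributes $\mathbb{E}_{\up{p}^*}\{y[\up{h}_{\text{Bayes}}(x)/2 - \pmb{\hbar}(x)^\top \B{\mu}_{\text{B}}]\} \leq \varepsilon_{\text{approx}}/2$, reducing the task to controlling the cross term $\mathbb{E}_{\up{p}^*}\{y\pmb{\hbar}(x)^\top (\B{\mu}_{\text{B}} - \B{\mu})\}$ together with the expected constraint violation.

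Next I pass from $\up{p}^*$-expectations to training empirical averages, exactly as in Theorem~\ref{th3}. A direct computation from the noise model \eqref{transition} yields $|\mathbb{E}_{\up{p}^{\text{tr}}}\{y\hbar(x)\} - \mathbb{E}_{\up{p}^*}\{y\hbar(x)\}| \leq 2P_{\text{noise}}$ for every $\hbar\in\set{H}$, so combined with \eqref{est-error} and the $\ell_1$--$\ell_\infty$ H\"older inequality this gives $|\mathbb{E}_{\up{p}^*}\{y\pmb{\hbar}(x)^\top \B{\nu}\} - (1/n)\sum_i y_i \pmb{\hbar}(x_i)^\top \B{\nu}| \leq (\varepsilon_{\text{est}} + 2P_{\text{noise}})\|\B{\nu}\|_1$ for any vector $\B{\nu}$. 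Applying this with $\B{\nu} = \B{\mu}_{\text{B}} - \B{\mu}$, rewriting the empirical inner product via the definition of $F$ as $(1/n)\sum_i y_i \pmb{\hbar}(x_i)^\top (\B{\mu}_{\text{B}} - \B{\mu}) = F(\B{\mu}) - F(\B{\mu}_{\text{B}}) + \lambda(\|\B{\mu}_{\text{B}}\|_1 - \|\B{\mu}\|_1)$, and invoking the reverse triangle inequality on the norm difference, the cross term is bounded by $F(\B{\mu}) - F(\B{\mu}_{\text{B}}) + (\varepsilon_{\text{est}} + 2P_{\text{noise}} + \lambda)\|\B{\mu} - \B{\mu}_{\text{B}}\|_1$.

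The main obstacle is the final step, where $\varepsilon_{\text{opt}}$-optimality must be leveraged even though $\B{\mu}_{\text{B}}$ is not assumed feasible for \eqref{dual}. Collecting the first two steps, the task reduces to bounding $[F(\B{\mu}) + \mathbb{E}_{\up{p}_x^*}\{(|\pmb{\hbar}(x)^\top\B{\mu}|-1/2)_+\}] - F(\B{\mu}_{\text{B}})$, where by \eqref{epsilon_opt} the bracket is at most $\overline{R} + \varepsilon_{\text{opt}}$. Since \eqref{ep-approx} together with $\up{h}_{\text{Bayes}}(x)\in\{-1,+1\}$ forces $|\pmb{\hbar}(x)^\top \B{\mu}_{\text{B}}|\leq 1/2+\varepsilon_{\text{approx}}/2$, the rescaled vector $\tilde{\B{\mu}}=\B{\mu}_{\text{B}}/(1+\varepsilon_{\text{approx}})$ is feasible, so $\overline{R}\leq F(\tilde{\B{\mu}})$; the affine dependence of $F$ on its non-norm part yields $F(\tilde{\B{\mu}}) - F(\B{\mu}_{\text{B}}) = [\varepsilon_{\text{approx}}/(1+\varepsilon_{\text{approx}})](1/2 - F(\B{\mu}_{\text{B}}))$, and H\"older on the empirical average gives $1/2 - F(\B{\mu}_{\text{B}}) = (1/n)\sum_i y_i\pmb{\hbar}(x_i)^\top\B{\mu}_{\text{B}} - \lambda\|\B{\mu}_{\text{B}}\|_1 \leq (1-\lambda)\|\B{\mu}_{\text{B}}\|_1$. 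This controls $\overline{R} - F(\B{\mu}_{\text{B}})$ by an $\varepsilon_{\text{approx}}$-order residual. Collecting all three contributions and absorbing the residual into the $\varepsilon_{\text{approx}} + (\varepsilon_{\text{est}} + 2P_{\text{noise}} + \lambda)\|\B{\mu}_{\text{B}}\|_1$ slack already present in the stated bound then yields the theorem.
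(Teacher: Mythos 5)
Your overall decomposition is sound and runs parallel to the paper's argument (risk bound from the clipped-rule inequality, transfer from $\up{p}^*$ to the empirical average at cost $(\varepsilon_{\text{est}}+2P_{\text{noise}})\|\cdot\|_1$, and feasibility of a rescaled $\B{\mu}_{\text{B}}$ to invoke $\overline{R}\leq F(\cdot)$), but the last step as written has a genuine gap: the "absorption" claim fails. After your three steps you are left with the residual $F(\tilde{\B{\mu}})-F(\B{\mu}_{\text{B}})=\frac{\varepsilon_{\text{approx}}}{1+\varepsilon_{\text{approx}}}\bigl(\tfrac{1}{2}-F(\B{\mu}_{\text{B}})\bigr)$, which you bound via H\"older by $\frac{\varepsilon_{\text{approx}}}{1+\varepsilon_{\text{approx}}}(1-\lambda)\|\B{\mu}_{\text{B}}\|_1$. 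This is a term of order $\varepsilon_{\text{approx}}\|\B{\mu}_{\text{B}}\|_1$, while the slack still available in the stated bound is only $\tfrac{\varepsilon_{\text{approx}}}{2}+(\varepsilon_{\text{est}}+2P_{\text{noise}}+\lambda)\|\B{\mu}_{\text{B}}\|_1$. Whenever $\frac{(1-\lambda)\varepsilon_{\text{approx}}}{1+\varepsilon_{\text{approx}}}>\varepsilon_{\text{est}}+2P_{\text{noise}}+\lambda$ and $\|\B{\mu}_{\text{B}}\|_1$ is large (e.g.\ $\varepsilon_{\text{approx}}=0.5$, $\lambda=\varepsilon_{\text{est}}=10^{-2}$, $P_{\text{noise}}=0$, $\|\B{\mu}_{\text{B}}\|_1=10$), your residual exceeds that slack, so the stated inequality does not follow; note nothing prevents $\|\B{\mu}_{\text{B}}\|_1$ from being huge at fixed $\varepsilon_{\text{approx}}$, since $-\hbar\in\set{H}$ whenever $\hbar\in\set{H}$ allows inflating the $1$-norm without changing $\pmb{\hbar}(x)^\top\B{\mu}_{\text{B}}$.

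The fix is immediate and uses a bound you already derived: from \eqref{ep-approx} and $|\up{h}_{\text{Bayes}}(x)|=1$ you have $|\pmb{\hbar}(x)^\top\B{\mu}_{\text{B}}|\leq\tfrac{1+\varepsilon_{\text{approx}}}{2}$ for all $x$, hence $\tfrac{1}{2}-F(\B{\mu}_{\text{B}})=\tfrac{1}{n}\sum_i y_i\pmb{\hbar}(x_i)^\top\B{\mu}_{\text{B}}-\lambda\|\B{\mu}_{\text{B}}\|_1\leq\tfrac{1+\varepsilon_{\text{approx}}}{2}$, so the residual is at most $\tfrac{\varepsilon_{\text{approx}}}{2}$; combined with the $\tfrac{\varepsilon_{\text{approx}}}{2}$ from the Bayes-comparison term this even yields the bound without the $\|\B{\mu}_{\text{B}}\|_1$ term, which implies \eqref{Bayes}. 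For comparison, the paper avoids the issue by inserting the rescaled vector $\B{\mu}_{\text{B}}/C$ with $C=\max(1,\sup_{x\in\set{X}}|2\pmb{\hbar}(x)^\top\B{\mu}_{\text{B}}|)\leq 1+\varepsilon_{\text{approx}}$ directly into both the Bayes-comparison and the empirical terms, so the rescaling cost appears pointwise as $\tfrac{1}{2}(C-1)\leq\tfrac{\varepsilon_{\text{approx}}}{2}$ (never multiplied by $\|\B{\mu}_{\text{B}}\|_1$), and the $\|\B{\mu}_{\text{B}}\|_1$ contribution in \eqref{Bayes} only arises from $\|\B{\mu}-\B{\mu}_{\text{B}}/C\|_1\leq\|\B{\mu}-\B{\mu}_{\text{B}}\|_1+\|\B{\mu}_{\text{B}}\|_1$; your route, once repaired with the pointwise bound, is an equally valid (indeed slightly tighter) alternative.
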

\vspace{-0.1cm}
\begin{proof}
See Appendix~\ref{apd:proof_th_4}.
\end{proof}
\vspace{-0.1cm}
The result above shows that RMBoost error probability can be near the best possible performance. In particular, the bound in \eqref{Bayes} shows that RMBoost methods are Bayes consistent in cases where combinations of base-rules can accurately approximate the Bayes rule, i.e., $\varepsilon_{\text{approx}}=0$.
Such an assumption is also required to achieve consistency with other boosting methods, as AdaBoost \cite{LugVay:04,BarTra:07}, and is satisfied using common families of base-rules. For instance, any measurable function in $\mathbb{R}^d$ can be accurately approximated using trees with $d+1$ terminal nodes \cite{LugVay:04}.


The results presented in this section show that RMBoost is both robust to general label noise and capable of providing near-optimal performance in common situations. The next section presents efficient algorithms for RMBoost learning.


\vspace{-0.1cm}
\section{Efficient sequential learning for RMBoost}\label{sec-4}
\vspace{-0.1cm}
The learning stage of RMBoost obtains parameters $\B{\mu}$ by (approximately) solving the linear optimization problem \eqref{dual}. As described above, general-purpose techniques for large-scale linear optimization can be borrowed for \mbox{RMBoost} learning, and the following presents an efficient algorithm based on column generation methods (see e.g., \cite{BerTsi:97}). These methods sequentially increase the number of variables considered and are specially effective for large-scale linear optimization since they can maintain a reduced number of variables and exploit warm-starts. 
\vspace{-0.1cm}
\subsection{Learning algorithm}
\vspace{-0.1cm}
\vspace{-0.cm}

Algorithm~\ref{alg} details the pseudocode of the presented algorithm that learns base-rules \mbox{$\hbar_1,\hbar_2,\ldots,\hbar_t\in\set{H}$}, \mbox{RMBoost} parameters $\B{\mu}^*\in\mathbb{R}^t$, and the corresponding minimax risk~$R$. As in other boosting methods, the algorithm greedily selects base-rules in multiple rounds.

At each round $k\in\{1,2,\ldots,K\}$, the algorithm uses a base learner to select a new base-rule that best fits a set of weighted samples obtained from the training samples (Step 3 in the algorithm). Then, the coefficients for the current set of selected base-rules and the weighted samples for the next round are obtained by solving the linear optimization problem \eqref{P_k} (Step 7 in the algorithm). In particular, the primal solution provides the coefficients for them minimax rule, the dual solution provides the next round weights, and the optimal value provides the worst-case error probability (minimax risk).


\begin{minipage}[t]{0.43\linewidth}\vspace{-15pt}
\begin{align}\label{P_k}
\min_{\B{\mu}_+,\B{\mu}_-}\  &\frac{1}{2}-\frac{1}{n}\sum_{i=1}^ny_i\V{u}_i^{\top}(\B{\mu}_+-\B{\mu}_-)+\lambda\V{1}^{\top}(\B{\mu}_++\B{\mu}_-)\; \; \nonumber\\
\text{s.t.}\hspace{0.3cm}&\hspace{-0.1cm}-\frac{1}{2}\leq\V{u}_i^{\top}(\B{\mu}_+-\B{\mu}_-)\leq\frac{1}{2}\nonumber\\
&\B{\mu}_+,\B{\mu}_-\in\mathbb{R}^{t_k}, \B{\mu}_+\succeq\V{0},\B{\mu}_-\succeq\V{0}\nonumber\\[0.02cm]
& \text{for }  i=1,2,\ldots,n
\end{align}
\end{minipage}%
\vline
\begin{minipage}[t]{0.43\linewidth}\vspace{-12.5pt}
   \begin{align}\label{D_k}
\max_{\B{\alpha},\B{\beta}}\  \  &\frac{1}{2}\Big(1-\V{1}^{\top}(\B{\alpha}+\B{\beta})\Big)\nonumber\\[0.19cm]
\text{s.t.}\hspace{0.23cm}&\hspace{-0.07cm}-\lambda\leq\V{v}_j^{\top}\Big(\B{\alpha}-\B{\beta}-\V{y}/n\Big)\leq\lambda \nonumber\\[0.06cm]
&\B{\alpha},\B{\beta}\in\mathbb{R}^n,\B{\alpha}\succeq\V{0},\B{\beta}\succeq\V{0}\nonumber\\
& \text{for }  j=1,2,\ldots,t_k
\end{align}
\end{minipage}\vspace{0.2cm}%
\newline
where vectors $\V{u}_i\in\mathbb{R}^{t_k}$ for \mbox{$i=1,2,\ldots,n$} are given by $\V{u}_i=[\hbar_1^{(k)}(x_i),\hbar_2^{(k)}(x_i),\ldots,\hbar_{t_k}^{(k)}(x_i)]^{\top}$, vectors $\V{v}_j\in\mathbb{R}^{n}$ for \mbox{$j=1,2,\ldots,t_k$} are given by $\V{v}_j=[\hbar_j^{(k)}(x_1),\hbar_j^{(k)}(x_2),\ldots,\hbar_{j}^{(k)}(x_n)]^{\top}$, \mbox{$\set{H}^{(k)}=\{\hbar_1^{(k)},\hbar_2^{(k)},\ldots,\hbar_{t_k}^{(k)}\}$} are the $t_k$ base-rules selected at round $k$, and vector $\V{y}\in\mathbb{R}^{n}$ is given by $\V{y}=[y_1,y_2,\ldots,y_n]^{\top}$.

The new base-rule selected at each round (column generated in the primal) corresponds to a violated dual constraint. Specifically, each base-rule $\hbar\in\set{H}$ corresponds to the dual constraints
 \vspace{-0.cm}
 {\setlength{\belowdisplayskip}{4pt}%
 \setlength{\belowdisplayshortskip}{0pt}\begin{align*}-\lambda\leq[\hbar(x_1),\hbar(x_2),\ldots,\hbar(x_n)](\B{\alpha}-\B{\beta}-\V{y}/n)\leq\lambda.\vspace{-0.cm}\end{align*}\newline}
Hence, the most violated constraint corresponds to the base-rule that achieves\
\begin{align}\label{base}\max_{\hbar\in\set{H}}\sum_{i=1}^n w_i \widetilde{y}_i\hbar(x_i)=-\min_{\hbar\in\set{H}}\sum_{i=1}^n w_i\widetilde{y}_i\hbar(x_i)\vspace{-0.cm}\end{align}
where the weights $\{w_i\}_{i=1}^n$ and labels $\{\widetilde{y}_i\}_{i=1}^n$ are given by
\begin{align}
\label{weights}
w_i=\Big|\frac{y_i}{n}-(\alpha_i-\beta_i)\Big|\mbox{, }\; \; \; \widetilde{y}_i=\text{sign}\Big(\frac{y_i}{n}-(\alpha_i-\beta_i)\Big).
\end{align}
Similarly to other boosting methods, \eqref{base} is addressed by using a base learner that returns 
a base-rule with small training error for samples $(x_i,\widetilde{y}_i)$ and weights $w_i$, for $i=1,2,\ldots,n$. 


\begin{wrapfigure}{r}{0.453\textwidth}\vspace{-\intextsep}\vspace{-0.27cm}
    \begin{minipage}{0.45\textwidth}
      \begin{algorithm}[H]
      \small
\caption{\label{alg} \small{RMBoost learning algorithm} }\vspace{0.1cm}
\setstretch{1.15}
\begin{tabular}{ll}\hspace{-0.1cm}\textbf{Input:}&\hspace{-0.45cm}
Training samples $\{(x_i,y_i)\}_{i=1}^n$,\\
&\hspace{-0.45cm} parameters $\lambda$, $K$\\
\hspace{-0.13cm}\textbf{Output:}&\hspace{-0.3cm}$\B{\mu}^*\in\mathbb{R}^t$, $\hbar_1,\hbar_2,\ldots,\hbar_t$, $R$\end{tabular}
 \begin{algorithmic}[1]
         \STATE $\set{H}^{(0)}\gets\emptyset$,  $R^{(0)}\gets1/2$, $\V{w}\gets\V{1}/n$, $\widetilde{\V{y}}\gets\V{y}$
\FOR{$k=1,2\ldots,K$} 
\STATE $\hbar\gets \text{BaseLearner}\big(\set{H},\{(x_i,\tilde{y}_i,w_i)\}_{i=1}^n\big)$
\STATE $\set{H}^{(k)}\gets\set{H}^{(k-1)}$
\STATE \textbf{If} $\underset{i=1}{\overset{n}{\sum}}w_i\tilde{y}_i\hbar(x_i)\leq\lambda$ BREAK \textbf{for}
\STATE Add to $\set{H}^{(k)}$ the base-rule $\hbar_{t_k}^{(k)}\gets\hbar$\\ and assign it zero coefficient 
\STATE Solve \eqref{P_k} (warm-start $\B{\mu}_+,\B{\mu}_-$)
\STATE \  \  \  \  $\B{\mu}_+,\B{\mu}_-\gets$ solution primal
\STATE \  \  \  \   $\B{\alpha},\B{\beta}\gets$ solution dual
\STATE \  \  \  \  $R^{(k)}\gets$ optimal value
\STATE $\B{\mu}^{(k)}\gets\B{\mu}_+-\B{\mu}_-$
\STATE $\V{w}\gets|\V{y}/n-(\B{\alpha}-\B{\beta})|$
\STATE $\widetilde{\V{y}}\gets\text{sign}(\V{y}/n-(\B{\alpha}-\B{\beta}))$
\FOR{$j=1,2,\ldots,|\set{H}^{(k)}|$}
\IF{$\underset{i=1}{\overset{n}{\sum}}w_i\tilde{y}_i\hbar_j^{(k)}(x_i)<\lambda$}
\STATE remove $\hbar_j^{(k)}$ from $\set{H}^{(k)}$
\ENDIF
\ENDFOR%
\ENDFOR
\STATE  $R\gets R^{(k)}$, $\B{\mu}^*\gets\B{\mu}^{(k)}$, $\{\hbar_i\}\gets\{\hbar_i^{(k)}\}$ 
\vspace{0.1cm}
        \end{algorithmic}
      \end{algorithm}
    \end{minipage}
    \vspace{-0.cm}
  \end{wrapfigure} 
\textbf{Computational cost:} Algorithm~\ref{alg} has running time and memory requirements that can be directly compared with existing boosting methods based on column generation. The complexity of Algorithm~\ref{alg} is very similar to that of LPBoost \cite{DemBenSha:02} that also solves a linear optimization problem. Specifically, Algorithm~\ref{alg} solves in each round a linear program with $2t_k$ variables and $2(t_k+n)$ constraints for $t_k$ the number of base-rules in round $k$, while LPBoost solves in each round a linear program with $n+t_k$ variables and $2n+t_k$ constraints \cite{DemBenSha:02}. In addition, the complexity of Algorithm~\ref{alg} is lower than other methods based on column generation \cite{SheLi:10a,RoyMarLav:16,SheLiHen:13} that address more complicated optimization problems at each round. The complexity per round in Algorithm~\ref{alg} is higher than methods such as \mbox{AdaBoost} or LogitBoost that do not require to solve an optimization problem in each round. However, the algorithm presented can solve such optimization problems very efficiently by leveraging the properties of column generation methods for linear problems. In particular, the previous solution can provide a valid warm-start (basic feasible solution), and previously selected base-rules can be safely removed if they correspond with strictly satisfied dual constraints \cite{BerTsi:97}. The experiments in \mbox{Appendix~\ref{additional}} further show that the running times of the presented method are comparable to those of existing techniques.
\vspace{-0.1cm}
\subsection{Theoretical analysis}
\vspace{-0.1cm}
The next result provides performance guarantees for the sequence of classification rules determined by Algorithm~\ref{alg}.

\begin{theorem}\label{th5}
Let $\B{\mu}^{(k)}$ and $R^{(k)}$ be the parameter and minimax risk determined by Algorithm~\ref{alg} at round $k$. With probability at least $1-\delta$, the error probability of the \mbox{RMBoost} rule at the $k$-th round satisfies \vspace{-0.cm}
\begin{align}
\hspace{-0.cm}R(\up{h}_{\B{\mu}^{(k)}})\leq R^{(k)}+\varepsilon(\delta)+(\varepsilon_{\text{est}}+2P_{\text{noise}}-\lambda)\|\B{\mu}^{(k)}\|_1\label{bound-k}\end{align}
where $\varepsilon(\delta)=0$ if $\|\B{\mu}^{(k)}\|_1\leq1/2$, and for $\|\B{\mu}^{(k)}\|_1>1/2$, $\varepsilon(\delta)$ is given by
\begin{align}
\label{ep-delta}
\varepsilon(\delta)=2\|\B{\mu}^{(k)}\|_1\sqrt{\frac{2\set{D}\log(3n/\set{D})}{n}}+\big(\|\B{\mu}^{(k)}\|_1-\frac{1}{2}\big)\sqrt{\frac{\log(1/\delta)}{2n}}
\end{align}
for $\set{D}$ the VC-dimension of the base-rules $\set{H}$. In addition, if Algorithm~\ref{alg} stops at round $k$ in Step 5 and the base learner accurately solves \eqref{base}, then $\B{\mu}^{(k)}$ is an $\varepsilon(\delta)$-optimal solution of optimization \eqref{dual}
\vspace{-0.1cm}
\begin{proof}
\vspace{-0.1cm}
See Appendix~\ref{apd:proof_th_5}.
\end{proof}
\end{theorem}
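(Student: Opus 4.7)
The plan is to (i) bound $R(\up{h}_{\B{\mu}^{(k)}})$ by $F(\B{\mu}^{(k)})$ plus a constraint-violation term, (ii) control the latter term using a uniform Rademacher bound that exploits its zero empirical counterpart, and (iii) derive the $\varepsilon(\delta)$-optimality claim by interpreting Algorithm~\ref{alg} as column generation for \eqref{dual} with constraints enforced only at the training points.

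For step (i), I would start from $R(\up{h}_{\B{\mu}}) = \mathbb{E}_{\up{p}^*}\{1 - [y\pmb{\hbar}(x)^\top\B{\mu}+1/2]_0^1\}$ given by \eqref{h_mu}. The elementary inequality $[z]_0^1 \geq z - (z-1)_+$, combined with $y\pmb{\hbar}(x)^\top\B{\mu}\leq|\pmb{\hbar}(x)^\top\B{\mu}|$ from $y\in\{-1,+1\}$, gives
\begin{align*}
R(\up{h}_{\B{\mu}}) \leq \tfrac{1}{2} - \mathbb{E}_{\up{p}^*}\{y\pmb{\hbar}(x)^\top\B{\mu}\} + \mathbb{E}_{\up{p}^*_x}\big\{(|\pmb{\hbar}(x)^\top\B{\mu}|-\tfrac{1}{2})_+\big\}.
\end{align*}
A direct calculation from the noise model \eqref{transition} yields $\|\mathbb{E}_{\up{p}^*}\{y\pmb{\hbar}(x)\}-\mathbb{E}_{\up{p}^{\text{tr}}}\{y\pmb{\hbar}(x)\}\|_\infty \leq 2P_{\text{noise}}$, while \eqref{est-error} gives $\|\mathbb{E}_{\up{p}^{\text{tr}}}\{y\pmb{\hbar}(x)\}-(1/n)\sum_i y_i\pmb{\hbar}(x_i)\|_\infty \leq \varepsilon_{\text{est}}$. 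Hölder's inequality then produces a $\|\B{\mu}^{(k)}\|_1$ factor, and the definition of $F$ in \eqref{dual} together with the identity $F(\B{\mu}^{(k)}) = R^{(k)}$ (by construction, $R^{(k)}$ is the optimum of the linear program \eqref{P_k}) gives
\begin{align*}
R(\up{h}_{\B{\mu}^{(k)}}) \leq R^{(k)} + (\varepsilon_{\text{est}} + 2P_{\text{noise}} - \lambda)\|\B{\mu}^{(k)}\|_1 + \mathbb{E}_{\up{p}^*_x}\big\{(|\pmb{\hbar}(x)^\top\B{\mu}^{(k)}|-\tfrac{1}{2})_+\big\}.
\end{align*}

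The main obstacle is step (ii). The key observation is that the empirical counterpart of the remaining term is exactly zero, because the training constraints in \eqref{P_k} enforce $|\pmb{\hbar}(x_i)^\top\B{\mu}^{(k)}|\leq 1/2$ for every $i$. If $\|\B{\mu}^{(k)}\|_1 \leq 1/2$ the bound $|\pmb{\hbar}(x)^\top\B{\mu}^{(k)}|\leq \|\pmb{\hbar}(x)\|_\infty\|\B{\mu}^{(k)}\|_1 \leq 1/2$ holds deterministically for every $x$, so the term vanishes and $\varepsilon(\delta)=0$. Otherwise I would apply a uniform Rademacher generalization bound over the class $\set{G}_B = \{x\mapsto(|\pmb{\hbar}(x)^\top\B{\mu}|-1/2)_+ : \|\B{\mu}\|_1 \leq B\}$, combining Talagrand's contraction lemma for the $1$-Lipschitz map $z\mapsto(|z|-1/2)_+$, the standard identity that the Rademacher complexity of the $L_1$-bounded linear span $\{\pmb{\hbar}^\top\B{\mu} : \|\B{\mu}\|_1 \leq B\}$ equals $B\,\set{R}$ (using $-\set{H}=\set{H}$), and Massart's VC bound $\set{R} \leq \sqrt{2\set{D}\log(3n/\set{D})/n}$. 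Since functions in $\set{G}_B$ take values in $[0,B-1/2]$, McDiarmid contributes the $(B-1/2)\sqrt{\log(1/\delta)/(2n)}$ term. A dyadic peeling over $B$ makes the bound uniform in the data-dependent value of $\|\B{\mu}^{(k)}\|_1$; setting $B=\|\B{\mu}^{(k)}\|_1$ reproduces $\varepsilon(\delta)$ as in \eqref{ep-delta} and completes \eqref{bound-k}.

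For step (iii), observe that when Algorithm~\ref{alg} stops at Step~5 and the base learner accurately solves \eqref{base}, every $\hbar\in\set{H}$ satisfies $|\sum_i w_i\tilde y_i\hbar(x_i)| \leq \lambda$ (again using $-\set{H}=\set{H}$), which by \eqref{weights} is exactly the dual-feasibility condition of \eqref{D_k} extended to the entire base-rule set. Hence $(\B{\mu}^{(k)},\B{\alpha},\B{\beta})$ is a primal--dual optimal pair for the linear program obtained from \eqref{dual} by retaining the constraints $|\pmb{\hbar}(x)^\top\B{\mu}|\leq 1/2$ only at the training points. Since the true problem \eqref{dual} enforces these constraints at every $x\in\set{X}$ (a superset), this relaxed optimum obeys $F(\B{\mu}^{(k)})\leq\overline{R}$. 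Combining with the bound from step (ii) gives $(F(\B{\mu}^{(k)})-\overline{R})+\mathbb{E}_{\up{p}^*_x}\{(|\pmb{\hbar}(x)^\top\B{\mu}^{(k)}|-1/2)_+\}\leq 0 + \varepsilon(\delta)$, which is precisely the $\varepsilon(\delta)$-optimality condition \eqref{epsilon_opt}.
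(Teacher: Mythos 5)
Your proposal is correct and follows essentially the same route as the paper's proof: the clip-function risk bound combined with the estimation/noise decomposition (which the paper packages as Lemmas~\ref{helper} and~\ref{th2}), the Rademacher--Talagrand--VC argument exploiting that the empirical constraint violation is zero at the training points (and deterministically zero when $\|\B{\mu}^{(k)}\|_1\leq 1/2$), and the LP-relaxation/weak-duality observation that $R^{(k)}\leq\overline{R}$ when the algorithm stops in Step~5 with an accurate base learner. The only divergence is your dyadic peeling to handle the data-dependence of $\|\B{\mu}^{(k)}\|_1$, which would in fact slightly perturb the constants in $\varepsilon(\delta)$ rather than reproduce it exactly; the paper sidesteps this by treating that norm as fixed, so on this point you are, if anything, more careful than the original argument.
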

\vspace{-0.cm}
The result above shows that the error of RMBoost rules learned by Algorithm~\ref{alg} is bounded by the minimax risk obtained in each round ($R^{(k)}$) together with terms that account for optimization and estimation errors ($\varepsilon(\delta)$ and $\varepsilon_{\text{est}}$) as well as the effect of noisy labels ($P_{\text{noise}}$). Due to the \mbox{bound in \eqref{bound_est},} the two terms due to optimization and estimation errors decrease with the number of samples as $\set{O}(\sqrt{(\log n)/n})$ and increase with the VC-dimension of the set of base rules $\set{H}$. Notice that the VC-dimension of decision trees can be bounded as $\set{D}\leq (2t+1)\log_2(d+2)$ for $t$ the number of decision nodes, and $d$ the instances' dimensionality (see e.g., \cite{CorKuz:15}), leading to bounds of order $\set{O}(\sqrt{(t\log d\log n)/n})$. 

For other boosting methods like AdaBoost, the performance bounds that rely on VC-dimension arguments, exhibit a similar dependence on the number of samples and the VC-dimension, but increase with the number of boosting rounds (see Section 4.1 in \cite{SchFre:12}). Interestingly, the bound in \eqref{bound-k} for RMBoost grows with $\|\B{\mu}^{(k)}\|_1$ which can be significantly smaller than the number of rounds $k$ due to the L1-regularization imposed by $\lambda>0$.

Theorem~\ref{th5} also describes the performance guarantees of RMBoost rules determined by Algorithm~\ref{alg} in terms of the results presented in Section~\ref{sec-guarantees}. In particular, all the results in that section can be directly applied by plugging in $\varepsilon(\delta)$ as $\varepsilon_{\text{opt}}$ in cases where Algorithm~\ref{alg} stops in Step 5. For general cases, Appendix~\ref{appendix-th} shows that the suboptimality of Algorithm~\ref{alg} is increased by a term that accounts for a possible early termination and for the suboptimality of the base learner in practice. 

The performance guarantees presented above reliably represent RMBoost error in practice. In particular, the experimental results below show that the minimax risk $R$ obtained by Algorithm~\ref{alg} can serve to assess RMBoost prediction error.
\section{Numerical results}\label{sec-5}
 The experiments compare the classification performance obtained by RMBoost with that of 8 boosting methods: the 4 state-of-the-art techniques AdaBoost \cite{FreSch:97}, LogitBoost \cite{Fri:01}, \mbox{GentleBoost} \cite{FriHasTib:00}, and LPBoost \cite{DemBenSha:02} together with the 4 robust methods \mbox{RobustBoost} \cite{Fre:09}, BrownBoost \cite{Fre:99}, \mbox{XGBoost} \cite{CheGue:16} with quadratic potential \mbox{(XGB-Quad)}, and Robust-GBDT \cite{LuoQuaXu:23}, which are specifically designed for scenarios with noisy labels. Multiple cases of label noise are evaluated using the conventional symmetric and uniform label noise ($\,\forall x,\   \rho_{+1}(x)=\rho_{-1}(x)= P_{\text{noise}}$), and also using an adversarial type of label noise ($\rho_y(x)=1$ for the $P_{\text{noise}}$-fraction of training samples with the largest margin, $\rho_y(x)=0$ for the other samples). This adversarial noise corresponds to label corruptions designed to maximally hinder learning by altering the most influential samples.
 
Due to the extensive theoretical results presented, this section remains necessarily concise. The code implementing the methods presented and reproducing the experiments can be found at \url{https://github.com/MachineLearningBCAM/RMBoost-NeurIPS-2025}. The supplementary materials provide additional details and results in \mbox{Appendix~\ref{additional}}, including running times assessments and the results of all the boosting methods in all label noise cases.
\begin{table}[]
\small
\captionof{table}{Average classification error in \% $\pm$ standard deviation for RMBoost and state-of-the-art methods.  
 \hspace{2cm}The right sub-table shows cases affected by uniform and symmetric label noise with $P_{\text{noise}}=10\%$.\label{tab:error}} \vspace{0.1cm}
\begin{adjustbox}{width=1\textwidth}
\renewcommand{\arraystretch}{1.2}
\begin{tabular}{p{0.8cm}@{\hspace{0.3cm}}|@{\hspace{0.17cm}}p{0.9cm}p{0.9cm}p{1cm}p{0.9cm}p{0.9cm}@{\hspace{0.3cm}}|@{\hspace{0.17cm}}p{0.9cm}p{0.9cm}p{1cm}p{0.9cm}p{0.9cm}}
\toprule
 {Dataset} &  {AdaB} &  {LogitB}  &  {XGB-Q} &  {RMB} &  {Minmax} &  {AdaB} &  {LogitB}  &  {XGB-Q} &  {RMB} &  {Minmax}\\ 
 \midrule
{Titanic}                                 &  {\textbf{20}$\pm$3.2}      &  {21$\pm$3.7}                                  &  {21$\pm$3.7}                                &  {22$\pm$3.5}                               &  {20$\pm$0.3}                                   &  {\textbf{22}$\pm$4.1}      &  {23$\pm$4.5}                                                                    &  {\textbf{22}$\pm$3.9}      &  {\textbf{22}$\pm$3.6}     & 24$\pm$0.8                                                        \\ 
German                                                       &   \textbf{24}$\pm$4.2                           &   \textbf{24}$\pm$4.5                               & 25$\pm$3.4                                                     & 27$\pm$2.8                                                    & 26$\pm$0.6                                                        & 30$\pm$4.2                                                     & 29$\pm$4.4                                                                                                            &   \textbf{27}$\pm$4.1                           &   \textbf{27}$\pm$5.0                          & 29$\pm$0.8                                                        \\ 
 Blood          & 24$\pm$4.4                                                     & 27$\pm$4.3                                                                                                           & 22$\pm$3.9                                                     &   \textbf{20}$\pm$5.4                          & 24$\pm$0.6                                                        & 27$\pm$3.9                                                     & 28$\pm$4.3                                                                                                              & 23$\pm$3.4                                                     &   \textbf{22}$\pm$3.9                          & 28$\pm$0.9                                                        \\ 
 Credit         &   \textbf{14}$\pm$3.9                           &   \textbf{14}$\pm$4.0                                                                                & 22$\pm$5.6                                                     &   \textbf{14}$\pm$5.6                          & 16$\pm$0.4                                                        & 18$\pm$4.5                                                     & 19$\pm$4.5                                                                                                           & 24$\pm$4.8                                                     &   \textbf{16}$\pm$3.8                          & 21$\pm$0.8                                                        \\ 
 Diabet        & 27$\pm$4.9                                                     &   \textbf{26}$\pm$5.3                                                                               & 34$\pm$4.6                                                     &   \textbf{26}$\pm$4.5                          & 25$\pm$0.8                                                        & 31$\pm$5.2                                                     & 29$\pm$5.1                                                                                                            & 34$\pm$4.5                                                     &   \textbf{27}$\pm$5.1                          & 28$\pm$1.1                                                        \\ 
 Raisin         & 15$\pm$2.7                                                     & 15$\pm$2.6                                                                                                  & 16$\pm$3.8                                                     &   \textbf{12}$\pm$3.6                          & 14$\pm$0.6                                                        & 19$\pm$3.9                                                     & 19$\pm$4.1                                                                                                           & 20$\pm$3.4                                                     &   \textbf{14}$\pm$2.4                          & 19$\pm$1.0                                                        \\
 QSAR           &   \textbf{14}$\pm$3.1                           &   \textbf{14}$\pm$3.1                                                         & 23$\pm$3.5                                                     & 15$\pm$3.1                                                    & 17$\pm$0.5                                                        & 19$\pm$3.5                                                     &   \textbf{18}$\pm$3.5                                                         & 26$\pm$4.7                                                     & 20$\pm$3.3                                                    & 23$\pm$0.8                                                        \\ 
 Climat         & 8.5$\pm$2.0                                                    & 8.5$\pm$2.0                                                            & 8.4$\pm$2.0                                                    &   \textbf{7.5}$\pm$2.0                         & 9.3$\pm$0.4                                                       &   12$\pm$2.8     &   10$\pm$2.9        &   10$\pm$3.2     &   \textbf{9.5}$\pm$2.8                         &   15$\pm$0.8        \\
 \bottomrule
\end{tabular}
\end{adjustbox}
\vspace{-0cm}
\end{table}

Table~\ref{tab:error} shows the classification error achieved  by the most representative methods with 8 common datasets in noiseless cases and with symmetric and uniform label noise. The results in the table show that \mbox{RMBoost} can obtain state-of-the-art performance in noiseless situations and provide improved robustness to label noise. The table also shows that the minimax risk optimized at learning is often near the \mbox{RMBoost} error in practice. Figure~\ref{fig:trade-off} summarizes the trade-off between  classification performance and robustness to noise for the 9 methods in the 8 datasets. Specifically, the vertical axis describes classification performance in terms of the average ranking in the noiseless case, while the horizontal axis describes robustness to noise in terms of the average difference between the error in noisy and noiseless cases. The figure shows that RMBoost is a robust method that can also provide
a strong classification performance near that of AdaBoost method.

\begin{figure}
    \centering
   \hspace{0.1cm} \begin{minipage}[b]{0.47\textwidth}
        \centering
                    \psfrag{XGSquared}[b][][0.7]{\hspace{-0cm}XGB-Quad}
    \psfrag{LPboost}[b][][0.7]{LPBoost}
    \psfrag{Brownboost}[b][][0.7]{\hspace{-2.cm}BrownBoost}
    \psfrag{GBDT}[b][][0.7]{Robust-GBDT}
    \psfrag{Robustboost}[b][][0.7]{RobustBoost}
    \psfrag{GentleBoost}[b][][0.7]{\hspace{1.2cm}GentleBoost}
    \psfrag{LogitBoost}[b][][0.7]{\hspace{1.2cm}LogitBoost}
    \psfrag{RMBoost}[b][][0.7]{\hspace{1.2cm}RMBoost}
    \psfrag{AdaBoost}[b][][0.7]{\hspace{1.2cm}AdaBoost}
    \psfrag{Robustness}[t][][0.7]{Error with noise $-$ error without noise [\%]}
    \psfrag{Optimality}[b][][0.7]{Ranking without noise}
    \psfrag{1}[l][][0.6]{1}
    \psfrag{2}[][][0.6]{2}
    \psfrag{3}[][][0.6]{3}
    \psfrag{5}[][][0.6]{5}
    \psfrag{4}[][][.6]{4}
    \psfrag{6}[][][0.6]{6}
    \psfrag{8}[][][0.6]{8}
             \includegraphics[width=\textwidth]{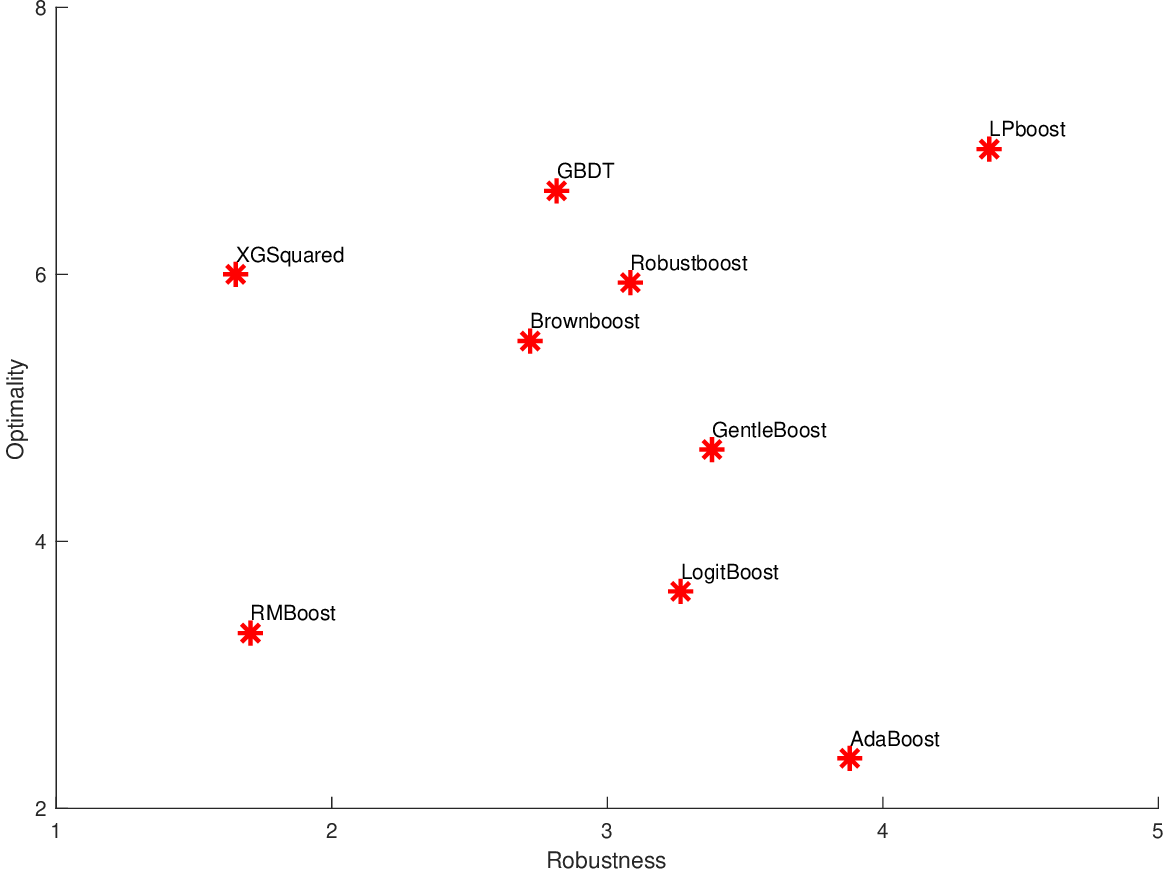}\vspace{0.1cm}
        \caption{Trade-off classification performance vs robustness to noise in the 8 datasets (uniform and symmetric noise with $P_{\text{noise}}=10\%$).}
        \label{fig:trade-off}
        \end{minipage}
    \hfill
    \begin{minipage}[b]{0.47\textwidth}
        \centering
        
            \psfrag{Noise}[t][][0.7]{Noise probability $P_{\text{noise}}$ [\%]}
   \psfrag{Classification error}[b][][0.7]{Classification error [\%]}
      \psfrag{AdaBoost uniform and symmetric noise}[l][l][0.7]{AdaBoost uniform and symmetric noise}
      \psfrag{AdaBoost adversarial noise}[l][l][0.7]{AdaBoost adversarial noise}
      \psfrag{RMBoost uniform and symmetric noise}[l][l][0.7]{RMBoost uniform and symmetric noise}
      \psfrag{RMBoost adversarial noise}[l][l][0.7]{RMBoost adversarial noise}
  \psfrag{0.15}[][][0.6]{15}
    \psfrag{0.2}[][][0.6]{20}
      \psfrag{0.25}[][][0.6]{25}
        \psfrag{0.3}[][][0.6]{30}
          \psfrag{15}[][][0.6]{15}
           \psfrag{0}[][][0.6]{0}
  \psfrag{5}[][][0.6]{5}
  \psfrag{10}[][][0.6]{10}
  \psfrag{15}[][][0.6]{15}
  \psfrag{20}[][][0.6]{20}
        
        \includegraphics[width=\textwidth]{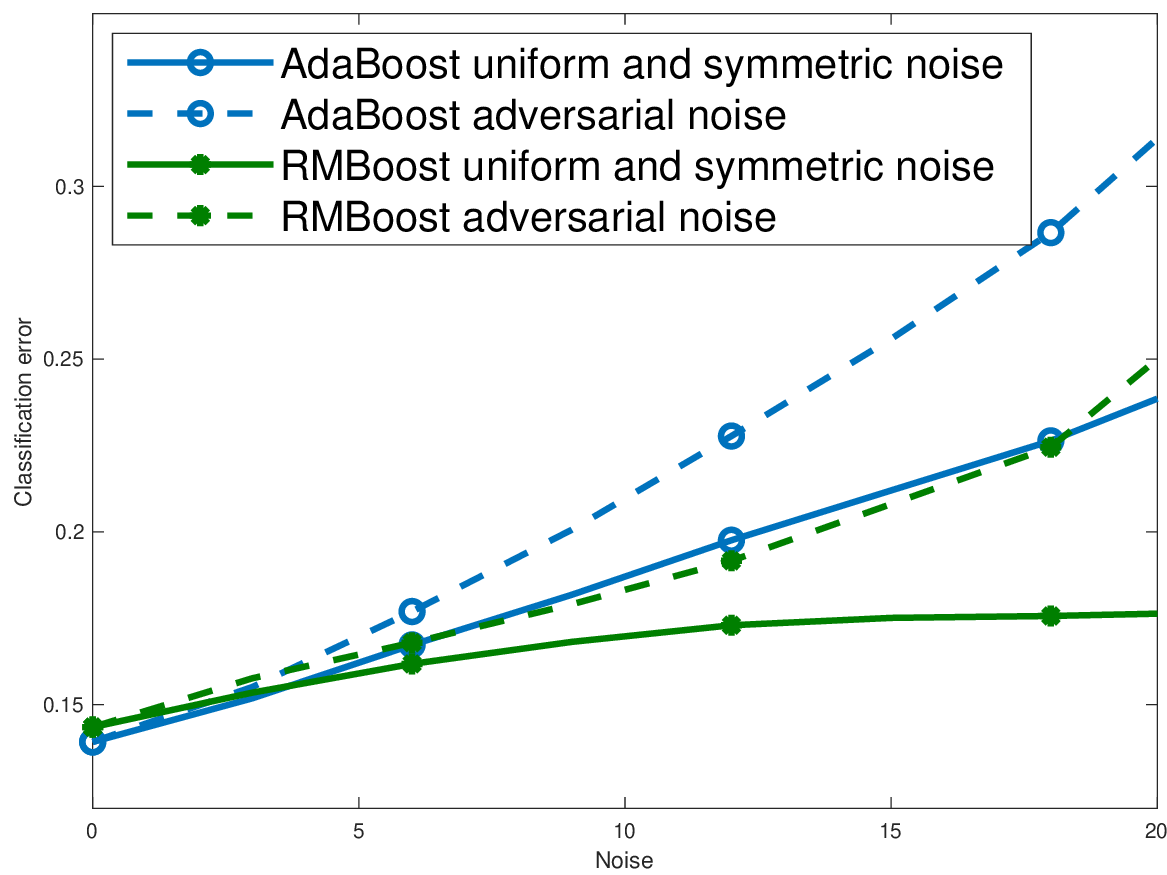}\vspace{0.1cm}
        \caption{Performance degradation of AdaBoost and RMBoost methods for increased levels of noise in `Credit' dataset.}
        \label{fig:noise_credit}
    \end{minipage}
    \vspace{-0.3cm}
\end{figure}

Figure~\ref{fig:noise_credit} further illustrates the classification performance and robustness to noise of RMBoost in comparison with AdaBoost. While Adaboost is able to achieve slightly better error with clean labels, its performance quickly deteriorates for increasing probabilities of noise, especially for adversarial noise. On the other hand, RMBoost provides strong classification accuracy on clean data that only mildly deteriorates with general types of label noise, in line with the performance guarantees presented.
\vspace{-0.2cm}
\section{Conclusion}
\vspace{-0.1cm}
The paper presents methods for robust minimax boosting (RMBoost) that minimize worst-case error probabilities and are robust to label noise. Differently from existing techniques, we provide finite-sample performance guarantees that describe the effect of general types of label noise as well as the Bayes consistency of RMBoost methods. In addition, the paper presents and analyzes an efficient algorithm for RMBoost learning, and experimentally shows the effectiveness of \mbox{RMBoost}  in practice. The results in the paper show that the boosting methodology presented can enable to achieve increased levels of robustness to label noise together with strong classification performance.
\vspace{-0.2cm}
\paragraph{Limitations:} The column generation approach presented in Section~\ref{sec-4} can be directly compared with other methods such as LPBoost. However, as described above, the complexity of approaches based on column generation scales poorly with the number of training samples, compared to other methods such as AdaBoost or LogitBoost (see also experimental running times in Appendix~\ref{appendix-time}). The methodologies proposed can be implemented using alternative optimization approaches for large-scale optimization that may be more convenient computationally. The present paper focuses on the new boosting methodology proposed and the theoretical analysis of its noise robustness. Hence, we leave for future work the development of more efficient learning algorithms.

\vspace{-0.2cm}
\section*{Acknowledgements}
\vspace{-0.2cm}
The authors would like to thank Prof. Yoav Freund for his comments and suggestions during the development of this work.
Funding in direct support of this work has been provided by project PID2022-137063NB-I00 funded by MCIN/AEI/10.13039/501100011033 and the European Union ``NextGenerationEU''/PRTR, BCAM Severo Ochoa accreditation CEX2021-001142-S/MICIN/AEI/10.13039/501100011033 funded by the Ministry of Science and Innovation
(Spain), and program BERC-2022-2025 funded by the Basque Government. In addition, Ver\'onica \'Alvarez holds a postdoctoral grant from the Basque Government. 

\bibliography{bib-santi}
\bibliographystyle{unsrt}

\newpage

\appendix

\onecolumn
\section*{Appendices}

\section{Strong duality lemma}\label{ap-lemma}

Some of the proofs for the results in the paper make use of Fenchel duality for linear optimization problems over probability measures. The next lemma provides the strong duality result needed for such proofs.

\begin{lemma}\label{lemma}
Let $\set{U}$ be an uncertainty set given by \eqref{uncertainty} with $\lambda>0$. For any $\up{h}\in\text{T}(\set{X},\set{Y})$, we have 
\begin{align}
\max_{\up{p}\in\set{U}} \; \mathbb{E}_{\up{p}}\{&\ell_{0\text{-}1}(\up{h},(x,y))\}\nonumber\\
& = \; \min_{\B{\mu}\in\mathbb{R}^T}1-\frac{1}{n}\sum_{i=1}^ny_i\pmb{\hbar}(x_i)^{\top}\B{\mu}+\lambda\|\B{\mu}\|_1+\sup_{x\in\set{X},y\in\set{Y}}\big\{y\pmb{\hbar}(x)^{\top}\B{\mu}-\up{h}(y|x)\big\}.\label{strong_dual}
\end{align}
\end{lemma}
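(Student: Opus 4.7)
The plan is to recognize the LHS as a linear (generalized moment) maximization over the probability simplex $\Delta(\set{X}\times\set{Y})$ with a finite-dimensional linear constraint, derive its Lagrangian dual, and match it to the RHS up to a sign change of the dual variable.

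First, I would rewrite the uncertainty set by splitting the infinity-norm constraint using an auxiliary variable $\V{r}\in\mathbb{R}^T$, so that
\[
\set{U}\;=\;\bigcup_{\|\V{r}\|_{\infty}\leq\lambda}\Big\{\up{p}\in\Delta(\set{X}\times\set{Y})\ :\ \mathbb{E}_{\up{p}}\{y\pmb{\hbar}(x)\}=\tfrac{1}{n}\sum_{i=1}^n y_i\pmb{\hbar}(x_i)+\V{r}\Big\}.
\]
Then the primal becomes a joint maximization over $(\up{p},\V{r})$ of a linear functional of $\up{p}$ subject to a linear equality constraint and $\|\V{r}\|_{\infty}\leq\lambda$. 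Introducing a multiplier $\B{\mu}\in\mathbb{R}^T$ for the equality yields the Lagrangian
\[
L(\up{p},\V{r},\B{\mu})\;=\;\mathbb{E}_{\up{p}}\{1-\up{h}(y|x)\}+\B{\mu}^{\top}\Big(\tfrac{1}{n}\textstyle\sum_{i=1}^ny_i\pmb{\hbar}(x_i)+\V{r}-\mathbb{E}_{\up{p}}\{y\pmb{\hbar}(x)\}\Big).
\]

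Next I would compute the inner suprema that define the dual function. Since $L$ is linear in $\up{p}$ on $\Delta(\set{X}\times\set{Y})$ and $\pmb{\hbar},\up{h}$ are bounded measurable,
\[
\sup_{\up{p}\in\Delta(\set{X}\times\set{Y})}\mathbb{E}_{\up{p}}\{1-\up{h}(y|x)-y\pmb{\hbar}(x)^{\top}\B{\mu}\}\;=\;\sup_{x\in\set{X},\,y\in\set{Y}}\{1-\up{h}(y|x)-y\pmb{\hbar}(x)^{\top}\B{\mu}\},
\]
while $\sup_{\|\V{r}\|_{\infty}\leq\lambda}\B{\mu}^{\top}\V{r}=\lambda\|\B{\mu}\|_1$ by the dual norm identity. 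Combining these two computations produces the dual function
\[
g(\B{\mu})\;=\;1+\tfrac{1}{n}\textstyle\sum_{i=1}^ny_i\pmb{\hbar}(x_i)^{\top}\B{\mu}+\lambda\|\B{\mu}\|_1+\sup_{x,y}\{-\up{h}(y|x)-y\pmb{\hbar}(x)^{\top}\B{\mu}\}.
\]
The change of variable $\B{\mu}\mapsto -\B{\mu}$ leaves $\|\B{\mu}\|_1$ invariant and turns $g(\B{\mu})$ into the RHS of \eqref{strong_dual}, so it suffices to prove that the primal value equals $\min_{\B{\mu}}g(\B{\mu})$.

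The main obstacle, and the only nontrivial point, is justifying strong duality, since the primal lives in the infinite-dimensional space of probability measures on $\set{X}\times\set{Y}$ and the dual is a semi-infinite minimization with a supremum over $(x,y)$ in the objective. For this I would invoke a Slater-type argument: the empirical distribution $\hat{\up{p}}_n=\tfrac{1}{n}\sum_{i=1}^n\delta_{(x_i,y_i)}$ lies in $\set{U}$ with $\|\mathbb{E}_{\hat{\up{p}}_n}\{y\pmb{\hbar}(x)\}-\tfrac{1}{n}\sum_i y_i\pmb{\hbar}(x_i)\|_{\infty}=0<\lambda$, so the constraint is strictly feasible, and the primal objective is bounded (by $1$) because $\ell_{0\text{-}1}\in[0,1]$. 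Under these conditions, the classical strong duality theorem for generalized moment problems (Isii's theorem, or equivalently Rockafellar–Fenchel duality in locally convex spaces, or a direct application of Sion's minimax theorem to the bilinear function $(\up{p},\B{\mu})\mapsto L$ after compactifying $\up{p}$ in the weak-$*$ topology and restricting $\B{\mu}$ to a sufficiently large ball where the minimum is attained) yields no duality gap. The remaining verifications, namely measurability of the supremum and attainment arguments, are routine given the boundedness of $\pmb{\hbar}$ and $\up{h}$. Once strong duality is established, the identification with the RHS via the sign change of $\B{\mu}$ completes the proof.
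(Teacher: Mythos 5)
Your proposal is correct and follows essentially the same route as the paper: both dualize the linear maximization over the uncertainty set (the dual-norm computation yielding $\lambda\|\B{\mu}\|_1$ and the pointwise supremum replacing the expectation over $\Delta(\set{X}\times\set{Y})$), and both close the duality gap through the strict feasibility created by $\lambda>0$ at the empirical distribution — the paper's explicit verification of $\V{0}\in\text{int}(\text{dom}\,g-A\,\text{dom}\,f)$ for Fenchel duality is just the rigorous form of your Slater-type argument, and it also yields attainment of the minimum in $\B{\mu}$. The only caution is that, among the alternatives you list, the Sion/weak-$*$ compactification route is not available as stated (probability measures on a non-compact Borel set $\set{X}$ are not weak-$*$ compact and the base rules are only measurable, not continuous), so the generalized-moment-problem/Fenchel--Rockafellar justification is the one to rely on.
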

\begin{proof}

In the first step of the proof, we show that the right-hand-side in \eqref{strong_dual} is the Fenchel dual of the left-hand-side. Then, the result is obtained by showing that strong duality is satisfied for the uncertainty sets used in the paper.

Let $M(\set{X}\times\set{Y})$ be the set of signed Borel measures over $\set{X}\times\set{Y}$ with bounded total variation, and $A$ be the linear mapping
\begin{align}\label{map_A}
\begin{array}{cccl}A:\hspace{-0.1cm}&M(\set{X}\times\set{Y})&\hspace{-0.1cm}\to&\hspace{-0.1cm}\mathbb{R}^{2T+1}\\
\hspace{-0.1cm}&\up{p}&\hspace{-0.1cm}\mapsto&\hspace{-0.1cm}\Big[\int y\pmb{\hbar}(x)d\up{p}(x,y),-\int y\pmb{\hbar}(x)d\up{p}(x,y),\int d\up{p}(x,y)\Big].\end{array}
\end{align}

$A$ is bounded and its adjoint operator transforms $\B{\mu}_1,\B{\mu}_2,\nu\in\mathbb{R}^{2T+1}$ to measurable functions over $\set{X}\times\set{Y}$, as $A^*(\B{\mu}_1,\B{\mu}_2,\nu)(x,y)=y\pmb{\hbar}(x)^{\top}(\B{\mu}_1-\B{\mu}_2)+\nu$.

Then, we have 
\begin{align}\label{primal}\max_{\up{p}\in\set{U}} \mathbb{E}_{\up{p}}\{\ell_{0\text{-}1}(\up{h},(x,y))\}=\max_{\up{p}\in\set{U}} 1-\int \up{h}(y|x)d\up{p}(x,y)=1-\min_{\up{p}\in M(\set{X}\times\set{Y})} f(\up{p})+g(A(\up{p}))\end{align}
where $f$ and $g$ are the lower semi-continuous convex functions
\begin{align}
\begin{array}{cccl} g:&\mathbb{R}^{2T+1}&\to&\mathbb{R}\cup\{\infty\}\\
&(\V{a}_1,\V{a}_2,b)&\mapsto&\left\{\begin{array}{ccc}0&\mbox{if}&\V{a}_1\preceq\B{\tau}+\lambda\V{1},\  \V{a}_2\preceq-\B{\tau}+\lambda\V{1},b=1\\\infty&\mbox{otherwise}&\end{array}\right.
\end{array}
\end{align}
for  $\B{\tau}=\frac{1}{n}\sum_{i=1}^ny_i\pmb{\hbar}(x_i)$, and 
\begin{align}
\begin{array}{cccl} f:&M(\set{X}\times\set{Y})&\to&\mathbb{R}\cup\{\infty\}\\
&\up{p}&\mapsto&\left\{\begin{array}{ccc}\int \up{h}(y|x)d\up{p}(x,y)&\mbox{if}&\up{p}\mbox{ is nonnegative}\\\infty&\mbox{otherwise.}&\end{array}\right.
\end{array}
\end{align}

Then, the Fenchel dual (see e.g., \cite{BorZhu:04}) of \eqref{primal} is 
\begin{align}\label{dual1}1-\sup_{\B{\mu}_1,\B{\mu}_2,\nu}-f^*(A^*(\B{\mu}_1,\B{\mu}_2,\nu))-g^*(-\B{\mu}_1,-\B{\mu}_2,-\nu)\end{align}
where $f^*$ and $g^*$ are the conjugate functions of $f$ and $g$. If $w$ is a measurable function over $\set{X}\times\set{Y}$, we have  
\begin{align*}f^*(w)&=\sup_{\up{p}\succeq 0}\int (w(x,y)-\up{h}(y|x))d\up{p}(x,y)\\
&=\left\{\begin{array}{ccc}0&\mbox{if}&w(x,y)\leq\up{h}(y|x),\  \forall x,y\in\set{X}\times\set{Y}\\\infty&\mbox{otherwise}&\end{array}\right.
\end{align*}
and $g^*(-\B{\mu}_1,-\B{\mu}_2,-\nu)$ is given by
\begin{align*}
&\begin{array}{cc}\sup&-\V{a}_1^{\top}\B{\mu}_1-\V{a}_2^{\top}\B{\mu}_2-\nu\\
\mbox{s.t.}&\V{a}_1\preceq\B{\tau}+\lambda\V{1},\  \V{a}_2\preceq-\B{\tau}+\lambda\V{1}\end{array}\\
&\hspace{1cm}=\left\{\begin{array}{ccc}-(\B{\tau}+\lambda\V{1})^{\top}\B{\mu}_1+(\B{\tau}-\lambda\V{1})^{\top}\B{\mu}_2-\nu&\mbox{if}&-\B{\mu}_1\succeq\V{0},-\B{\mu}_2\succeq\V{0}\\\infty&\mbox{otherwise.}&\end{array}\right.
\end{align*}
Hence, the dual problem \eqref{dual1} becomes
\begin{align}&\begin{array}{cl}\underset{\B{\mu}_1,\B{\mu}_2,\nu}{\inf} &1-(\B{\tau}+\lambda\V{1})^{\top}\B{\mu}_1- (-\B{\tau}+\lambda\V{1})^{\top}\B{\mu}_2-\nu\\
\mbox{s.t.}&y\pmb{\hbar}(x)^{\top}(\B{\mu}_1-\B{\mu}_2)+\nu\leq \up{h}(y|x),\  \forall (x,y)\in\set{X}\times\set{Y}\\
&-\B{\mu}_1\succeq\V{0},-\B{\mu}_2\succeq\V{0}\end{array}\nonumber\\\label{step0}
&\begin{array}{cl}=\underset{\B{\mu}_+,\B{\mu}_-,\nu}{\inf}& 1+(\B{\tau}+\lambda\V{1})^{\top}\B{\mu}_-+(-\B{\tau}+\lambda\V{1})^{\top}\B{\mu}_+-\nu\\
\mbox{\hspace{0.5cm}s.t.}&y\pmb{\hbar}(x)^{\top}(\B{\mu}_+-\B{\mu}_-)+\nu\leq \up{h}(y|x),\  \forall (x,y)\in\set{X}\times\set{Y}\\
&\B{\mu}_+\succeq\V{0},\B{\mu}_-\succeq\V{0}\end{array}\\\label{step1}
&\begin{array}{cl}=\underset{\B{\mu},\nu}{\inf}& 1-\B{\tau}^{\top}\B{\mu}+\lambda\|\B{\mu}\|_1-\nu\\
\mbox{\hspace{0.4cm}s.t.}&y\pmb{\hbar}(x)^{\top}\B{\mu}+\nu\leq \up{h}(y|x),\  \forall (x,y)\in\set{X}\times\set{Y}\end{array}\\\label{step2}
&\begin{array}{cl}=\underset{\B{\mu}}{\inf}& 1-\B{\tau}^{\top}\B{\mu}+\underset{(x,y)\in\set{X}\times\set{Y}}{\sup}\big\{y\pmb{\hbar}(x)^{\top}\B{\mu}-\up{h}(y|x)\big\}+\lambda\|\B{\mu}\|_1\end{array}
\end{align}
where we have taken $\B{\mu}_+=-\B{\mu}_2$, $\B{\mu}_-=-\B{\mu}_1$, and $\B{\mu}=\B{\mu}_+-\B{\mu}_-$.  The equality in \eqref{step1} is obtained from \eqref{step0} because 
in \eqref{step0} we can consider only pairs $\B{\mu}_+,\B{\mu}_-$ such that $\B{\mu}_++\B{\mu}_-=|\B{\mu}_+-\B{\mu}_-|$ because for any pair $\B{\mu}_+,\B{\mu}_-$ feasible in \eqref{step0}, we have  $\tilde{\B{\mu}}_+=(\B{\mu}_+-\B{\mu}_-)_+$, $\tilde{\B{\mu}}_-=(\B{\mu}_--\B{\mu}_+)_+$ is a feasible pair since $\tilde{\B{\mu}}_+-\tilde{\B{\mu}}_-=\B{\mu}_+-\B{\mu}_-$, and we also have that $\lambda\|\tilde{\B{\mu}}_+-\tilde{\B{\mu}}_-\|_1=\lambda\V{1}^{\top}(\tilde{\B{\mu}}_++\tilde{\B{\mu}}_-)\leq \lambda\V{1}^{\top}(\B{\mu}_++\B{\mu}_-)$. The expression in \eqref{step2} is obtained since for any feasible ($\B{\mu},\nu$) in \eqref{step1} we have 
\mbox{($\B{\mu}$, $\tilde{\nu}$)} is feasible if
$$\tilde{\nu}=\underset{(x,y)\in\set{X}\times\set{Y}}{\inf}\big\{\up{h}(y|x)-y\pmb{\hbar}(x)^{\top}\B{\mu}\big\}$$
and $\tilde{\nu}\geq\nu$. 

Then, we get that \eqref{step2} is at least $\max_{\up{p}\in\set{U}}\mathbb{E}_{\up{p}}\{\ell_{0\text{-}1}(\up{h},(x,y))\}$ by using weak duality, next we show that strong duality (and hence equality) is achieved. Specifically, in the following we show that strong duality holds because $\V{0}\in\text{int}(\text{dom}\, g-A\text{dom}\, f)$ (see e.g., Chapter 4 in \cite{BorZhu:04}),  where $\text{dom}$ denotes the set where an extended-valued function takes finite values, and $\text{int}$ denotes the interior of a set. 



We show that if $0<\varepsilon<\lambda/(\lambda+1+\sqrt{T})<1$, the ball $B(\V{0},\varepsilon)$ with radius $\varepsilon$ centered in $\V{0}\in\mathbb{R}^{2T+1}$  satisfies $B(\V{0},\varepsilon)\subset(\text{dom}\, g-A\text{dom}\, f)\subset\mathbb{R}^{2T+1}$. For any $\V{z}\in B(\V{0},\varepsilon)\subset\mathbb{R}^{2T+1}$, there exist $\B{\xi}_1,\B{\xi}_2\in B(\V{0},\lambda)\subset\mathbb{R}^T$ such that 
\begin{align*}\big(z^{(1)},z^{(2)},\ldots,z^{(T)}\big)&=\mathbb{E}_{\up{p}_n}\{y\pmb{\hbar}(x)\}+\B{\xi}_1-(1-z^{(2T+1)})\mathbb{E}_{\up{p}_n}\{y\pmb{\hbar}(x)\}\\
\big(z^{(T+1)},z^{(T+2)},\ldots,z^{(2T)}\big)&=-\mathbb{E}_{\up{p}_n}\{y\pmb{\hbar}(x)\}+\B{\xi}_2+(1-z^{(2T+1)})\mathbb{E}_{\up{p}_n}\{y\pmb{\hbar}(x)\}\\
z^{(2T+1)}&=1-(1-z^{(2T+1)})\end{align*}
for $\up{p}_n\in\set{U}$ the empirical distribution of the $n$ training samples. Such equalities are obtained because we have 
\begin{align*}
\|\big(z^{(1)},z^{(2)},\ldots,z^{(T)}\big)-z^{(2T+1)}\mathbb{E}_{\up{p}_n}\{y\pmb{\hbar}(x)\}\|_2&\; \leq \; \; \varepsilon(1+\|\mathbb{E}_{\up{p}_n}\{y\pmb{\hbar}(x)\}\|_2)<\lambda\\
\|\big(z^{(T+1)},z^{(T+2)},\ldots,z^{(2T)}\big)+z^{(2T+1)}\mathbb{E}_{\up{p}_n}\{y\pmb{\hbar}(x)\}\|_2& \; \leq \; \; \varepsilon(1+\|\mathbb{E}_{\up{p}_n}\{y\pmb{\hbar}(x)\}\|_2)<\lambda
\end{align*}
since $|\hbar(x)|\leq 1$ for any $\hbar\in\set{H}$.

Then, the result is obtained observing that $$\big(\mathbb{E}_{\up{p}_n}\{y\pmb{\hbar}(x)\}+\B{\xi}_1,-\mathbb{E}_{\up{p}_n}\{y\pmb{\hbar}(x)\}+\B{\xi}_2,1\big)\in\text{dom}\, g$$ because $\mathbb{E}_{\up{p}_n}y\pmb{\hbar}(x)=\B{\tau}$. In addition, we have $$\big((1-z^{(2T+1)})\mathbb{E}_{\up{p}_n}\{y\pmb{\hbar}(x)\},-(1-z^{(2T+1)})\mathbb{E}_{\up{p}_n}\{y\pmb{\hbar}(x)\},(1-z^{(2T+1)})\big)\in A\; \text{dom}\, f$$ because $|z^{(2T+1)}|\leq \varepsilon< 1$ and hence $(1-z^{(2T+1)})\, \up{p}_n$ is a nonnegative measure.

Finally, since strong duality holds and $\set{U}$ is not empty ($\up{p}_n\in\set{U}$) we get that the optimal value in \eqref{strong_dual} is finite and hence the optimal in the dual is attained \cite{BorZhu:04} and the `$\inf$' in \eqref{step2} becomes `$\min$'.

\end{proof}

\section{Auxiliary Lemmas}\label{apd-aux}

The next lemmas provide properties that are used multiple times in the proofs below.

\begin{lemma}\label{helper}
\mbox{ }
 \begin{itemize}[label={}, leftmargin=3pt, labelsep=0pt]
\item[-]For any classification rule $\up{h}_{\B{\mu}}$ given by \eqref{h_mu}, we have
\begin{align}\label{bound-error}
R(\up{h}_{\B{\mu}})\leq \frac{1}{2}-\mathbb{E}_{\up{p}^*}y\pmb{\hbar}(x)^\top\B{\mu}+\mathbb{E}_{\up{p}^*_x}\Big(|\pmb{\hbar}(x)^\top\B{\mu}|-\frac{1}{2}\Big)_+.
\end{align}
\item[-] If $\up{p}^{\text{tr}}$ is the distribution of training samples with label noise probabilities $\rho_{y}(x)$, for any function $f:\set{X}\to\mathbb{R}$ we have
\begin{align}\label{exp-train}
\mathbb{E}_{\up{p}^{\text{tr}}}\{yf(x)\}=\mathbb{E}_{\up{p}^*}\{y f(x)(1-2\rho_y(x))\}.
\end{align}
\end{itemize}
\end{lemma}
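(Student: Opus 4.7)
\emph{Plan for the first identity.} The plan is to control the clip $[\,\cdot\,]_0^1$ by a linear-minus-positive-part expression. Concretely, for every $z\in\mathbb{R}$ one has the pointwise inequality $[z]_0^1\geq z-(z-1)_+$, which is verified by inspecting the three regimes $z\leq 0$, $0\leq z\leq 1$, and $z\geq 1$. Applying this to $z=y\pmb{\hbar}(x)^\top\B{\mu}+1/2$, we obtain
\begin{align*}
\up{h}_{\B{\mu}}(y|x)\;\geq\; y\pmb{\hbar}(x)^\top\B{\mu}+\tfrac{1}{2}-\bigl(y\pmb{\hbar}(x)^\top\B{\mu}-\tfrac{1}{2}\bigr)_+.
\end{align*}
Taking expectation under $\up{p}^*$ and using $R(\up{h}_{\B{\mu}})=1-\mathbb{E}_{\up{p}^*}\{\up{h}_{\B{\mu}}(y|x)\}$ from \eqref{0-1} yields
\begin{align*}
R(\up{h}_{\B{\mu}})\;\leq\;\tfrac{1}{2}-\mathbb{E}_{\up{p}^*}\{y\pmb{\hbar}(x)^\top\B{\mu}\}+\mathbb{E}_{\up{p}^*}\bigl\{(y\pmb{\hbar}(x)^\top\B{\mu}-\tfrac{1}{2})_+\bigr\}.
\end{align*}

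To match the statement, I would then bound the last term by one that depends only on $x$. Since $y\pmb{\hbar}(x)^\top\B{\mu}\leq|\pmb{\hbar}(x)^\top\B{\mu}|$ for $y\in\{-1,+1\}$, the monotonicity of the positive part gives the pointwise inequality $(y\pmb{\hbar}(x)^\top\B{\mu}-1/2)_+\leq(|\pmb{\hbar}(x)^\top\B{\mu}|-1/2)_+$. The right-hand side is $y$-independent, so the expectation over $\up{p}^*$ collapses to one over the marginal $\up{p}^*_x$, producing the advertised bound \eqref{bound-error}.

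\emph{Plan for the second identity.} This is a direct computation using the noise model \eqref{transition} together with the fact that the marginals $\up{p}^{\text{tr}}_x$ and $\up{p}^*_x$ coincide. I would write
\begin{align*}
\mathbb{E}_{\up{p}^{\text{tr}}}\{yf(x)\}=\int f(x)\bigl[\up{p}^{\text{tr}}(+1|x)-\up{p}^{\text{tr}}(-1|x)\bigr]\,d\up{p}^*_x(x),
\end{align*}
and then expand each label conditional via \eqref{transition}. Collecting terms gives $\up{p}^{\text{tr}}(+1|x)-\up{p}^{\text{tr}}(-1|x)=(1-2\rho_{+1}(x))\up{p}^*(+1|x)-(1-2\rho_{-1}(x))\up{p}^*(-1|x)=\sum_{y}y\,(1-2\rho_{y}(x))\up{p}^*(y|x)$, which, inserted back into the integral, yields \eqref{exp-train}.

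\emph{Main obstacle.} Neither step is technically hard; the only subtle point is selecting the right linear lower bound for the clip $[\,\cdot\,]_0^1$ in part one. A naïve bound such as $[z]_0^1\geq 0$ or $[z]_0^1\geq z$ (which fails when $z<0$) would lose the $1/2$-threshold structure needed to match the term $(|\pmb{\hbar}(x)^\top\B{\mu}|-1/2)_+$ on the right-hand side, whereas the inequality $[z]_0^1\geq z-(z-1)_+$ keeps track of the upper clipping exactly where it contributes.
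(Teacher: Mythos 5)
Your proposal is correct and follows essentially the same route as the paper: a pointwise lower bound on the clip $[\,\cdot\,]_0^1$ by a linear term minus a positive-part correction, which is then dominated by the $y$-independent quantity $\big(|\pmb{\hbar}(x)^\top\B{\mu}|-\tfrac{1}{2}\big)_+$ before (or after) taking expectations, and a direct expansion of the label conditionals via \eqref{transition} for the second identity. The only cosmetic difference is that the paper writes an exact decomposition of the clip (keeping both the upper and lower clipping terms with their signs) before bounding, whereas you use the one-sided inequality $[z]_0^1\geq z-(z-1)_+$ from the start; the resulting bound is identical.
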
 

\begin{proof}
The result in \eqref{bound-error} is obtained because 
\begin{align*}\up{h}_{\B{\mu}}(y|x)=\Big[y\pmb{\hbar}(x)^{\top}\B{\mu}+\frac{1}{2}\Big]_0^1=y\pmb{\hbar}(x)^{\top}\B{\mu}+\frac{1}{2}-y\Big(\pmb{\hbar}(x)^{\top}\B{\mu}-\frac{1}{2}\Big)_++y\Big(-\pmb{\hbar}(x)^{\top}\B{\mu}-\frac{1}{2}\Big)_+\end{align*}
as a consequence of the definition of $\up{h}_{\B{\mu}}$ in \eqref{h_mu}. Therefore, we have 
\begin{align}\label{useful_bound}\up{h}_{\B{\mu}}(y|x)\geq y\pmb{\hbar}(x)^{\top}\B{\mu}+\frac{1}{2}-\Big(|\pmb{\hbar}(x)^{\top}\B{\mu}|-\frac{1}{2}\Big)_+,\  \forall x\in\set{X},y\in\set{Y}\end{align}
that directly leads to \eqref{bound-error} since $R(\up{h}_{\B{\mu}})=\mathbb{E}_{\up{p}^*}\{1-\up{h}(y|x)\}$.

The result in \eqref{exp-train} is directly obtained because using \eqref{transition} we get
\begin{align*}\mathbb{E}_{\up{p}^{\text{tr}}}\{yf(x)\}&=\mathbb{E}_{\up{p}_x^*}\{f(x)\big((1-\rho_{+1}(x))\up{p}^*(+1|x)+\rho_{-1}(x)\up{p}^*(-1|x)\big)\}\\
&\  \  \  \  \  +\mathbb{E}_{\up{p}_x^*}\{-f(x)\big((1-\rho_{-1}(x))\up{p}^*(-1|x)+\rho_{+1}(x)\up{p}^*(+1|x)\big)\}\\&=\mathbb{E}_{\up{p}^*}\{y f(x)(1-2\rho_y(x))\}.\end{align*}

\end{proof}

\begin{lemma}\label{th2}
Let $\varepsilon_{\text{est}}$ be given as in \eqref{est-error} and $P_\text{noise}$ be the probability with which a label is incorrect at training. 
If $\B{\mu}$ is an $\varepsilon_{\text{opt}}$-solution of \eqref{dual}, 
we have  
\begin{align}
R(\up{h}_{\B{\mu}})&\leq\overline{R}+\varepsilon_{\text{opt}}+(\varepsilon_{\text{est}}+2P_\text{noise}-\lambda)\|\B{\mu}\|_1.\label{gen_bound}
\end{align}
\end{lemma}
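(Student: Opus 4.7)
The plan is to bound $R(\up{h}_{\B{\mu}})$ by starting from inequality~\eqref{bound-error} in Lemma~\ref{helper} and relating the ideal expectation $\mathbb{E}_{\up{p}^*}\{y\pmb{\hbar}(x)^\top\B{\mu}\}$ to the empirical average $(1/n)\sum_i y_i\pmb{\hbar}(x_i)^\top\B{\mu}$ that appears inside $F(\B{\mu})$. The discrepancy between these two quantities decomposes into two parts: a noise-induced bias between $\up{p}^*$ and the training distribution $\up{p}^{\text{tr}}$, and a finite-sample concentration error between $\up{p}^{\text{tr}}$ and the empirical distribution.

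First, I would apply identity~\eqref{exp-train} coordinate-wise to each base-rule $\hbar_j$, which gives $\mathbb{E}_{\up{p}^*}\{y\hbar_j(x)\}-\mathbb{E}_{\up{p}^{\text{tr}}}\{y\hbar_j(x)\}=2\,\mathbb{E}_{\up{p}^*}\{y\hbar_j(x)\rho_y(x)\}$. Since $|y\hbar_j(x)|\leq 1$, the absolute value of this quantity is bounded by $2\mathbb{E}_{\up{p}^*}\{\rho_y(x)\}=2P_{\text{noise}}$. Combining with the concentration bound~\eqref{est-error} via the triangle inequality yields $|\mathbb{E}_{\up{p}^*}\{y\hbar_j(x)\}-(1/n)\sum_i y_i\hbar_j(x_i)|\leq \varepsilon_{\text{est}}+2P_{\text{noise}}$ for every $\hbar_j\in\set{H}$. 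A standard Hölder argument then gives
\begin{align*}
\Big|\mathbb{E}_{\up{p}^*}\{y\pmb{\hbar}(x)^\top\B{\mu}\}-\tfrac{1}{n}\textstyle\sum_{i=1}^n y_i\pmb{\hbar}(x_i)^\top\B{\mu}\Big|\leq (\varepsilon_{\text{est}}+2P_{\text{noise}})\|\B{\mu}\|_1.
\end{align*}

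Plugging this into~\eqref{bound-error} and adding and subtracting $\lambda\|\B{\mu}\|_1$ allows me to recognize the resulting expression as $F(\B{\mu})$ plus the expected constraint violation, up to the correction $(\varepsilon_{\text{est}}+2P_{\text{noise}}-\lambda)\|\B{\mu}\|_1$. Concretely,
\begin{align*}
R(\up{h}_{\B{\mu}})\leq F(\B{\mu})+\mathbb{E}_{\up{p}^*_x}\Big(|\pmb{\hbar}(x)^\top\B{\mu}|-\tfrac{1}{2}\Big)_+ +(\varepsilon_{\text{est}}+2P_{\text{noise}}-\lambda)\|\B{\mu}\|_1.
\end{align*}
Finally, the defining property~\eqref{epsilon_opt} of an $\varepsilon_{\text{opt}}$-solution lets me replace the first two terms on the right by $\overline{R}+\varepsilon_{\text{opt}}$, which delivers \eqref{gen_bound}.

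There is no real obstacle here: the proof is essentially a bookkeeping assembly of Lemma~\ref{helper}, the concentration hypothesis~\eqref{est-error}, the noise identity~\eqref{exp-train}, and the $\varepsilon_{\text{opt}}$-suboptimality condition. The only point that requires care is getting the signs right so that the $+\lambda\|\B{\mu}\|_1$ contained in $F(\B{\mu})$ combines with the $-\lambda$ term appearing in the final bound, and ensuring that the expected-constraint-violation term in~\eqref{bound-error} is precisely the one absorbed by~\eqref{epsilon_opt} rather than a pointwise supremum.
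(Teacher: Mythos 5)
Your proposal is correct and follows essentially the same route as the paper's proof: both start from \eqref{bound-error}, combine \eqref{exp-train} with \eqref{est-error} and H\"{o}lder's inequality to bound the discrepancy between $\mathbb{E}_{\up{p}^*}\{y\pmb{\hbar}(x)^\top\B{\mu}\}$ and the empirical average by $(\varepsilon_{\text{est}}+2P_{\text{noise}})\|\B{\mu}\|_1$, and then invoke the $\varepsilon_{\text{opt}}$-suboptimality condition \eqref{epsilon_opt} to absorb $F(\B{\mu})$ plus the expected constraint violation into $\overline{R}+\varepsilon_{\text{opt}}$. The only difference is cosmetic ordering (you bound the expectation gap before adding and subtracting $\lambda\|\B{\mu}\|_1$ and $\B{\tau}^\top\B{\mu}$, whereas the paper does the rearrangement first), so nothing further is needed.
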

\begin{proof}
Using \eqref{bound-error} in Lemma~\ref{helper} above,  we get 
\begin{align}R(\up{h}_{\B{\mu}})&\leq F(\B{\mu})+\mathbb{E}_{\up{p}_x^*}\Big(|\pmb{\hbar}(x)^{\top}\B{\mu}|-\frac{1}{2}\Big)_+-\lambda\|\B{\mu}\|_1+\big(\B{\tau}-\mathbb{E}_{\up{p}^*}y\pmb{\hbar}(x)\big)^{\top}\B{\mu}\nonumber\\
&\leq \overline{R}+\varepsilon_{\text{opt}}+|\mathbb{E}_{\up{p}^*}y\pmb{\hbar}(x)-\B{\tau}|^{\top}|\B{\mu}|-\lambda\|\B{\mu}\|_1\nonumber\end{align}
after adding and subtracting $\lambda\|\B{\mu}\|_1$ and $\B{\tau}^{\top}\B{\mu}$ with $\B{\tau}=\frac{1}{n}\sum_{i=1}^ny_i\pmb{\hbar}(x_i)$. Therefore, the result is obtained by using H\"{o}lder's inequality because we have  
\begin{align}\|\mathbb{E}_{\up{p}^*}y\pmb{\hbar}(x)-\B{\tau}\|_\infty&\leq\Big\|\mathbb{E}_{\up{p}^{\text{tr}}}y\pmb{\hbar}(x)-\frac{1}{n}\sum_{i=1}^ny_i\pmb{\hbar}(x_i)\Big\|_\infty+\Big\|\mathbb{E}_{\up{p}^*}y\pmb{\hbar}(x)-\mathbb{E}_{\up{p}^{\text{tr}}}y\pmb{\hbar}(x)\Big\|_\infty\\
&\leq \varepsilon_{\text{est}}+2\mathbb{E}_{\up{p}^*}\{\rho_{y}(x)\}=\varepsilon_{\text{est}}+2P_{\text{noise}}\label{ineq_unif_noise}
\end{align}
by using \eqref{exp-train} in Lemma~\ref{helper} and the fact that $|\hbar(x)|\leq 1$ $\forall  \hbar \in\set{H}$.
\end{proof}
\newpage

\section{Proof of Theorem~\ref{th1}}\label{apd:proof_th_1}
\vspace{-0.cm}
Using Lemma~\ref{lemma} in Appendix~\ref{ap-lemma} and taking $\B{\tau}=\frac{1}{n}\sum_{i=1}^ny_i\pmb{\hbar}(x_i)$, we have  
\begin{align}\label{step4}\min_{\up{h}\in \text{T}(\set{X},\set{Y})}\max_{\up{p}\in\set{U}}\mathbb{E}_{\up{p}}\{\ell_{0\text{-}1}(\up{h},(x,y))\}&=\min_{\up{h},\B{\mu}}\  1-\B{\tau}^{\top}\B{\mu}+\lambda\|\B{\mu}\|_1+\sup_{x\in\set{X},y\in\set{Y}}\{y\pmb{\hbar}(x)^{\top}\B{\mu}-\up{h}(y|x)\}\nonumber\\
&=\min_{\B{\mu}}\  1-\B{\tau}^{\top}\B{\mu}+\lambda\|\B{\mu}\|_1+\min_{\up{h}}\sup_{x\in\set{X},y\in\set{Y}}\{y\pmb{\hbar}(x)^{\top}\B{\mu}-\up{h}(y|x)\}
\end{align}
and 
\begin{align*}
\begin{array}{ccl}\underset{\up{h}}{\min}\underset{x\in\set{X},y\in\set{Y}}{\sup}\{y\pmb{\hbar}(x)^{\top}\B{\mu}-\up{h}(y|x)\}=&\underset{\up{h},\nu}{\min}&\nu\\&\mbox{s.t.}& y\pmb{\hbar}(x)^{\top}\B{\mu}-\up{h}(y|x)\leq\nu,\  \forall x\in\set{X},y\in\set{Y}.\end{array}
\end{align*}
In addition, we have 
 \begin{align*}
 y\pmb{\hbar}(x)^{\top}\B{\mu}-\up{h}(y|x)\leq\nu, \forall x\in\set{X}, y\in\set{Y}&\Rightarrow \nu\geq\sup_{x\in\set{X}}\max\big\{\pmb{\hbar}(x)^{\top}\B{\mu}-1,-\pmb{\hbar}(x)^{\top}\B{\mu}-1, -\frac{1}{2}\big\}
\end{align*}
since $\up{h}(y|x)\leq 1$ and $\up{h}(1|x)+\up{h}(-1|x)=1$ for any $x\in\set{X}$, $y\in\set{Y}$. For each $\B{\mu}$, we first prove that there exists a classification rule $\up{h}$ satisfying
\begin{align}\label{step-h-exists}\up{h}(y|x)\geq y\pmb{\hbar}(x)^{\top}\B{\mu}-\sup_{x\in\set{X}}\max\big\{\pmb{\hbar}(x)^{\top}\B{\mu}-1,-\pmb{\hbar}(x)^{\top}\B{\mu}-1, -\frac{1}{2}\big\}.\end{align}
Clearly, we have  
\begin{align*}
\sup_{x\in\set{X}}\max\big\{\pmb{\hbar}(x)^{\top}\B{\mu}-1,-\pmb{\hbar}(x)^{\top}\B{\mu}-1, -\frac{1}{2}\big\}& \geq -\frac{1}{2}\\
\sup_{x\in\set{X}}\max\big\{\pmb{\hbar}(x)^{\top}\B{\mu}-1,-\pmb{\hbar}(x)^{\top}\B{\mu}-1, -\frac{1}{2}\big\}& \geq y\pmb{\hbar}(x)^{\top}\B{\mu}-1,\  \forall x\in\set{X},y\in\set{Y}.
\end{align*}
Therefore, there exists a classification rule satisfying \eqref{step-h-exists} because 
$$\sum_{y\in\set{Y}}\Big(y\pmb{\hbar}(x)^{\top}\B{\mu}-\sup_{x\in\set{X}}\max\big\{\pmb{\hbar}(x)^{\top}\B{\mu}-1,-\pmb{\hbar}(x)^{\top}\B{\mu}-1, -\frac{1}{2}\big\}\Big)\leq \sum_{y\in\set{Y}}y\pmb{\hbar}(x)^{\top}\B{\mu}+\frac{1}{2}=1,\forall x\in\set{X}$$
and 
$$\  y\pmb{\hbar}(x)^{\top}\B{\mu}-\sup_{x\in\set{X}}\max\big\{\pmb{\hbar}(x)^{\top}\B{\mu}-1,-\pmb{\hbar}(x)^{\top}\B{\mu}-1, -\frac{1}{2}\big\}\leq 1,\forall x\in\set{X}, y\in\set{Y}.$$
Then, such rules are solutions of 
$$\min_{\up{h}}\sup_{x\in\set{X},y\in\set{Y}}\{y\pmb{\hbar}(x)^{\top}\B{\mu}-\up{h}(y|x)\}=\sup_{x\in\set{X}}\max\big\{\pmb{\hbar}(x)^{\top}\B{\mu}-1,-\pmb{\hbar}(x)^{\top}\B{\mu}-1, -\frac{1}{2}\big\}$$
because for any $\up{h}\in\text{T}(\set{X},\set{Y})$ we have
\begin{align*}\sup_{x\in\set{X},y\in\set{Y}}\{y\pmb{\hbar}(x)^{\top}\B{\mu}-\up{h}(y|x)\}&=\sup_{x\in\set{X}}\max\{\pmb{\hbar}(x)^{\top}\B{\mu}-\up{h}(1|x),-\pmb{\hbar}(x)^{\top}\B{\mu}-\up{h}(-1|x)\}\\
&\geq\sup_{x\in\set{X}}\max\big\{\pmb{\hbar}(x)^{\top}\B{\mu}-\up{h}(1|x),-\pmb{\hbar}(x)^{\top}\B{\mu}-\up{h}(-1|x),-\frac{1}{2}\big\}\\
&\geq\sup_{x\in\set{X}}\max\big\{\pmb{\hbar}(x)^{\top}\B{\mu}-1,-\pmb{\hbar}(x)^{\top}\B{\mu}-1,-\frac{1}{2}\big\}\end{align*}
since 
\begin{align*}-\frac{1}{2}&\;=\;\frac{1}{2}\big(\pmb{\hbar}(x)^{\top}\B{\mu}-\up{h}(1|x)\big)+\frac{1}{2}\big(-\pmb{\hbar}(x)^{\top}\B{\mu}-\up{h}(-1|x)\big)\\
&\; \leq \; \max\{\pmb{\hbar}(x)^{\top}\B{\mu}-\up{h}(1|x),-\pmb{\hbar}(x)^{\top}\B{\mu}-\up{h}(-1|x)\}\end{align*}
and if $\up{h}$ satisfies \eqref{step-h-exists}, we get
$$\sup_{x\in\set{X},y\in\set{Y}}\{y\pmb{\hbar}(x)^{\top}\B{\mu}-\up{h}(y|x)\}\leq \sup_{x\in\set{X}}\max\big\{\pmb{\hbar}(x)^{\top}\B{\mu}-1,-\pmb{\hbar}(x)^{\top}\B{\mu}-1, -\frac{1}{2}\big\}.$$
Therefore, we have that  \eqref{step4} becomes
\begin{align}
\label{step5}
\min_{\up{h}\in \text{T}(\set{X},\set{Y})}\max_{\up{p}\in\set{U}}\mathbb{E}_{\up{p}}\{\ell_{0\text{-}1}(\up{h},(x,y))\}=& \; \min_{\B{\mu}}\  1-\B{\tau}^{\top}\B{\mu}+\lambda\|\B{\mu}\|_1\\
& +\sup_{x\in\set{X}}\max\big\{\pmb{\hbar}(x)^{\top}\B{\mu}-1,-\pmb{\hbar}(x)^{\top}\B{\mu}-1, -\frac{1}{2}\big\}\nonumber
\end{align}
and the result is obtained by observing that if $\B{\mu}^*$ is a solution of \eqref{step5}, it has to satisfy
$$-\frac{1}{2}\leq\pmb{\hbar}(x)^{\top}\B{\mu}^*\leq\frac{1}{2},\  \forall x\in\set{X}.$$
Otherwise, we would have that
$$\sup_{x\in\set{X}}\max\big\{\pmb{\hbar}(x)^{\top}\B{\mu}^*-1,-\pmb{\hbar}(x)^{\top}\B{\mu}^*-1, -\frac{1}{2}\big\}=\sup_{x\in\set{X}}\max\big\{\pmb{\hbar}(x)^{\top}\B{\mu}^*-1,-\pmb{\hbar}(x)^{\top}\B{\mu}^*-1\}=\frac{C}{2}-1$$
with $C>1$. Then taking $\widetilde{\B{\mu}}=\B{\mu}^*/C$ the objective of \eqref{step5} at such $\widetilde{\B{\mu}}$ would become
\begin{align*}-\B{\tau}^{\top}\frac{\B{\mu}^*}{C}+\lambda\Big\|\frac{\B{\mu}^*}{C}\Big\|_1+&\max\Big\{\sup_{x\in\set{X}}\max\{\pmb{\hbar}(x)^{\top}\frac{\B{\mu}^*}{C},-\pmb{\hbar}(x)^{\top}\frac{\B{\mu}^*}{C}\},\frac{1}{2}\Big\}\\
&=-\B{\tau}^{\top}\frac{\B{\mu}^*}{C}+\lambda\Big\|\frac{\B{\mu}^*}{C}\Big\|_1+\max\Big\{\frac{1}{C}\sup_{x\in\set{X}}\max\{\pmb{\hbar}(x)^{\top}\B{\mu}^*,-\pmb{\hbar}(x)^{\top}\B{\mu}^*\},\frac{1}{2}\Big\}
\\&=
\frac{1}{C}\Big(-\B{\tau}^{\top}\B{\mu}^*+\lambda\|\B{\mu}^*\|_1+\frac{C}{2}\Big)
\end{align*}
and hence the value at $\widetilde{\B{\mu}}$ would be smaller than that at $\B{\mu}^*$ since $C>1$ and the optimum value in \eqref{step5} is positive, which is in contradiction with $\B{\mu}^*$ being a solution of \eqref{step5}. Then, $\up{h}_{\B{\mu}^*}$ in \eqref{solution} and $\overline{R}=F(\B{\mu}^*)$ are, respectively, a solution and the optimum value of \eqref{minimax} as a direct consequence of \eqref{step4}.

\vspace{-0.cm}
\section{Proof of Theorem~\ref{th3}}\label{apd:proof_th_3}
\vspace{-0.cm}
Using \eqref{bound-error} in Lemma~\ref{helper} above,  we have  
$$R(\up{h}_{\B{\mu}})\leq\frac{1}{2}-\mathbb{E}_{\up{p}^*}\{y\pmb{\hbar}(x)^{\top}\B{\mu}\}
-F(\B{\mu})+\overline{R}+\varepsilon_{\text{opt}}$$
so that, using the definition of the minimax risk $\overline{R}$ and the function $F(\B{\mu})$ in \eqref{dual}, we get
$$R(\up{h}_{\B{\mu}})\leq \varepsilon_{\text{opt}}+\frac{1}{n}\sum_{i=1}^n y_i\pmb{\hbar}(x_i)^{\top}\B{\mu} -\mathbb{E}_{\up{p}^*}\{y\pmb{\hbar}(x)^{\top}\B{\mu}\}-\lambda\|\B{\mu}\|_1+\frac{1}{2}-\frac{1}{n}\sum_{i=1}^n y_i\pmb{\hbar}(x_i)^{\top}\B{\mu}_{\text{o}}+\lambda\|\B{\mu}_{\text{o}}\|_1$$
hence, adding and subtracting $\mathbb{E}_{\up{p}^*}\{y\pmb{\hbar}(x)^{\top}\B{\mu}_{\text{o}}\}$, we get
$$R(\up{h}_{\B{\mu}})\leq R(\up{h}_{\B{\mu}_{\text{o}}})+ \varepsilon_{\text{opt}}+\Big(\frac{1}{n}\sum_{i=1}^n y_i\pmb{\hbar}(x_i) -\mathbb{E}_{\up{p}^*}\{y\pmb{\hbar}(x)\}\Big)^{\top}(\B{\mu}-\B{\mu}_{\text{o}})-\lambda\|\B{\mu}\|_1+\lambda\|\B{\mu}_{\text{o}}\|_1$$
since $R(\up{h}_{\B{\mu}_{\text{o}}})=1/2-\mathbb{E}_{\up{p}^*}\{y\pmb{\hbar}(x)^{\top}\B{\mu}_{\text{o}}\}$. Therefore, the result in \eqref{bound-noise} is obtained using the reverse triangular inequality together with H\"{o}lder's inequality and the fact that 
\begin{align*}
\Big\|\mathbb{E}_{\up{p}^*}\{y\pmb{\hbar}(x)\} & -\frac{1}{n}\sum_{i=1}^n y_i\pmb{\hbar}(x_i)\Big\|_\infty\\
& \leq \; \Big\|\mathbb{E}_{\up{p}^{\text{tr}}}\{y\pmb{\hbar}(x)\}-\frac{1}{n}\sum_{i=1}^n y_i\pmb{\hbar}(x_i)\Big\|_\infty+\|\mathbb{E}_{\up{p}^*}\{y\pmb{\hbar}(x)\}-\mathbb{E}_{\up{p}^{\text{tr}}}\{y\pmb{\hbar}(x)\}\|_\infty\\
& \leq \; \varepsilon_{\text{est}}+2P_{\text{noise}}
\end{align*}
using \eqref{exp-train} in Lemma~\ref{helper}. 

For the second result, we have 
$$R(\up{h}_{\B{\mu}})\leq\frac{1}{2}-\mathbb{E}_{\up{p}^*}\{y\pmb{\hbar}(x)^{\top}\B{\mu}\}
+\mathbb{E}_{\up{p}^*}\Big(|\pmb{\hbar}(x)^{\top}\B{\mu}|-\frac{1}{2}\Big)_+$$
using \eqref{bound-error} in Lemma~\ref{helper}. Then, adding and subtracting $(\overline{R}-1/2)/(1-2P_{\text{noise}})$ we get
\begin{align}R(\up{h}_{\B{\mu}})\leq& -\mathbb{E}_{\up{p}^*}\{y\pmb{\hbar}(x)^{\top}\B{\mu}\}
+\mathbb{E}_{\up{p}^*}\Big(|\pmb{\hbar}(x)^{\top}\B{\mu}|-\frac{1}{2}\Big)_+-\frac{1}{n}\sum_{i=1}^n\frac{y_i\pmb{\hbar}(x_i)^{\top}\B{\mu}_{\text{o}}}{1-2P_{\text{noise}}}+\frac{\lambda}{1-2P_{\text{noise}}}\|\B{\mu}_{\text{o}}\|_1\nonumber\\
&+\frac{1}{2}+\frac{1}{n}\sum_{i=1}^n\frac{y_i\pmb{\hbar}(x_i)^{\top}\B{\mu}}{1-2P_{\text{noise}}}-\frac{\lambda}{1-2P_{\text{noise}}}\|\B{\mu}\|_1+\frac{\varepsilon_{\text{opt}}}{1-2P_{\text{noise}}}-\frac{\mathbb{E}_{\up{p}^*}\Big(|\pmb{\hbar}(x)^{\top}\B{\mu}|-\frac{1}{2}\Big)_+}{1-2P_{\text{noise}}}\label{aux}\end{align}
using the definition of $\B{\mu}$ and $\B{\mu}_{\text{o}}$ and the fact that
$$-\frac{\overline{R}-1/2}{1-2P_{\text{noise}}}\leq\frac{\varepsilon_{\text{opt}}}{1-2P_{\text{noise}}}-\frac{\mathbb{E}_{\up{p}^*}\Big(|\pmb{\hbar}(x)^{\top}\B{\mu}|-\frac{1}{2}\Big)_+}{1-2P_{\text{noise}}}+\frac{-F(\B{\mu})+1/2}{1-2P_{\text{noise}}}.$$
Grouping terms in \eqref{aux} and using the fact that $R(\up{h}_{\B{\mu}_{\text{o}}})=1/2-\mathbb{E}_{\up{p}^*}\{y\pmb{\hbar}(x)^{\top}\B{\mu}_{\text{o}}\}$, we get
\begin{align*}
R(\up{h}_{\B{\mu}})\leq \;  & R(\up{h}_{\B{\mu}_{\text{o}}})+\frac{\varepsilon_{\text{opt}}}{1-2P_{\text{noise}}}+\Big(\frac{1}{n}\sum_{i=1}^n\frac{y_i\pmb{\hbar}(x_i)}{1-2P_{\text{noise}}}-\mathbb{E}_{\up{p}^*}\{y\pmb{\hbar}(x)\}\Big)^{\top}(\B{\mu}-\B{\mu}_{\text{o}})\\
& +\frac{\lambda}{1-2P_{\text{noise}}}(\|\B{\mu}_{\text{o}}\|_1-\|\B{\mu}\|_1)
\end{align*}
so that the result is obtained using the reverse triangular inequality together with H\"{o}lder's inequality and the bound 
$$\Big\|\frac{\mathbb{E}_{\up{p}^{\text{tr}}}\{y\pmb{\hbar}(x)\}}{1-2P_{\text{noise}}}-\mathbb{E}_{\up{p}^*}\{y\pmb{\hbar}(x)\}\Big\|_\infty=\Big\|2\frac{\mathbb{E}_{\up{p}^*}\{y\pmb{\hbar}(x)(\rho_y(x)-P_{\text{noise}})\}}{1-2P_{\text{noise}}}\Big\|_\infty\leq2\frac{\sqrt{\mathbb{V}\text{ar}_{\up{p}^*}\{\rho_y(x)\}}}{1-2P_{\text{noise}}}$$
that follows using \eqref{exp-train}, $P_{\text{noise}}=\mathbb{E}_{\up{p}^*}\{\rho_y(x)\}$, Jensen inequality, and the fact that \mbox{$|\hbar(x)|\leq 1$ $\forall  \hbar \in\set{H}$.}
\vspace{-0.cm}
\section{Proof of Theorem~\ref{th4}}\label{apd:proof_th_4}
\vspace{-0.cm}
Using \eqref{bound-error} in Lemma~\ref{helper}, we have 
$$R(\up{h}_{\B{\mu}})\leq\frac{1}{2}-\mathbb{E}_{\up{p}^*}\{y\pmb{\hbar}(x)\}^{\top}\B{\mu}
-F(\B{\mu})+\overline{R}+\varepsilon_{\text{opt}}.$$
If $C=\max(1,\sup_{x\in\set{X}}|2\pmb{\hbar}(x)^{\top}\B{\mu}_{\text{B}}|)$, the vector $\B{\mu}_\text{B}/C$ is feasible for \eqref{dual}, so that using the definition of the minimax risk $\overline{R}$ and the function $F(\B{\mu})$ in \eqref{dual}, we get
$$R(\up{h}_{\B{\mu}})\leq \varepsilon_{\text{opt}}+\frac{1}{n}\sum_{i=1}^n y_i\pmb{\hbar}(x_i)^{\top}\B{\mu} -\mathbb{E}_{\up{p}^*}\{y\pmb{\hbar}(x)\}^{\top}\B{\mu}-\lambda\|\B{\mu}\|_1+\frac{1}{2}-\frac{1}{n}\sum_{i=1}^n y_i\pmb{\hbar}(x_i)^{\top}\frac{\B{\mu}_\text{B}}{C}+\frac{\lambda}{C}\|\B{\mu}_{\text{B}}\|_1.$$
Hence, adding and subtracting $\mathbb{E}_{\up{p}^*}\{y\up{h}_{\text{Bayes}}(x)\}/2$ and $\mathbb{E}_{\up{p}^*}\{y\pmb{\hbar}(x)\}^{\top}\B{\mu}_\text{B}/C$, we get
\begin{align}R(\up{h}_{\B{\mu}})\leq& R(\up{h}_{\text{Bayes}})+ \varepsilon_{\text{opt}}+\Big(\frac{1}{n}\sum_{i=1}^n y_i\pmb{\hbar}(x_i) 
-\mathbb{E}_{\up{p}^*}\{y\pmb{\hbar}(x)\}\Big)^{\top}(\B{\mu}-\frac{\B{\mu}_\text{B}}{C})-\lambda\|\B{\mu}\|_1\nonumber\\
&+\frac{\lambda}{C}\|\B{\mu}_{\text{B}}\|_1+\mathbb{E}_{\up{p}^*}\Big\{\frac{y\up{h}_{\text{Bayes}}(x)}{2}-\frac{y\pmb{\hbar}(x)^{\top}\B{\mu}_\text{B}}{C}\Big\}\label{step-bayes0}\end{align}
since $R(\up{h}_\text{Bayes})=1/2-\mathbb{E}_{\up{p}^*}\{y\up{h}_{\text{Bayes}}(x)\}/2$. For the last term in \eqref{step-bayes0}, we have 
\begin{align}
\Big|\mathbb{E}_{\up{p}^*}\Big\{\frac{y\up{h}_{\text{Bayes}}(x)}{2}-y\pmb{\hbar}(x)^{\top}\frac{\B{\mu}_{\text{B}}}{C}\Big\}\Big|
=\,& \Big|\int \Big(\frac{\up{h}_{\text{Bayes}}(x)}{2}-\pmb{\hbar}(x)^{\top}\frac{\B{\mu}_{\text{B}}}{C}\Big)yd\up{p}^*(x,y)\Big|\nonumber\\
\leq \, & \frac{1}{2}\sup_{x\in\set{X}}\Big|\up{h}_{\text{Bayes}}(x)-\frac{2\pmb{\hbar}(x)^{\top}\B{\mu}_{\text{B}}}{C} \Big|\nonumber\\
\leq \, & \frac{1}{2}\sup_{x\in\set{X}}|\up{h}_{\text{Bayes}}(x)-2\pmb{\hbar}(x)^\top\B{\mu}_{\text{B}}|\nonumber\\
& + \frac{1}{2}\sup_{x\in\set{X}}\Big|\frac{2\pmb{\hbar}(x)^{\top}\B{\mu}_{\text{B}}}{C}-2\pmb{\hbar}(x)^{\top}\B{\mu}_{\text{B}}\Big|\nonumber\\
\leq \, & \frac{\varepsilon_{\text{approx}}}{2}+\frac{1}{2}\sup_{x\in\set{X}}|2\pmb{\hbar}(x)^{\top}\B{\mu}_{\text{B}}|\Big(1-\frac{1}{C}\Big)\nonumber\\
= \, & \frac{\varepsilon_{\text{approx}}}{2}+\frac{1}{2}(C-1)\leq\varepsilon_{\text{approx}}\label{step-bayes}
\end{align}
where \eqref{step-bayes} is obtained because $C=1$ or
$$C=\sup_{x\in\set{X}}|2\pmb{\hbar}(x)^{\top}\B{\mu}_{\text{B}}|\leq \sup_{x\in\set{X}}|2\pmb{\hbar}(x)^{\top}\B{\mu}_{\text{B}}-\up{h}_{\text{Bayes}}(x)|+\sup_{x\in\set{X}}|\up{h}_{\text{Bayes}}(x)|\leq \varepsilon_{\text{approx}}+1.$$
For the third term in \eqref{step-bayes0}, using H\"{o}lder's inequality we get
\begin{align}\Big(\frac{1}{n}\sum_{i=1}^n y_i\pmb{\hbar}(x_i) 
-\mathbb{E}_{\up{p}^*}\{y\pmb{\hbar}(x)\}\Big)^{\top}(\B{\mu}-\frac{\B{\mu}_\text{B}}{C})\leq&\Big\|\mathbb{E}_{\up{p}^*}\{y\pmb{\hbar}(x)\}-\frac{1}{n}\sum_{i=1}^n y_i\pmb{\hbar}(x_i)\Big\|_\infty\Big\|\B{\mu}-\frac{\B{\mu}_\text{B}}{C}\Big\|_1\nonumber\\
\leq&\Big\|\mathbb{E}_{\up{p}^{\text{tr}}}\{y\pmb{\hbar}(x)\}-\frac{1}{n}\sum_{i=1}^n y_i\pmb{\hbar}(x_i)\Big\|_\infty\Big\|\B{\mu}-\frac{\B{\mu}_\text{B}}{C}\Big\|_1\nonumber\\&+\|\mathbb{E}_{\up{p}^*}\{y\pmb{\hbar}(x)\}-\mathbb{E}_{\up{p}^{\text{tr}}}\{y\pmb{\hbar}(x)\}\Big\|_\infty\Big\|\B{\mu}-\frac{\B{\mu}_\text{B}}{C}\Big\|_1\nonumber\\
\leq&(\varepsilon_{\text{est}}+2P_{\text{noise}})\Big\|\B{\mu}-\frac{\B{\mu}_\text{B}}{C}\Big\|_1.\nonumber\end{align}
Then, the result in \eqref{Bayes} is obtained using the reverse triangular inequality and the fact that 
$$\Big\|\B{\mu}-\frac{\B{\mu}_\text{B}}{C}\Big\|_1\leq\|\B{\mu}-\B{\mu}_\text{B}\|_1+(1-\frac{1}{C})\|\B{\mu}_\text{B}\|_1\leq\|\B{\mu}-\B{\mu}_\text{B}\|_1+\|\B{\mu}_\text{B}\|_1$$
because $C\geq 1$.


\vspace{-0.cm}
\section{Proof of Theorem~\ref{th5}}\label{apd:proof_th_5}
\vspace{-0.cm}
If $\|\B{\mu}^{(k)}\|_1\leq1/2$, we have  $\mathbb{E}_{\up{p}^*_x}\big(|[\hbar_1(x),\hbar_2(x),\ldots,\hbar_{t_k}(x)]\B{\mu}^{(k)}|-1/2)_+=0$ because \mbox{$\hbar(x)\in[-1,1]$} for any $\hbar\in\set{H}$. Hence, $\B{\mu}^{(k)}$ is a $\varepsilon_{\text{opt}}^{(k)}$-optimal solution of \eqref{dual} with $\varepsilon_{\text{opt}}^{(k)}=R^{(k)}-\overline{R}$, so that the bound in \eqref{bound-k} is obtained as a direct consequence of the bound \eqref{gen_bound} in Theorem~\ref{th2}.

For the case where $\|\B{\mu}^{(k)}\|_1>1/2$, let $\set{F}$ be the family of functions
\begin{align*}\set{F}=\{f(x)=[\hbar_1(x),\hbar_2(x),\ldots,\hbar_{t_k}(x)]\B{\mu}\mbox{ for some } \hbar_1,\hbar_2,\ldots,\hbar_{t_k}\in\set{H}, \|\B{\mu}\|_1=C\}.\end{align*}
Using common properties of Rademacher complexity (see e.g., Chapter 26 in \cite{ShaBen:14}), we get that the Rademacher complexity of $\set{F}$ is equal to $C\set{R}$. Specifically, $\set{F}$ is given by convex combinations of functions in $\set{H}$ scaled by $C$ because $\B{\mu}$ in the definition of $\set{F}$ can be taken to be positive since we are considering sets of base-rules $\set{H}$ such that $-\hbar\in\set{H}$ whenever $\hbar\in\set{H}$. Hence, the family of functions
\begin{align*}\set{G}=\{g(x)=\big(|f(x)|-1/2\big)_+\mbox{ for some } f\in\set{F}\}\end{align*}
has Rademacher complexity upper bounded by $C\set{R}$, using Talagrand's contraction Lemma (see e.g., Chapter 26 in \cite{ShaBen:14}) and the fact that function $h(s)=\big(|s|-1/2\big)_+$ is $1-$Lipschitz. In addition, $g(x)\in[0, (C-1/2)_+]$ for any $g\in\set{G}$ so that with probability at least $1-\delta$ we have 
\begin{align}
\mathbb{E}_{\up{p}^*_x}\Big(|[\hbar_1(x),\hbar_2(x),\ldots,\hbar_{t_k}(x)]\B{\mu}^{(k)}|-\frac{1}{2}\Big)_+&\leq\frac{1}{n}\sum_{i=1}^n\Big(|[\hbar_1(x_i),\hbar_2(x_i),\ldots,\hbar_{t_k}(x_i)]\B{\mu}^{(k)}|-\frac{1}{2}\Big)_+  \nonumber\\
&\  \  \  \ +2\|\B{\mu}^{k}\|_1\set{R} +\big(\|\B{\mu}^{(k)}\|_1-\frac{1}{2}\big)\sqrt{\frac{\log(1/\delta)}{2n}}\nonumber\\
&=2\|\B{\mu}^{k}\|_1\set{R}+\big(\|\B{\mu}^{(k)}\|_1-\frac{1}{2}\big)\sqrt{\frac{\log(1/\delta)}{2n}}=\varepsilon(\delta)\label{bound-rad}\end{align}
using uniform concentration bounds based on Rademacher complexity (see e.g., Chapter 3 in \cite{MehRos:18}). 

Hence, $\B{\mu}^{(k)}$ is an $\varepsilon_{\text{opt}}^{(k)}$-optimal solution of \eqref{dual} with $\varepsilon_{\text{opt}}^{(k)}=R^{(k)}-\overline{R}+\varepsilon(\delta)$, so that the bound in \eqref{bound-k} is obtained as a direct consequence of the bound \eqref{gen_bound} in Lemma~\ref{th2} in Appendix~\ref{apd-aux}.

For the last result, if Algorithm~\ref{alg} stops at round $k$ in Step 5, we have
$$\max_{\hbar\in\set{H}}\frac{1}{n}\sum_{i=1}^nw_i\tilde{y}_i\hbar(x_i)\leq\lambda.$$
Then, all the dual constraints are satisfied at round $k$ and we have that $R^{(k)}\leq \overline{R}$. Such inequality is obtained because $R^{(k)}$ would be the optimal value of the primal in \eqref{P_k} using all the base-rules, and \eqref{P_k} has the same objective and variables as \eqref{dual} but with less constraints. Therefore, the result is obtained because the suboptimality at round $k$ satisfies $\varepsilon_{\text{opt}}^k=R^{(k)}-\overline{R}+\varepsilon(\delta)\leq\varepsilon(\delta)$.


\vspace{-0.cm}
\section{Effect in Algorithm~\ref{alg} of the base learner suboptimality}\label{appendix-th}\vspace{-0.cm}
The next result shows how the suboptimality of solutions found by Algorithm~\ref{alg} is affected by a possible early termination and the suboptimality of the base learner used in practice.
\begin{theorem}\label{th-appendix}
 If $\B{\mu}^*$ is a solution of the optimization \eqref{P_k} using all the base-rules in $\set{H}$. Then, $\B{\mu}^{(k)}$ is an $\varepsilon_{\text{op}}^{(k)}$-optimal solution of optimization \eqref{dual} in Section~\ref{sec-3} for 
\begin{align}
\hspace{-0.09cm}\varepsilon_{\text{opt}}^{(k)}\leq   \Big(\max_{\hbar\in\set{H}}\frac{1}{n}\sum_{i=1}^nw_i\tilde{y}_i\hbar(x_i)-\lambda\Big)_+\|\B{\mu}^*\|_1+\varepsilon(\delta)
\label{ep-opt-alg}
\end{align}
with weights $\{w_i\}$ and labels $\{\widetilde{y}_i\}$ given by \eqref{weights} using the dual solution at round $k$.
\end{theorem}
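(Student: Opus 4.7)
The plan is to decompose the suboptimality gap $\varepsilon_{\text{opt}}^{(k)}$ as defined in \eqref{epsilon_opt} into the objective gap $F(\B{\mu}^{(k)}) - \overline{R} = R^{(k)} - \overline{R}$ plus the expected constraint violation $\mathbb{E}_{\up{p}_x^*}(|\pmb{\hbar}(x)^{\top}\B{\mu}^{(k)}| - 1/2)_+$, and to bound each term separately. For the expected constraint violation, the vector $\B{\mu}^{(k)}$ satisfies all the training-sample constraints of \eqref{P_k} by construction of Algorithm~\ref{alg}, so the Rademacher-complexity argument already developed in the proof of Theorem~\ref{th5} (inequality \eqref{bound-rad}) applies verbatim and delivers an upper bound of $\varepsilon(\delta)$ with probability at least $1-\delta$.

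The heart of the proof is bounding the objective gap $R^{(k)} - \overline{R}$. I would consider the Lagrangian $L$ of the primal \eqref{P_k} taken over the full set $\set{H}$ of base-rules, and evaluate it at its primal optimum $\B{\mu}^*$ (split as $\B{\mu}^* = \B{\mu}_+^* - \B{\mu}_-^*$ with $\B{\mu}_\pm^* = (\pm\B{\mu}^*)_+ \succeq \V{0}$, so that $\V{1}^{\top}(\B{\mu}_+^* + \B{\mu}_-^*) = \|\B{\mu}^*\|_1$) and at the round-$k$ dual pair $(\B{\alpha}^{(k)}, \B{\beta}^{(k)})$. Since $\B{\mu}^*$ satisfies the training constraints and $(\B{\alpha}^{(k)}, \B{\beta}^{(k)}) \succeq \V{0}$, weak Lagrangian duality gives $L(\B{\mu}^*, \B{\alpha}^{(k)}, \B{\beta}^{(k)}) \leq F(\B{\mu}^*)$. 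On the other hand, grouping $L$ by base-rule and using that $D(\B{\alpha}^{(k)},\B{\beta}^{(k)}) = R^{(k)}$ by strong duality for \eqref{P_k}, one obtains
$L = R^{(k)} + \sum_j \mu_+^{*(j)}[\V{v}_j^{\top}(\B{\alpha}^{(k)} - \B{\beta}^{(k)} - \V{y}/n) + \lambda] + \sum_j \mu_-^{*(j)}[-\V{v}_j^{\top}(\B{\alpha}^{(k)} - \B{\beta}^{(k)} - \V{y}/n) + \lambda]$. For the base-rules selected at round $k$ both brackets are nonnegative by dual feasibility in \eqref{D_k}, and for every $\hbar\in\set{H}$ each bracket is bounded below by $\lambda - m_k$, where $m_k \coloneqq \max_{\hbar\in\set{H}} |\V{v}^{\top}(\B{\alpha}^{(k)} - \B{\beta}^{(k)} - \V{y}/n)|$ coincides with $\max_{\hbar\in\set{H}} \sum_i w_i \tilde{y}_i \hbar(x_i)$ thanks to $-\hbar\in\set{H}$ whenever $\hbar\in\set{H}$. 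Multiplying by the nonnegative weights $\mu_\pm^{*(j)}$ and summing yields $L \geq R^{(k)} - (m_k - \lambda)_+ \|\B{\mu}^*\|_1$.

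Combining the two inequalities on $L$ gives $R^{(k)} \leq F(\B{\mu}^*) + (m_k - \lambda)_+ \|\B{\mu}^*\|_1$. To close the argument I would observe that $F(\B{\mu}^*) \leq \overline{R}$, because \eqref{P_k} with all of $\set{H}$ only enforces the constraints at the sample points $x_1, \ldots, x_n$, making its feasible region a superset of that of \eqref{dual}; hence its minimum is no larger than $\overline{R}$. Adding the Rademacher bound on the expected constraint violation recovers the claimed inequality \eqref{ep-opt-alg}, and this also confirms consistency with Theorem~\ref{th5} in the regime $m_k \leq \lambda$. The main obstacle will be the Lagrangian decomposition in the middle step: one must carefully track the sign of each bracket, apply nonnegativity of $\B{\mu}_\pm^*$ to lower-bound every per-rule contribution uniformly by $\lambda - m_k$, and exploit the symmetry $-\hbar\in\set{H}$ to identify $m_k$ with the quantity appearing in Algorithm~\ref{alg}'s stopping criterion and thus in the bound \eqref{ep-opt-alg}.
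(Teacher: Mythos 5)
Your proposal is correct and follows essentially the same route as the paper: you bound $R^{(k)}-\overline{R}$ by $\big(\max_{\hbar\in\set{H}}\sum_i w_i\tilde{y}_i\hbar(x_i)-\lambda\big)_+\|\B{\mu}^*\|_1$ via the round-$k$ dual pair and weak duality for the problem over all of $\set{H}$, note that this full problem is a relaxation of \eqref{dual} so its value is at most $\overline{R}$, and add the Rademacher bound \eqref{bound-rad} for the expected constraint violation. The only difference is presentational: the paper packages the weak-duality step through the perturbed problem \eqref{PP} with regularization $\lambda+\varepsilon_{\text{base}}$, whereas you expand the Lagrangian of \eqref{P_k} directly and lower-bound each per-base-rule bracket by $\lambda-m_k$, which is an equivalent computation.
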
\vspace{-0.cm}
\begin{proof}
We first show that  
\begin{align}\label{first-bound}R^{(k)}\leq \overline{R}+\varepsilon_{\text{base}}\|\widetilde{\B{\mu}}\|_1\end{align}
for 
$$\varepsilon_{\text{base}}=\Big(\max_{\hbar\in\set{H}}\frac{1}{n}\sum_{i=1}^nw_i\tilde{y}_i\hbar(x_i)-\lambda\Big)_+.$$
Let $\B{\alpha},\B{\beta}$ be a dual solution of the optimization problem \eqref{P_k} solved at round $k$. By definition of weights $\{w_i\}_{i=1}^n$ and labels $\{\widetilde{y}_i\}_{i=1}^n$ in \eqref{weights}, for any $\hbar\in\set{H}$ we have 
$$-\varepsilon_{\text{base}}-\lambda\leq [\hbar(x_1),\hbar(x_2),\ldots,\hbar(x_n)](\B{\alpha}-\B{\beta}-\V{y}/n)\leq \lambda+\varepsilon_{\text{base}}.$$
In addition, the vectors  
$\B{\mu}_+^*=(\B{\mu}^*)_+$, $\B{\mu}_-^*=(-\B{\mu}^*)_+$ are feasible for the optimization problem
 \begin{align}\label{PP}
\min_{\B{\mu}_+,\B{\mu}_-}\  &\frac{1}{2}-\frac{1}{n}\sum_{i=1}^ny_i\pmb{\hbar}(x_i)^{\top}(\B{\mu}_+-\B{\mu}_-)+(\lambda+\varepsilon_{\text{base}})\V{1}^{\top}(\B{\mu}_++\B{\mu}_-)\nonumber\\
\text{s.t.}&-\frac{1}{2}\leq\pmb{\hbar}(x_i)^{\top}(\B{\mu}_+-\B{\mu}_-)\leq\frac{1}{2},\  i=1,2,\ldots,n\nonumber\\
&\B{\mu}_+\succeq\V{0},\B{\mu}_-\succeq\V{0}.
\end{align}
where $\pmb{\hbar}$ is given by all the base-rules in $\set{H}$.
Then, using weak duality we have  
$$R^{(k)}=\frac{1}{2}\Big(1-\V{1}^{\top}(\B{\alpha}+\B{\beta})\Big)\leq \frac{1}{2}-\frac{1}{n}\sum_{i=1}^ny_i\pmb{\hbar}(x_i)^{\top}(\B{\mu}_+^*-\B{\mu}_-^*)+(\lambda+\varepsilon_{\text{base}})\V{1}^{\top}(\B{\mu}_+^*+\B{\mu}_-^*)$$
because $\B{\alpha},\B{\beta}$ is a feasible solution of the dual of \eqref{PP}. Then, if $\widetilde{R}$ is the optimal value of \eqref{P_k} using all the base-rules in $\set{H}$, we have  
$$R^{(k)}\leq\widetilde{R}+\varepsilon_{\text{base}}\V{1}^{\top}(\B{\mu}_+^*+\B{\mu}_-^*)$$
so that the bound in \eqref{first-bound} is obtained since $\widetilde{R}\leq\overline{R}$ because $\widetilde{R}$ is the optimum value of an optimization problem with the same objective and variables as \eqref{dual} but with less constraints. Therefore, the bound in \eqref{ep-opt-alg} is obtained because 
$$\varepsilon_{\text{opt}}^{(k)}=R^{(k)}-\overline{R}+\varepsilon(\delta)\leq  \varepsilon_{\text{base}}\|\B{\mu}^*\|+\varepsilon(\delta).$$
\end{proof}\vspace{-0.cm}
The theorem above bounds the suboptimality of RMBoost rules determined at any round by Algorithm~\ref{alg}. The bound in \eqref{ep-opt-alg} accounts for the error due to the usage of relaxed constraints corresponding to the training samples through the term $\varepsilon(\delta)$. In addition, the first term in \eqref{ep-opt-alg} accounts for the error due to a possible early termination as well as for the suboptimality in practice of the base learner used to solve \eqref{base}. Notice that such suboptimality of the base learner affects any boosting method \cite{SchFre:12} and is not a significant problem for Algorithm~\ref{alg} that only requires to find  a violated constraint in the dual (not necessarily the most violated).

\vspace{-0.cm}
\section{Implementation details and additional experimental results}\label{additional}\vspace{-0.cm}
In the following we provide further implementation details and describe the datasets used in Section~\ref{sec-5}. Then, we complement the results in the main paper by including the results obtained using multiple types of label noise, assessing the running times of the methods presented, and evaluating the sensitivity to parameter $\lambda$. In the first set of additional results, we evaluate the classification performance of the proposed method in comparison with existing boosting methods in cases with uniform and symmetric label noise as well as adversarial noise; in the second set of additional results, we further show the robustness to noise of RMBoost in comparison with AdaBoost; in the third set of additional results, we compare the running times of RMBoost with AdaBoost and LPBoost; in the fourth set of additional results, we further show the performance improvement of RMBoost using large datasets; and, in the fifth set  set of additional results, we show that RMBoost has little sensitivity to the choice of hyperparameter $\lambda$. In addition, the Github \url{https://github.com/MachineLearningBCAM/RMBoost-NeurIPS-2025} provides the code of the proposed RMBoost method with the setting used in the numerical results. 
\vspace{-0.2cm}
\subsection{Implementation details and datasets utilized}
\vspace{-0.2cm}
We utilize 11 publicly available datasets that have been often use as benchmark for boosting methods: Diabetes, German Numer, Credit, Blood transfusion, Titanic, Raisin, QSAR, Climate, Susy, Higgs, and Forest covertype. These datasets can be found in the UCI repository \cite{DuaGra:2019} and in \url{www.kaggle.com}.  The main characteristics of the datasets used is provided in Table~\ref{tab:datasets}.


The proposed RMBoost method is evaluated using multiple cases of label noise: the conventional symmetric and uniform label noise ($\forall x,\   \rho_{+1}(x)=\rho_{-1}(x)= P_{\text{noise}}$) with $P_{\text{noise}} = 10\%$ and $P_{\text{noise}} = 20\%$, and also an adversarial type of label noise 
with $P_{\text{noise}} = 10\%$ and $P_{\text{noise}} = 20\%$. This adversarial type of noise is implemented by flipping the labels of training instances that can be classified with high margin. Specifically, we flip the labels of the instances with the largest margins for a reference rule found with the LogitBoost method using clean labels. This type of label noise is addressed by the theoretical results presented in the paper and corresponds with non-uniform and non-symmetric noise in which $\rho_y(x)=1$ if $y\up{h}(x)$ is large and $\rho_y(x)=0$ otherwise, where $\up{h}$ is the reference rule. Such type of noise describes practical situations in which an adversary chooses to change the labels in the examples that can result in the highest damage. 

\begin{table}[]\vspace{-0.3cm}
\centering
\caption{Datasets characteristics}
\vspace{0.2cm}
\label{tab:datasets}
\begin{tabular}{lrr}
\toprule
\multicolumn{1}{c}{Dataset} & \multicolumn{1}{c}{Samples} & \multicolumn{1}{c}{Instances dimensionality} \\
\midrule
Diabetes                    & 768                         & 8                                            \\
German Numer                & 1,000                       & 24                                           \\
Credit                      & 690                         & 15                                           \\
Blood transfusion           & 748                         & 4                                            \\
Titanic                     & 891                         & 8                                            \\
Raisin                      & 900                         & 7                                            \\
QSAR                        & 1,055                       & 41                                           \\
Climate               & 540                         & 18                                           \\
Susy               & 5,000,000                         & 18                                           \\
Higgs             & 11,000,000                         & 21                                        \\
Forest covertype              & 581,012                         & 54                                          \\
\bottomrule
\end{tabular}
\end{table}

The proposed RMBoost method is compared with 8 boosting methods: the 4 state-of-the-art techniques \mbox{AdaBoost}, LogitBoost, GentleBoost, and LPBoost together with the 4 robust methods RobustBoost, BrownBoost, XGBoost with quadratic potential (XGB-Quad), and Robust-GBDT, specifically designed for scenarios with noisy labels. RMBoost and all the methods used for comparison are implemented using default values for hyper-parameters and the code in standard libraries or provided by the authors. Methods RobustBoost, AdaBoost, LogitBoost, GentleBoost, and LPBoost are implemented using their Matlab codes, methods XGB-Quad and BrownBoost are implemented using the Python libraries `XGBoost' \url{https://xgboost.readthedocs.io} and `BrownBoost' \url{https://github.com/lapis-zero09/BrownBoost}, respectively, and method Robust-GBDT is implemented using the code provided by the authors \cite{LuoQuaXu:23}. The proposed RMBoost is implemented by learning parameters $\B{\mu}^*\in\mathbb{R}^t$ and base-rules $\hbar_1,\hbar_2,\ldots,\hbar_t$ using \mbox{Algorithm~\ref{alg}}, and by predicting labels using the deterministic classifier $\up{h}_{\B{\mu}^*}^{\text{d}}(x)=\text{sign}([\hbar_1(x),\hbar_2(x),\ldots,\hbar_t(x)]\B{\mu}^*)$. In particular, we use simplex-based solvers for linear optimization with tolerances for constraints and dual feasibility of $10^{-3}$, and we take $\lambda=1/\sqrt{n}$ in all the numerical results.


\begin{table*}
\centering
\caption{Average classification error in \% $\pm$ st. dev. for RMBoost and state-of-the-art boosting methods. }
\label{tab:error_apendice}
\begin{adjustbox}{width=0.8\textwidth}
\begin{tabular}{rl|cccccccc}
\toprule
& \multicolumn{1}{c|}{Method} & Titanic & German & Blood & Credit & Diabetes & Raisin & QSAR & Climate \\
\midrule
\multirow{10}{*}{\rotatebox[origin=c]{90}{Noise-less}}
& RobustBoost                & 21$\pm$3.7 & 25$\pm$4.6 & 26$\pm$4.4 & 15$\pm$4.7 & 28$\pm$5.6 & 16$\pm$3.4 & 16$\pm$3.5 & 9.1$\pm$2.9  \\
                   & AdaBoost                   & 20$\pm$3.2 & 24$\pm$4.2 & 24$\pm$4.4 & 14$\pm$3.9 & 27$\pm$4.9 & 15$\pm$2.7 & 14$\pm$3.1 & 8.5$\pm$2.0  \\
                   & LPBoost                    & 32$\pm$5.7 & 28$\pm$2.4 & 32$\pm$5.8 & 16$\pm$4.3 & 29$\pm$5.5 & 16$\pm$3.2 & 16$\pm$3.1 & 8.3$\pm$2.0  \\
                   & LogitBoost                 & 21$\pm$3.7 & 24$\pm$4.5 & 27$\pm$4.3 & 14$\pm$4.0 & 26$\pm$5.3 & 15$\pm$2.6 & 14$\pm$3.1 & 8.5$\pm$2.0  \\
                   & GentleBoost                & 22$\pm$3.7 & 25$\pm$4.6 & 26$\pm$4.1 & 14$\pm$4.2 & 27$\pm$5.2 & 15$\pm$2.9 & 14$\pm$3.1 & 8.7$\pm$2.3  \\
                   & BrownBoost                 & 20$\pm$3.7 & 25$\pm$3.6 & 25$\pm$3.3 & 15$\pm$4.1 & 34$\pm$5.0 & 15$\pm$3.9 & 14$\pm$3.4 & 11$\pm$2.4 \\
                   & Robust-GBDT                & 23$\pm$3.9 & 24$\pm$3.4 & 24$\pm$3.6 & 23$\pm$4.8 & 33$\pm$4.4 & 16$\pm$3.6 & 16$\pm$3.8 & 11$\pm$2.7 \\
                   & XGB-Quad                   & 21$\pm$3.7 & 25$\pm$3.4 & 22$\pm$3.9 & 22$\pm$5.6 & 34$\pm$4.6 & 16$\pm$3.8 & 23$\pm$3.5 & 8.4$\pm$2.0  \\
                   & RMBoost                    & 22$\pm$3.5 & 27$\pm$2.8 & 20$\pm$5.4 & 14$\pm$5.6 & 26$\pm$4.5 & 12$\pm$3.6 & 15$\pm$3.1 & 7.5$\pm$2.0  \\
                   & Minimax risk               & 20$\pm$0.3 & 26$\pm$0.6 & 24$\pm$0.6 & 16$\pm$0.4 & 25$\pm$0.8 & 14$\pm$0.6 & 17$\pm$0.5 & 9.3$\pm$0.4  \\
                   \midrule
\multirow{10}{*}{\rotatebox[origin=c]{90}{$P_{\text{noise}}=10\%$}} & RobustBoost                & 21$\pm$4.2 & 29$\pm$3.9 & 26$\pm$4.2 & 19$\pm$4.5 & 31$\pm$5.3 & 20$\pm$3.9 & 19$\pm$4.0 & 15$\pm$4.2 \\
                   & AdaBoost                   & 22$\pm$4.1 & 30$\pm$4.2 & 27$\pm$3.9 & 18$\pm$4.5 & 31$\pm$5.2 & 19$\pm$3.9 & 19$\pm$3.5 & 12$\pm$2.8 \\
                   & LPBoost                    & 35$\pm$6.1 & 34$\pm$5.4 & 36$\pm$5.7 & 22$\pm$4.7 & 32$\pm$5.7 & 21$\pm$4.1 & 20$\pm$3.5 & 12$\pm$3.6 \\
                   & LogitBoost                 & 23$\pm$4.5 & 29$\pm$4.4 & 28$\pm$4.3 & 19$\pm$4.5 & 29$\pm$5.1 & 19$\pm$4.1 & 18$\pm$3.5 & 10$\pm$2.9 \\
                   & GentleBoost                & 24$\pm$4.5 & 29$\pm$4.2 & 29$\pm$4.3 & 19$\pm$4.7 & 30$\pm$4.9 & 19$\pm$4.0 & 18$\pm$3.5 & 10$\pm$2.9 \\
                   & BrownBoost                 & 23$\pm$3.9 & 28$\pm$3.8 & 26$\pm$4.4 & 21$\pm$4.4 & 35$\pm$4.9 & 17$\pm$3.9 & 18$\pm$3.8 & 11$\pm$2.8 \\
                   & Robust-GBDT                & 24$\pm$3.9 & 25$\pm$3.3 & 25$\pm$3.2 & 26$\pm$5.6 & 34$\pm$4.1 & 20$\pm$4.1 & 19$\pm$3.5 & 20$\pm$2.9 \\
                   & XGB-Quad                   & 22$\pm$3.9 & 27$\pm$4.1 & 23$\pm$3.4 & 24$\pm$4.8 & 34$\pm$4.5 & 20$\pm$3.4 & 26$\pm$4.7 & 10$\pm$3.2 \\
                   & RMBoost                    & 22$\pm$3.6 & 27$\pm$5.0 & 22$\pm$3.9 & 16$\pm$3.8 & 27$\pm$5.1 & 14$\pm$2.4 & 20$\pm$3.3 & 9.5$\pm$2.8  \\
                   & Minimax risk               & 24$\pm$0.8 & 29$\pm$0.8 & 28$\pm$0.9 & 21$\pm$0.8 & 28$\pm$1.1 & 19$\pm$1.0 & 23$\pm$0.8 & 15$\pm$0.8 \\
                   \midrule
\multirow{10}{*}{\rotatebox[origin=c]{90}{$P_{\text{noise}}=20\%$}} & RobustBoost                & 24$\pm$4.0 & 33$\pm$5.1 & 28$\pm$4.8 & 25$\pm$5.0 & 34$\pm$5.6 & 25$\pm$4.9 & 25$\pm$4.3 & 20$\pm$5.9 \\
                   & AdaBoost                   & 25$\pm$4.2 & 34$\pm$5.0 & 30$\pm$4.9 & 24$\pm$5.3 & 34$\pm$4.8 & 24$\pm$4.7 & 24$\pm$4.0 & 20$\pm$3.9 \\
                   & LPBoost                    & 39$\pm$6.4 & 37$\pm$4.8 & 40$\pm$6.0 & 28$\pm$5.2 & 36$\pm$5.4 & 27$\pm$2.2 & 26$\pm$4.3 & 17$\pm$5.4 \\
                   & LogitBoost                 & 28$\pm$4.4 & 33$\pm$4.8 & 32$\pm$4.7 & 24$\pm$6.0 & 32$\pm$5.3 & 23$\pm$4.6 & 24$\pm$3.8 & 14$\pm$4.2 \\
                   & GentleBoost                & 28$\pm$4.3 & 34$\pm$5.2 & 32$\pm$4.8 & 25$\pm$5.2 & 34$\pm$5.0 & 25$\pm$5.2 & 24$\pm$3.8 & 14$\pm$4.2 \\
                   & BrownBoost                 & 28$\pm$4.8 & 31$\pm$4.6 & 30$\pm$5.0 & 25$\pm$4.8 & 38$\pm$5.3 & 21$\pm$4.0 & 23$\pm$4.4 & 15$\pm$4.1 \\
                   & Robust-GBDT                & 22$\pm$4.6 & 27$\pm$2.8 & 27$\pm$3.5 & 18$\pm$5.1 & 31$\pm$4.3 & 19$\pm$4.5 & 19$\pm$3.6 & 12$\pm$6.0 \\
                   & XGB-Quad                   & 23$\pm$4.6 & 29$\pm$4.2 & 24$\pm$3.7 & 26$\pm$6.1 & 35$\pm$5.0 & 20$\pm$4.1 & 26$\pm$4.8 & 11$\pm$3.4 \\
                   & RMBoost                    & 25$\pm$4.2 & 29$\pm$2.3 & 27$\pm$4.1 & 16$\pm$4.0 & 28$\pm$6.3 & 18$\pm$1.9 & 24$\pm$3.3 & 20$\pm$3.7 \\
                   & Minimax risk               & 30$\pm$1.1 & 32$\pm$0.8 & 33$\pm$0.8 & 27$\pm$0.9 & 30$\pm$1.4 & 25$\pm$1.1 & 27$\pm$0.6 & 20$\pm$1.1 \\
                   \midrule
\multirow{10}{*}{\rotatebox[origin=c]{90}{Adversarial $P_{\text{noise}}=10\%$}} & RobustBoost                & 26$\pm$4.0 & 34$\pm$4.6 & 31$\pm$5.3 & 22$\pm$5.1 & 37$\pm$5.8 & 23$\pm$4.2 & 23$\pm$4.3 & 17$\pm$4.6 \\
                   & AdaBoost                   & 26$\pm$3.9 & 34$\pm$5.0 & 32$\pm$4.9 & 22$\pm$4.8 & 36$\pm$6.2 & 23$\pm$3.8 & 24$\pm$3.6 & 14$\pm$3.2 \\
                   & LPBoost                    & 38$\pm$5.7 & 38$\pm$4.6 & 40$\pm$5.8 & 25$\pm$5.5 & 38$\pm$5.6 & 24$\pm$4.4 & 25$\pm$4.1 & 13$\pm$3.5 \\
                   & LogitBoost                 & 28$\pm$3.5 & 33$\pm$5.2 & 33$\pm$5.2 & 21$\pm$4.3 & 34$\pm$5.0 & 23$\pm$4.0 & 22$\pm$3.8 & 11$\pm$3.2 \\
                   & GentleBoost                & 28$\pm$3.9 & 33$\pm$4.7 & 34$\pm$5.1 & 22$\pm$4.6 & 35$\pm$5.1 & 23$\pm$3.7 & 22$\pm$3.7 & 11$\pm$3.1 \\
                   & BrownBoost                 & 29$\pm$4.3 & 33$\pm$4.3 & 30$\pm$5.3 & 20$\pm$4.4 & 36$\pm$5.7 & 21$\pm$4.0 & 22$\pm$3.6 & 13$\pm$3.1 \\
                   & Robust-GBDT                & 28$\pm$3.8 & 31$\pm$3.9 & 28$\pm$4.1 & 23$\pm$4.7 & 31$\pm$4.8 & 25$\pm$4.4 & 22$\pm$3.4 & 24$\pm$6.2 \\
                   & XGB-Quad                   & 25$\pm$4.5 & 30$\pm$5.1 & 26$\pm$3.2 & 17$\pm$4.8 & 29$\pm$4.7 & 19$\pm$3.8 & 24$\pm$5.0 & 10$\pm$3.1 \\
                   & RMBoost                    & 25$\pm$5.4 & 27$\pm$3.9 & 26$\pm$3.1 & 19$\pm$5.1 & 22$\pm$3.6 & 16$\pm$3.7 & 23$\pm$3.8 & 11$\pm$2.2 \\
                   & Minimax risk               & 25$\pm$0.9 & 33$\pm$0.5 & 28$\pm$0.7 & 25$\pm$0.7 & 29$\pm$0.5 & 21$\pm$1.3 & 24$\pm$0.3 & 17$\pm$0.5 \\
                   \midrule
\multirow{10}{*}{\rotatebox[origin=c]{90}{Adversarial $P_{\text{noise}}=20\%$}} & RobustBoost                & 30$\pm$4.6 & 44$\pm$5.6 & 36$\pm$5.7 & 33$\pm$5.2 & 46$\pm$5.6 & 33$\pm$4.1 & 26$\pm$4.7 & 30$\pm$5.9 \\
                   & AdaBoost                   & 30$\pm$4.2 & 43$\pm$5.3 & 37$\pm$5.7 & 32$\pm$5.4 & 46$\pm$6.0 & 33$\pm$4.3 & 27$\pm$4.3 & 30$\pm$5.2 \\
                   & LPBoost                    & 42$\pm$5.7 & 46$\pm$4.9 & 46$\pm$6.8 & 36$\pm$6.9 & 48$\pm$5.6 & 35$\pm$4.6 & 25$\pm$4.8 & 42$\pm$5.6 \\
                   & LogitBoost                 & 33$\pm$4.5 & 43$\pm$5.2 & 39$\pm$5.9 & 32$\pm$5.6 & 45$\pm$5.8 & 33$\pm$4.1 & 22$\pm$4.5 & 33$\pm$5.2 \\
                   & GentleBoost                & 33$\pm$4.6 & 42$\pm$5.6 & 39$\pm$6.2 & 32$\pm$5.7 & 46$\pm$5.7 & 33$\pm$4.4 & 22$\pm$4.7 & 33$\pm$6.0 \\
                   & BrownBoost                 & 33$\pm$4.6 & 43$\pm$5.4 & 33$\pm$5.4 & 29$\pm$5.0 & 46$\pm$5.7 & 31$\pm$4.0 & 22$\pm$4.8 & 33$\pm$5.2 \\
                   & Robust-GBDT                & 32$\pm$3.7 & 37$\pm$4.1 & 33$\pm$4.9 & 30$\pm$5.0 & 41$\pm$5.0 & 32$\pm$4.3 & 38$\pm$5.0 & 32$\pm$6.3 \\
                   & XGB-Quad                   & 29$\pm$5.0 & 42$\pm$5.2 & 29$\pm$5.4 & 22$\pm$5.6 & 38$\pm$6.5 & 30$\pm$5.7 & 20$\pm$4.6 & 29$\pm$5.8 \\
                   & RMBoost                    & 27$\pm$5.4 & 31$\pm$3.4 & 34$\pm$5.8 & 20$\pm$3.6 & 29$\pm$3.3 & 23$\pm$5.5 & 22$\pm$4.8 & 27$\pm$5.5 \\
                   & Minimax risk               & 28$\pm$0.9 & 35$\pm$0.3 & 33$\pm$0.9 & 29$\pm$0.6 & 31$\pm$0.6 & 27$\pm$1.6 & 19$\pm$0.6 & 28$\pm$0.7\\
 \bottomrule
\end{tabular}
\end{adjustbox}
\end{table*}

\vspace{-0.2cm}
\subsection{Additional experimental results}\label{app:addexp1}\vspace{-0.0cm}
\vspace{-0.2cm}
In the first set of additional experimental results, we further compare the classification error of RMBoost with existing boosting methods. The results in Table~\ref{tab:error} in the paper as well as Table~\ref{tab:error_apendice} below are obtained carrying out 100 random and stratified train/test partitions with 10\% test samples. Table~\ref{tab:error} in the paper shows the classification error obtained by the most representative methods with clean labels and with symmetric and uniform label noise with $P_{\text{noise}} = 10\%$. Table~\ref{tab:error_apendice} shows the classification error obtained by the 9 boosting methods with clean labels, with symmetric and uniform label noise with $P_{\text{noise}} = 10\%$ and $P_{\text{noise}} = 20\%$, as well as with adversarial noise with $P_{\text{noise}} = 10\%$ and $P_{\text{noise}} = 20\%$. 
Over multiple datasets and types of label noise, Table~\ref{tab:error} together with Table~\ref{tab:error_apendice} show  that the minimax risks obtained at learning are often near the prediction error and that RMBoost can obtain top accuracies in comparison with existing boosting methods both in noise-less and noisy cases.

\begin{figure*}
\vspace{0.cm}
\centering
  \psfrag{XGSquared}[][][0.7]{\hspace{-0cm}XGB-Quad}
    \psfrag{LPboost}[b][][0.7]{\hspace{-0.2cm}LPBoost}
    \psfrag{Brownboost}[b][][0.7]{\hspace{0.1cm}BrownBoost}
    \psfrag{GBDT}[b][][0.7]{Robust-GBDT}
    \psfrag{Robustboost}[b][][0.7]{RobustBoost}
    \psfrag{GentleBoost}[b][][0.7]{\hspace{0.6cm}GentleBoost}
    \psfrag{LogitBoost}[b][][0.7]{\hspace{1.2cm}LogitBoost}
    \psfrag{RMBoost}[b][][0.7]{\hspace{1.2cm}RMBoost}
    \psfrag{AdaBoost}[b][][0.7]{\hspace{1.2cm}AdaBoost}
    \psfrag{Robustness}[t][][0.6]{Error with noise $-$ error without noise [\%]}
    \psfrag{Optimality}[b][][0.6]{Ranking without noise}
 \begin{subfigure}[t]{0.31\textwidth}
 \psfrag{XGSquared}[b][][0.7]{\hspace{-0cm}XGB-Quad}
  \psfrag{Robustboost}[b][][0.7]{RobustBoost}
  \psfrag{Brownboost}[][][0.7]{\hspace{1.35cm}BrownBoost}
  \psfrag{AdaBoost}[b][][0.7]{\hspace{-0.6cm}AdaBoost}
    \psfrag{1}[][][0.5]{1}
    \psfrag{2}[][][0.5]{2}
    \psfrag{3}[][][0.5]{3}
    \psfrag{5}[][][0.5]{5}
    \psfrag{4}[][][0.5]{4}
    \psfrag{6}[][][0.5]{6}
    \psfrag{8}[][][0.5]{8}
    \psfrag{10}[][][0.5]{10}
    \includegraphics[width=\textwidth]{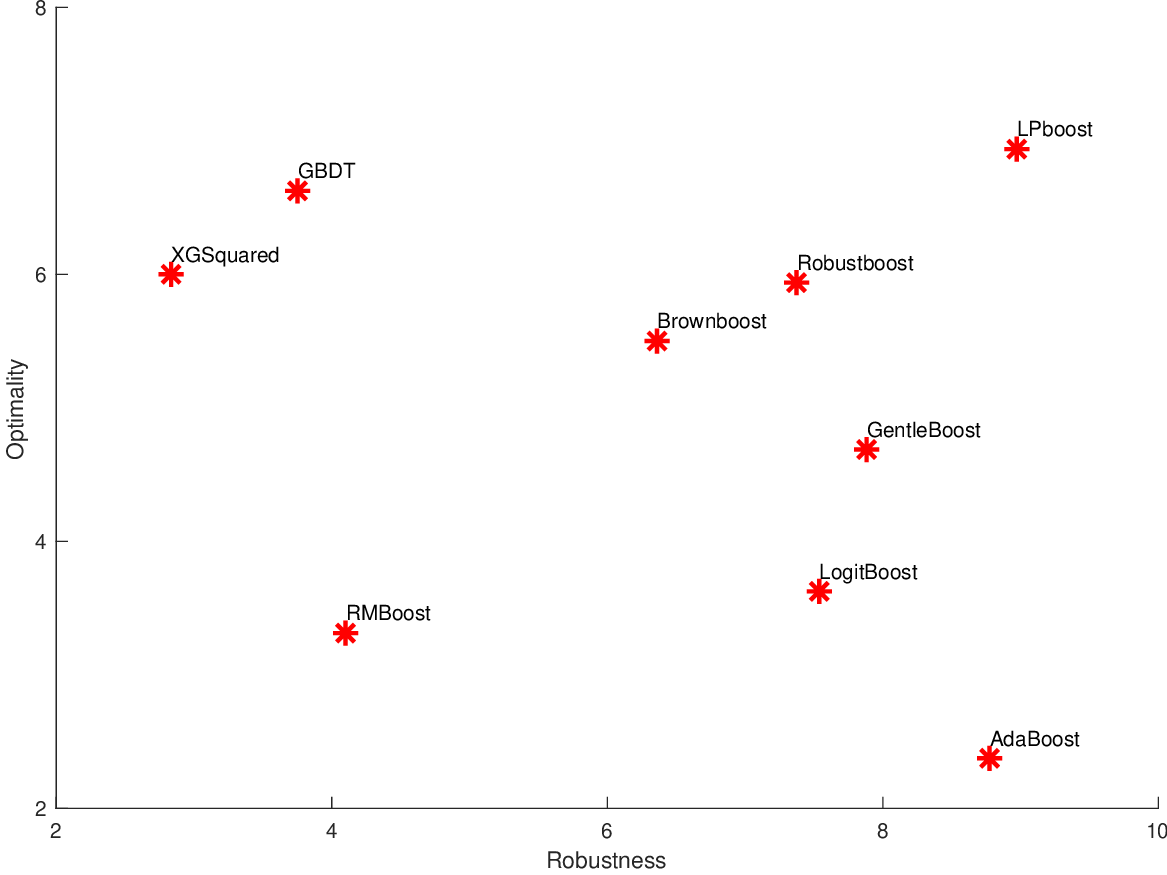}\vspace{0.13cm}
    \caption{Unif. sym. case with $P_{\text{noise}}=20\%$.}
 \end{subfigure}
\hfill
\begin{subfigure}[t]{0.31\textwidth}
\psfrag{XGSquared}[b][][0.7]{\hspace{0.2cm}XGB-Quad}
\psfrag{AdaBoost}[b][][0.7]{\hspace{-0.8cm}AdaBoost}
\psfrag{Robustboost}[b][][0.7]{RobustBoost}
    \psfrag{3}[l][][0.5]{3}
    \psfrag{9}[][][0.5]{9}
    \psfrag{8}[][][0.5]{8}
    \psfrag{5}[][][0.5]{5}
    \psfrag{7}[][][0.5]{5}
    \psfrag{2}[][][0.5]{2}
    \psfrag{4}[][][0.5]{4}
    \psfrag{6}[][][0.5]{6}
\includegraphics[width=\textwidth]{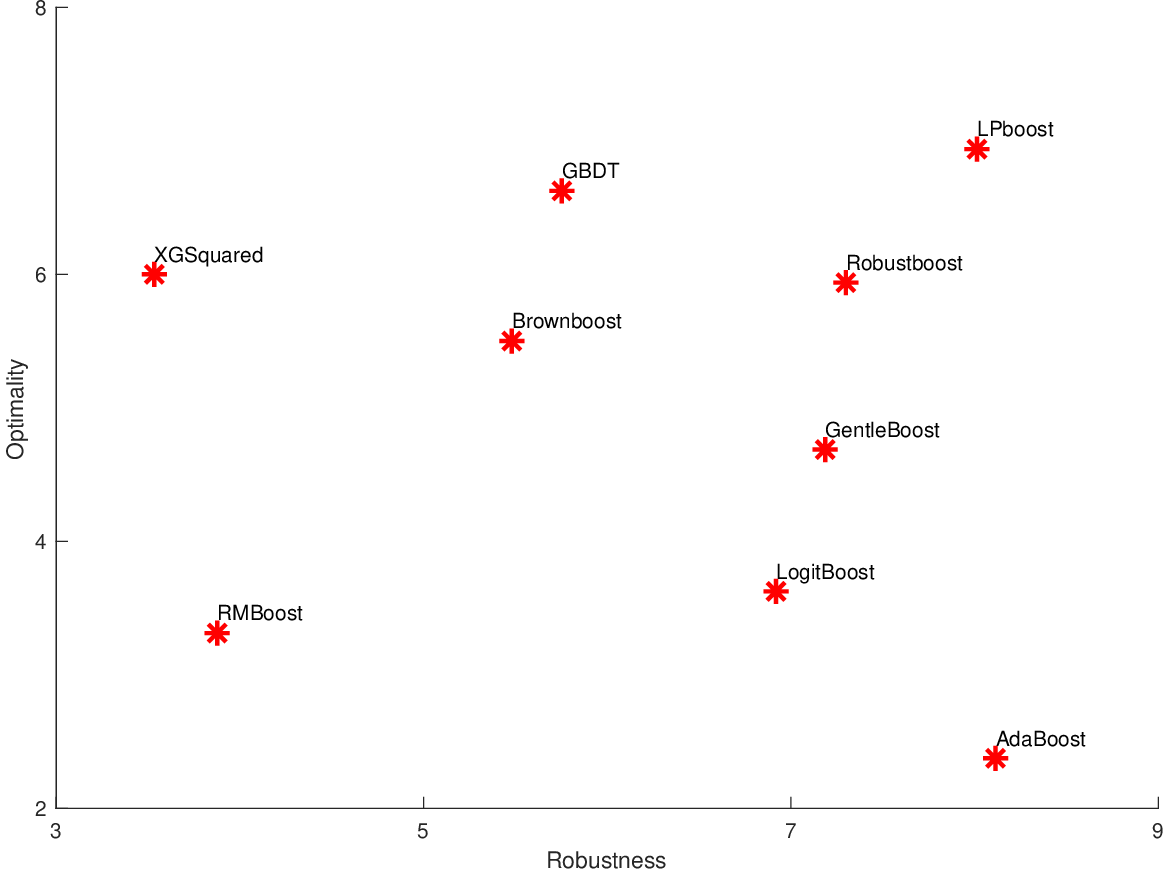}\vspace{0.13cm}
    \caption{Adversarial case with $P_{\text{noise}}=10\%$.}
 \end{subfigure}
 \hfill
 \begin{subfigure}[t]{0.31\textwidth}
 \psfrag{Robustboost}[b][][0.7]{RobustBoost}
     \psfrag{Brownboost}[][][0.7]{\hspace{1.5cm}BrownBoost}
     \psfrag{XGSquared}[b][][0.7]{\hspace{-0cm}XGB-Quad}
    \psfrag{1}[][][0.5]{1}
    \psfrag{2}[r][][0.5]{2}
    \psfrag{3}[][][0.5]{3}
    \psfrag{5}[][][0.5]{5}
    \psfrag{4}[][][0.5]{4}
    \psfrag{6}[][][0.5]{6}
    \psfrag{8}[][][0.5]{8}
    \psfrag{12}[][][0.5]{12}
    \psfrag{16}[][][0.5]{16}
    \psfrag{20}[][][0.5]{20}
    \psfrag{0.06}[][][0.5]{6}
    \psfrag{0.1}[][][0.5]{10}
    \psfrag{0.14}[][][0.5]{14}
    \psfrag{0.18}[][][0.5]{18}
    \psfrag{0.22}[][][0.5]{22}
    \includegraphics[width=\textwidth]{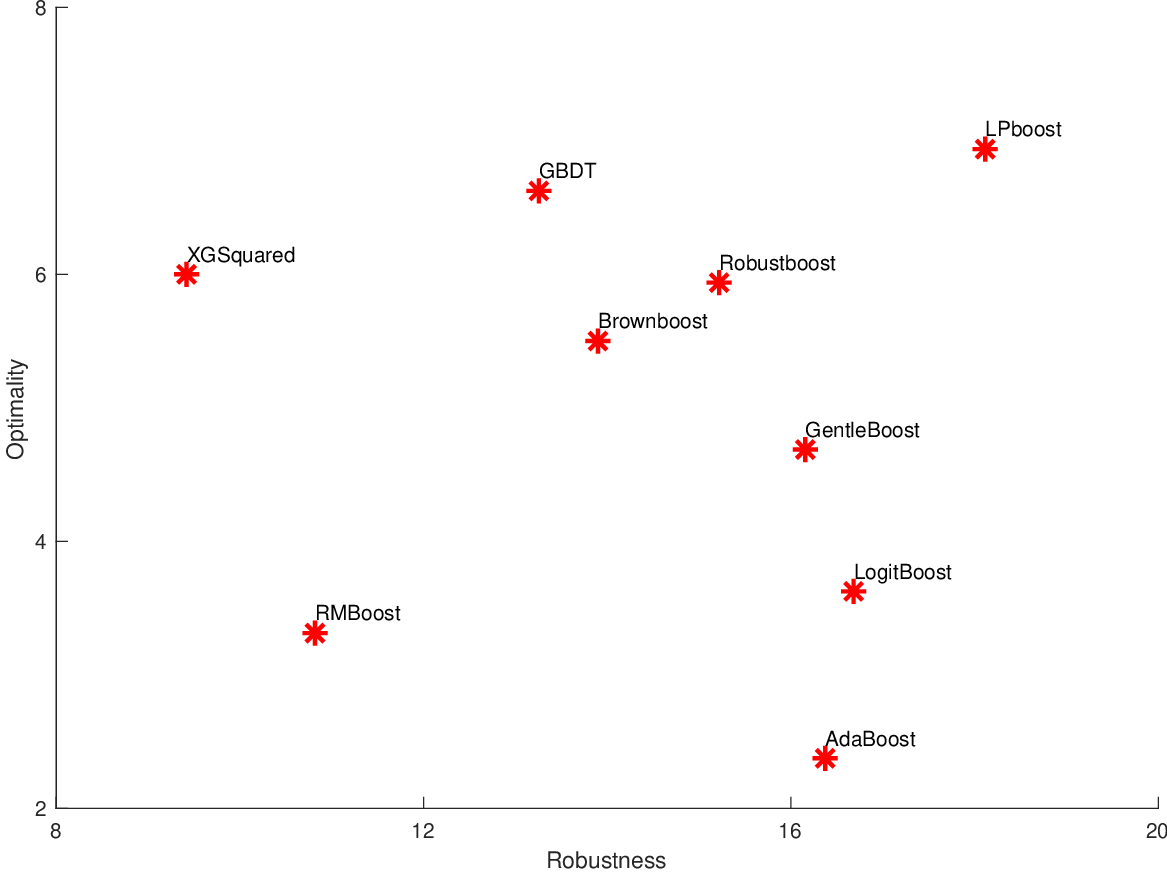}\vspace{0.13cm}
    \caption{
    Adversarial case with $P_{\text{noise}}=20\%$.}
 \end{subfigure}\vspace{-0.cm}
 \caption{Trade-off classification performance vs robustness to uniform and symmetric noise as well as adversarial noise.}\vspace{-0.cm}
 \label{fig:optimality_adversarial}
 \end{figure*}

Figure~\ref{fig:trade-off} in the paper as well as Figure~\ref{fig:optimality_adversarial} and Table~\ref{tab:robustness} summarize the results in Tables~\ref{tab:error}  and \ref{tab:error_apendice} in terms of the trade-off between classification performance and robustness to noise. The vertical axis of the figures shows the classification performance in terms of the average ranking in the noise-less case, while the horizontal axis shows the robustness to noise in terms of the average difference between the error with noisy labels and that without noise. Figure~\ref{fig:optimality_adversarial} and Table~\ref{tab:robustness} extend the results in the main paper to uniform and symmetric label noise with $P_{\text{noise}} = 20\%$ and with adversarial noise with $P_{\text{noise}} = 10\%$ and $P_{\text{noise}} = 20\%$ complementing those with uniform and symmetric label noise with $P_{\text{noise}} = 10\%$ in Table~\ref{tab:error} and Figure~\ref{fig:trade-off}. Over multiple types of label noise, Figures~\ref{fig:trade-off} and \ref{fig:optimality_adversarial} together with Table~\ref{tab:robustness} show  that RMBoost is a robust method that can also provide a strong classification performance near that of AdaBoost method.

\begin{table}
\vspace{0.5cm}
\centering
\caption{Classification performance and robustness to noise for RMBoost and state-of-the-art boosting methods.}
\vspace{0.2cm}
\label{tab:robustness}
\begin{adjustbox}{width=1\textwidth}
\begin{tabular}{l|ccccc}
\toprule
& \multicolumn{1}{c}{Ranking without noise}   & \multicolumn{4}{c}{Error with noise - error without noise [\%]}                                                  \\
\midrule
Method & Average rank & Noise 10\% & Noise 20\% & Adver. noise 10\% & Adver. noise 20\% \\
\midrule
RobustBoost & 5.94 & 3.08 & 7.37 & 7.30 & 15.22 \\
AdaBoost & 2.38 & 3.88 & 8.78 & 8.11 & 16.37 \\
LPBoost & 6.94 & 4.39 & 8.97 & 8.01 & 18.12 \\
LogitBoost & 3.63 & 3.27 & 7.54 & 6.92 & 16.68 \\
GentleBoost & 4.69 & 3.38 & 7.88 & 7.19 & 16.16 \\
BrownBoost & 5.50 & 2.72 & 6.36 & 5.48 & 13.90 \\
Robust-GBDT & 6.63 & 2.82 & 3.75 & 5.75 & 13.26 \\
XGB-Quad & 6.00 & 1.65 & 2.83 & 3.53 & 9.42 \\
RMBoost & 3.31 & 1.71 & 4.10 & 3.88 & 10.82 \\
\bottomrule
\end{tabular}
\end{adjustbox}
\vspace{-0.2cm}
\end{table}



In the second set of additional results, we further show the robustness to noise of RMBoost in comparison with AdaBoost. Figure~\ref{fig:noise_credit} in the paper as well as Figures~\ref{fig:noise_diabetes} and~\ref{fig:noise_climate} are obtained computing for each noise level the classification error over $500$ random stratified partitions with 10\% test samples. Figures~\ref{fig:noise_diabetes} and~\ref{fig:noise_climate} extend the results using `Diabetes' and `Climate' datasets completing those in the main paper that show the results using Credit dataset. Figures~\ref{fig:noise_diabetes} and~\ref{fig:noise_climate} show similar behavior to Figure~\ref{fig:noise_credit} in the paper. In particular, the figures show that RMBoost method is significantly less affected by increased levels of noise. In particular, RMBoost performance only mildly deteriorates with label noise, in accordance with the theoretical results shown in the paper.

\begin{figure*}[]\vspace{-0.cm}
\centering
 \psfrag{0}[][][0.6]{0}
  \psfrag{5}[][][0.6]{5}
  \psfrag{10}[][][0.6]{10}
  \psfrag{15}[][][0.6]{15}
  \psfrag{20}[][][0.6]{20}
  \psfrag{0.5}[][][0.6]{50}
  \psfrag{0.4}[][][0.6]{40}
  \psfrag{0.3}[][][0.6]{30}
  \psfrag{Classification error}[b][][0.7]{Classification error [\%]}
  \psfrag{AdaBoost uniform and symmetric noise}[l][l][0.7]{AdaBoost uniform and symmetric noise}
  \psfrag{AdaBoost adversarial noise}[l][l][0.7]{AdaBoost adversarial noise}
  \psfrag{RMBoost uniform and symmetric noise}[l][l][0.7]{RMBoost uniform and symmetric noise}
  \psfrag{RMBoost adversarial noise}[l][l][0.7]{RMBoost adversarial noise}
  \psfrag{Noise}[t][][0.7]{Noise probability $P_{\text{noise}}$ [\%]}
 \begin{subfigure}[t]{0.48\textwidth}
     \includegraphics[width=\textwidth]{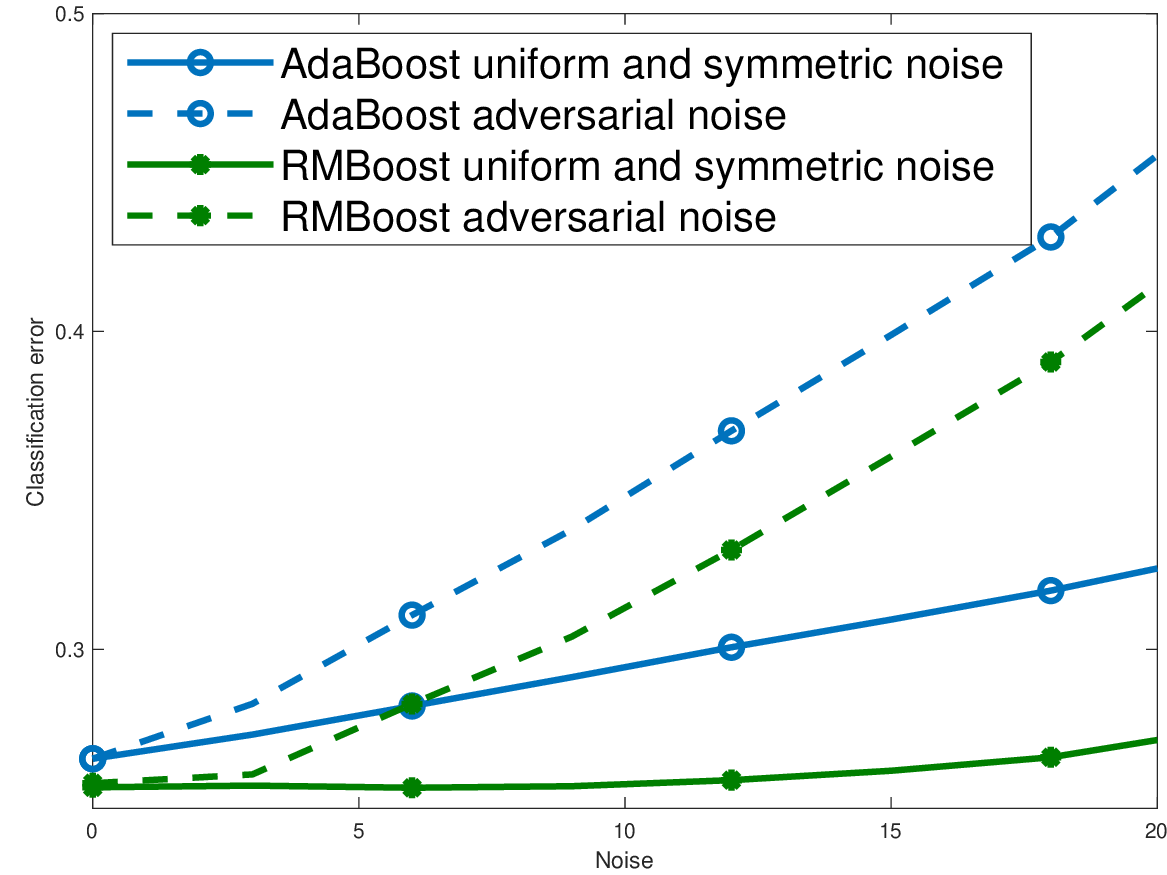}\vspace{0.2cm} 
    \caption{Diabetes dataset}
    \label{fig:noise_diabetes}
 \end{subfigure}
\hfill
 \begin{subfigure}[t]{0.48\textwidth}
  \psfrag{0.1}[][][0.6]{10}
    \psfrag{0.2}[][][0.6]{20}
    \psfrag{0.3}[][][0.6]{30}
    \includegraphics[width=\textwidth]{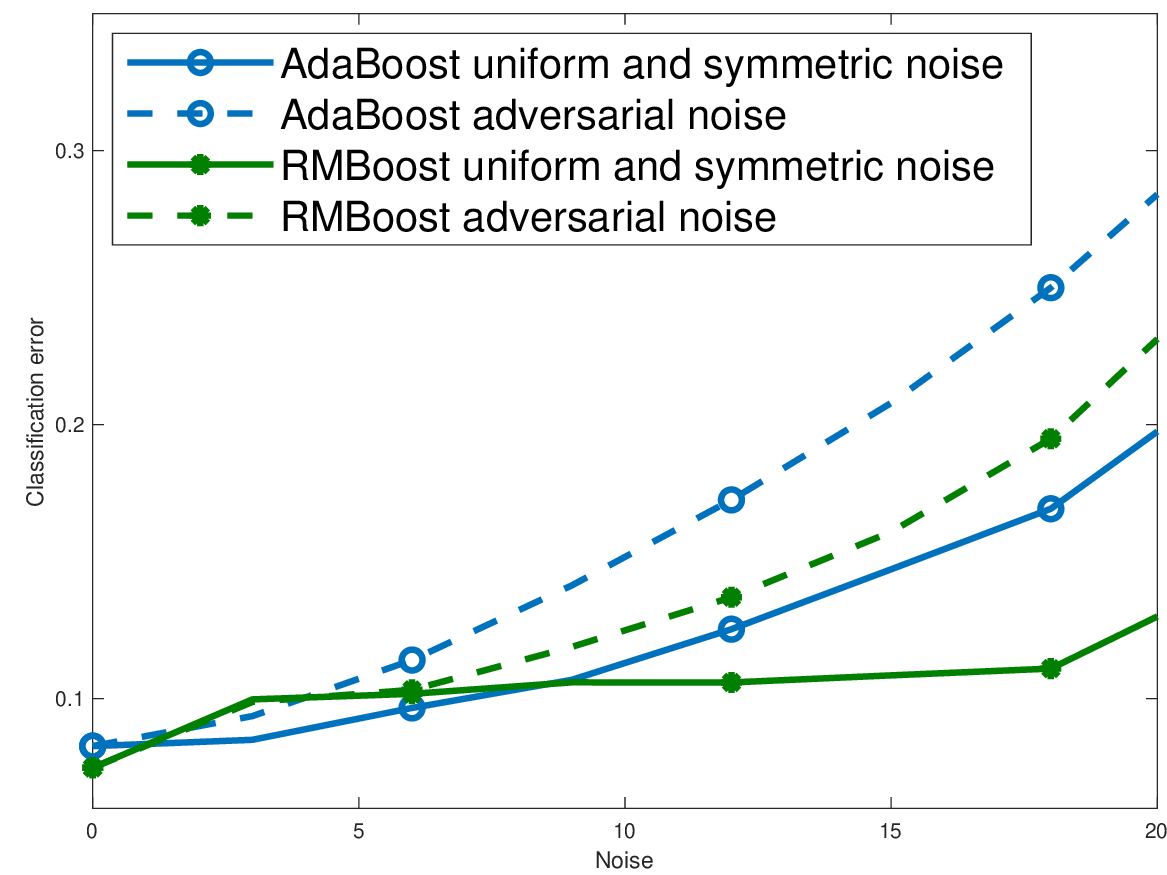}\vspace{0.2cm}  
    \caption{Climate dataset}
    \label{fig:noise_climate} 
 \end{subfigure}\vspace{-0.15cm}
 \caption{Performance degradation of AdaBoost and RMBoost
methods for increased levels of noise.}\vspace{-0.4cm}
 \end{figure*}
 \vspace{-0.4cm}
  \begin{figure*} \vspace{-0.4cm}
\centering
 \psfrag{0}[][][0.6]{\hspace{-0.2cm}$1$}
  \psfrag{1}[][][0.6]{\hspace{-0.15cm}$10^1$}
  \psfrag{2}[][][0.6]{\hspace{-0.2cm}$10^2$}
     \psfrag{100}[][][0.6]{$100$}
  \psfrag{300}[][][0.6]{$300$}
    \psfrag{500}[][][0.6]{$500$}
        \psfrag{400}[][][0.6]{$400$}
            \psfrag{700}[][][0.6]{$700$}
      \psfrag{1000}[][][0.6]{$1000$}
        \psfrag{x}[t][][0.7]{Training size}
    \psfrag{y}[b][][0.7]{Relative running time}
      \psfrag{AdaBoost}[][][0.7]{AdaBoost}
  \psfrag{LPBoost}[][][0.7]{LPBoost}
  \psfrag{RMBoost}[][][0.7]{RMBoost}
 \begin{subfigure}[t]{0.48\textwidth}
    \includegraphics[width=\textwidth]{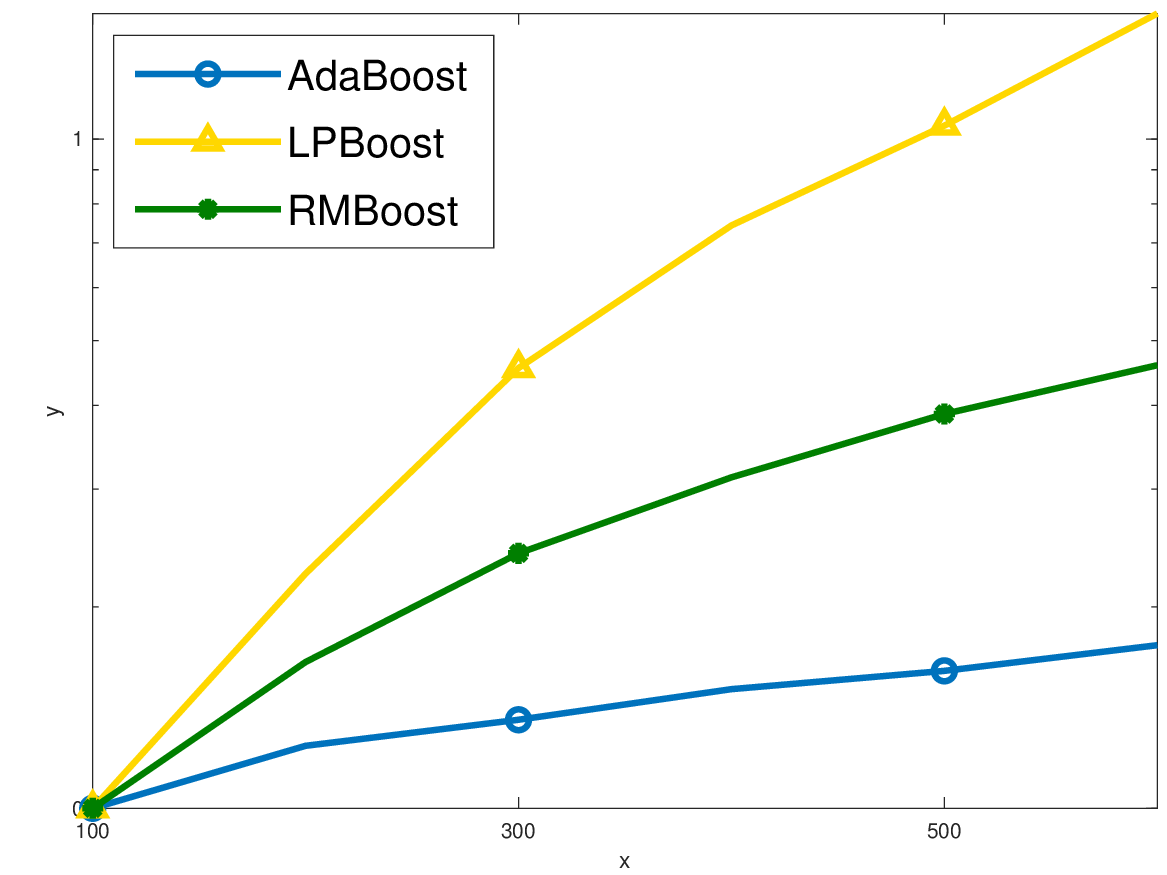}\vspace{0.2cm}
    \caption{Credit dataset}
    \label{fig:time_credit}
 \end{subfigure}
 \hfill
 \begin{subfigure}[t]{0.48\textwidth}
   \includegraphics[width=\textwidth]{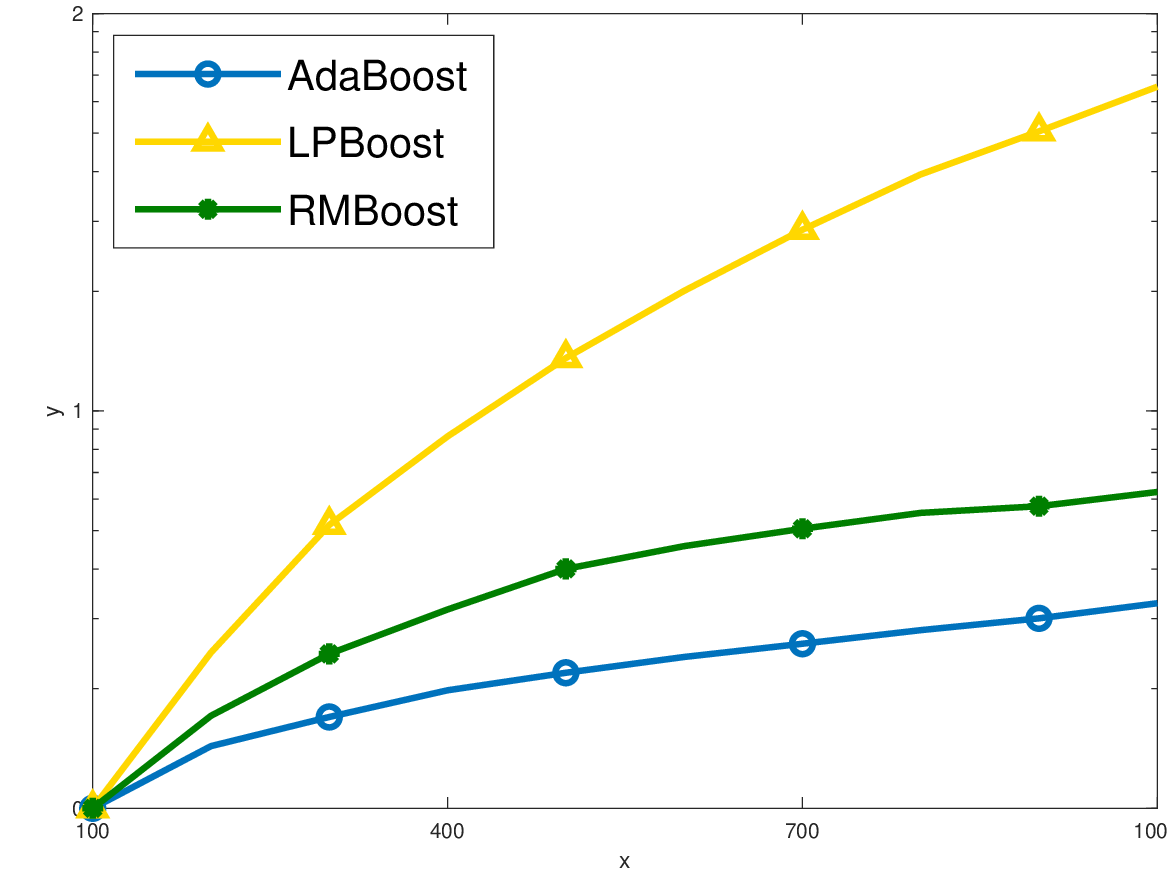}\vspace{0.2cm}
    \caption{QSAR dataset}
    \label{fig:time-qsar} 
 \end{subfigure}\vspace{-0.15cm}
 \caption{Comparison of relative running times vs training sizes for RMBoost, LPBoost, and AdaBoost.\label{fig-times}}
 \end{figure*}

\subsection{Comparison in terms of running times}\label{appendix-time}
\vspace{-0.2cm}
In the third set of additional results, we compare the running times of RMBoost with those of \mbox{AdaBoost} and LPBoost. Figure~\ref{fig-times} shows the relative running times of the methods varying the training sizes using `Credit' and `QSAR' datasets (the absolute running times in all the methods are in the order of seconds in a regular desktop machine). The vertical axis in the figure represents the ratio between the learning running times for different training sizes divided by that achieved with $100$ training samples averaged over $100$ random partitions. In accordance with the discussion in Section~\ref{sec-4} of the main paper, the results depicted in the figure show that RMBoost can achieve similar running times as LPBoost method, which also addresses a linear optimization problem at learning. As expected, AdaBoost method results in lower running times since it does not require to solve an optimization problem at each round. Nevertheless, the complexity increase required by RMBoost is not significant and scales mildly with the training size.
 \begin{table*}
\renewcommand{\arraystretch}{1.3}
\centering
\caption{Average classification error in \% $\pm$ st. dev. for RMBoost and state-of-the-art boosting methods.}
\begin{adjustbox}{width=1\textwidth}
\begin{tabular}{l l | c c c c c c c c c c}
\toprule
 & Dataset & RobustB & AdaB & LPB & LogitB & GentleB & BrownB & GBDT & XGB-Q & RMB & Minimax \\
\midrule
\multirow{3}{*}{\rotatebox[origin=c]{90}{Noiseless}} & Susy      & 24$\pm$1.7 & 24$\pm$1.8 & 30$\pm$2.2 & 24$\pm$2.0 & 25$\pm$1.7 & 23$\pm$1.7 & 25$\pm$1.8 & 24$\pm$1.6 & 23$\pm$2.0 & 23$\pm$0.3 \\
 & Higgs     & 34$\pm$1.8 & 33$\pm$2.0 & 38$\pm$3.1 & 33$\pm$1.9 & 34$\pm$2.1 & 34$\pm$1.9 & 37$\pm$2.0 & 37$\pm$2.4 & 35$\pm$2.1 & 33$\pm$0.3 \\
 & Forestcov & 20$\pm$1.8 & 20$\pm$1.8 & 27$\pm$1.7 & 17$\pm$1.1 & 20$\pm$1.7 & 20$\pm$1.5 & 26$\pm$2.0 & 33$\pm$1.7 & 22$\pm$1.6 & 22$\pm$0.2 \\
\midrule
\multirow{3}{*}{\rotatebox[origin=c]{90}{$10\%$}} & Susy      & 26$\pm$2.2 & 24$\pm$1.8 & 35$\pm$3.4 & 27$\pm$1.9 & 28$\pm$1.8 & 24$\pm$1.8 & 26$\pm$1.8 & 24$\pm$1.8 & 23$\pm$1.9 & 28$\pm$0.4 \\
& Higgs     & 35$\pm$2.1 & 35$\pm$1.9 & 41$\pm$1.4 & 36$\pm$2.1 & 37$\pm$2.3 & 35$\pm$2.2 & 39$\pm$2.2 & 38$\pm$2.4 & 35$\pm$2.4 & 36$\pm$0.5 \\
 & Forestcov & 22$\pm$1.7 & 22$\pm$1.7 & 34$\pm$1.9 & 22$\pm$1.2 & 25$\pm$2.0 & 23$\pm$1.6 & 27$\pm$2.1 & 33$\pm$2.2 & 23$\pm$1.7 & 28$\pm$0.4 \\
\midrule
\multirow{3}{*}{\rotatebox[origin=c]{90}{$20\%$}} & Susy      & 27$\pm$2.0 & 26$\pm$2.0 & 38$\pm$2.2 & 30$\pm$2.1 & 32$\pm$2.0 & 25$\pm$1.9 & 29$\pm$1.6 & 25$\pm$1.7 & 24$\pm$2.0 & 32$\pm$0.5 \\
& Higgs     & 37$\pm$2.3 & 37$\pm$2.0 & 46$\pm$3.3 & 38$\pm$2.3 & 39$\pm$2.4 & 37$\pm$2.3 & 41$\pm$2.9 & 39$\pm$2.7 & 36$\pm$2.2 & 39$\pm$0.6 \\
 & Forestcov & 24$\pm$1.8 & 24$\pm$1.8 & 38$\pm$2.3 & 25$\pm$1.5 & 29$\pm$1.9 & 25$\pm$1.9 & 32$\pm$2.1 & 34$\pm$2.7 & 23$\pm$2.0 & 32$\pm$0.4 \\
\midrule
\multirow{3}{*}{\rotatebox[origin=c]{90}{Adv $10\%$}} & Susy      & 32$\pm$2.0 & 32$\pm$2.1 & 43$\pm$2.7 & 34$\pm$2.0 & 35$\pm$2.0 & 31$\pm$3.3 & 34$\pm$1.6 & 33$\pm$2.9 & 32$\pm$2.3 & 28$\pm$0.5 \\
& Higgs     & 39$\pm$2.0 & 39$\pm$2.0 & 48$\pm$2.8 & 41$\pm$2.0 & 42$\pm$2.3 & 38$\pm$3.4 & 44$\pm$1.2 & 38$\pm$1.7 & 38$\pm$2.4 & 40$\pm$0.5 \\
 & Forestcov & 28$\pm$1.9 & 28$\pm$2.0 & 40$\pm$2.2 & 28$\pm$2.1 & 29$\pm$2.3 & 28$\pm$2.0 & 27$\pm$1.7 & 28$\pm$2.3 & 28$\pm$2.1 & 31$\pm$0.5 \\
\midrule
\multirow{3}{*}{\rotatebox[origin=c]{90}{Adv $20\%$}} & Susy      & 40$\pm$2.0 & 40$\pm$1.9 & 49$\pm$2.8 & 43$\pm$1.9 & 44$\pm$2.2 & 40$\pm$1.5 & 41$\pm$2.3 & 39$\pm$2.0 & 38$\pm$2.5 & 33$\pm$0.5 \\
& Higgs     & 49$\pm$2.1 & 50$\pm$2.2 & 49$\pm$3.4 & 49$\pm$2.7 & 49$\pm$2.4 & 49$\pm$2.8 & 49$\pm$0.7 & 49$\pm$4.1 & 48$\pm$3.3 & 46$\pm$0.6 \\
 & Forestcov & 39$\pm$2.5 & 39$\pm$2.4 & 47$\pm$3.8 & 38$\pm$2.2 & 40$\pm$2.2 & 39$\pm$2.1 & 38$\pm$1.4 & 36$\pm$2.7 & 36$\pm$2.6 & 36$\pm$0.7 \\
\bottomrule
\end{tabular}\vspace{-0.2cm}
\end{adjustbox}
\label{tab:results_large_datasets}
\end{table*}

 \vspace{-0.2cm}
\subsection{Additional results with larger datasets}
\vspace{-0.2cm}
In the fourth set of additional results, we further compare the classification error of RMBoost with existing boosting methods using large datasets. Table~\ref{tab:error} in the paper and Table~\ref{tab:error_apendice} in Appendix~\ref{app:addexp1} shows the classification error obtained by using small datasets (up to $1000$ samples). The Table~\ref{tab:results_large_datasets} shows the classification error obtained by using 5,000 randomly drawn training samples from the `Susy,' `Higgs,' and `Forest Covertype' datasets. Such table shows the results with clean labels, with symmetric and uniform label noise with $P_{\text{noise}} = 10\%$ and $P_{\text{noise}} = 20\%$, as well as with adversarial noise with $P_{\text{noise}} = 10\%$ and $P_{\text{noise}} = 20\%$. The additional results in Table~\ref{tab:results_large_datasets} show similar behavior as those in Tables~\ref{tab:error} and~\ref{tab:error_apendice} using small datasets. The proposed methods achieve adequate performance without noise together with improved robustness with noisy labels.

\begin{figure*}
\vspace{-0.2cm}
\centering
    \psfrag{d2}[l][l][0.6]{Noise $20\%$}
    \psfrag{d1}[l][l][0.6]{Noise $10\%$}
    \psfrag{dataabcdefghijklm}[l][l][0.6]{Noiseless}
    \psfrag{lambda}[t][][0.6]{$\lambda$}
     \psfrag{Error}[b][][0.6]{Classification error [\%]}
         \psfrag{1/n}[t][t][0.6]{$\frac{1}{\sqrt{n}}$}
    \psfrag{0.01}[][][0.5]{1e-2}
    \psfrag{0.001}[][][0.5]{1e-3}
    \psfrag{0.0001}[][][0.5]{1e-4}
    \psfrag{1e-05}[][][0.5]{1e-5}
        \psfrag{0.1}[][][0.5]{0.1}
                   \psfrag{1}[][][0.5]{1}
                        \psfrag{30}[][][0.5]{30}
      \psfrag{35}[][][0.5]{35}
       \psfrag{40}[][][0.5]{40}
           \psfrag{0.3}[][][0.5]{30}
    \psfrag{0.4}[][][0.5]{40}
    \psfrag{0.5}[][][0.5]{50}
 \begin{subfigure}[t]{0.31\textwidth}
    \includegraphics[width=\textwidth]{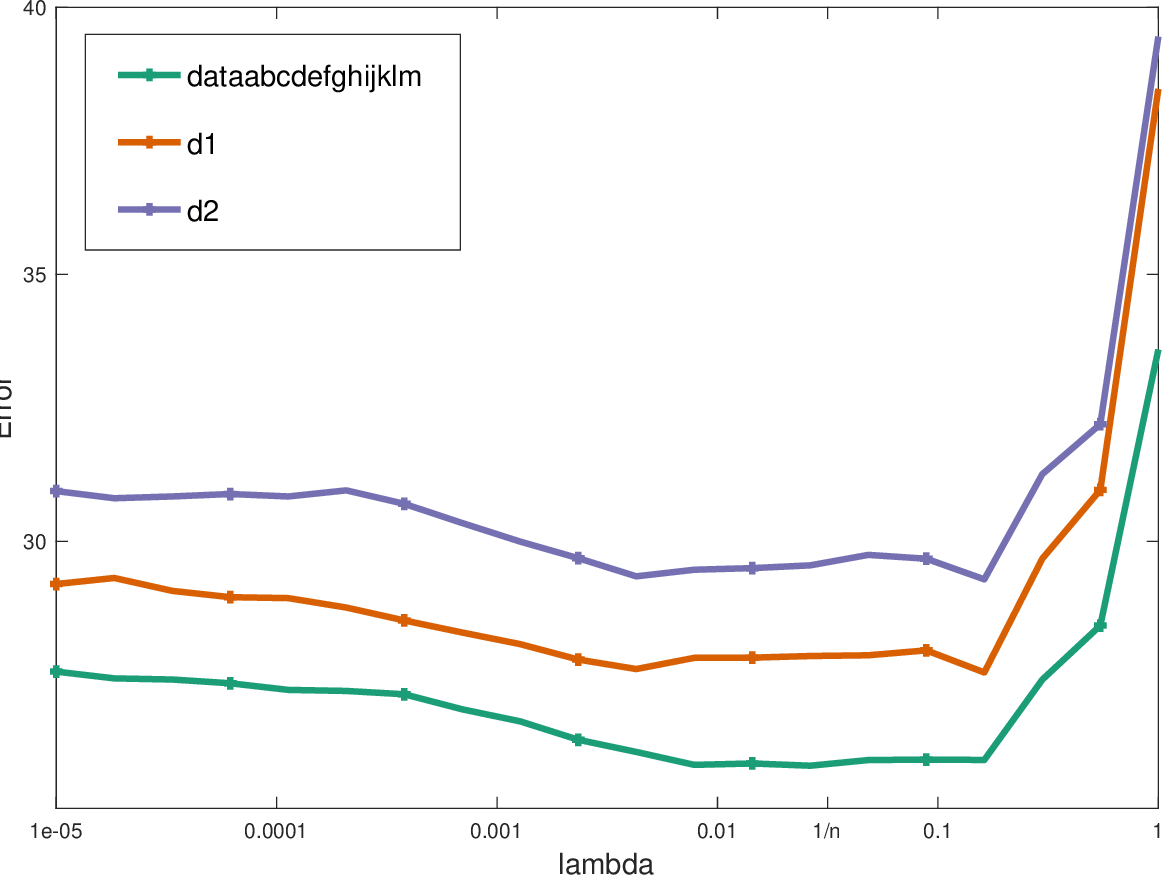}\vspace{0.13cm}
    \caption{German numer dataset}
 \end{subfigure}
\hfill
\begin{subfigure}[t]{0.31\textwidth}
\includegraphics[width=\textwidth]{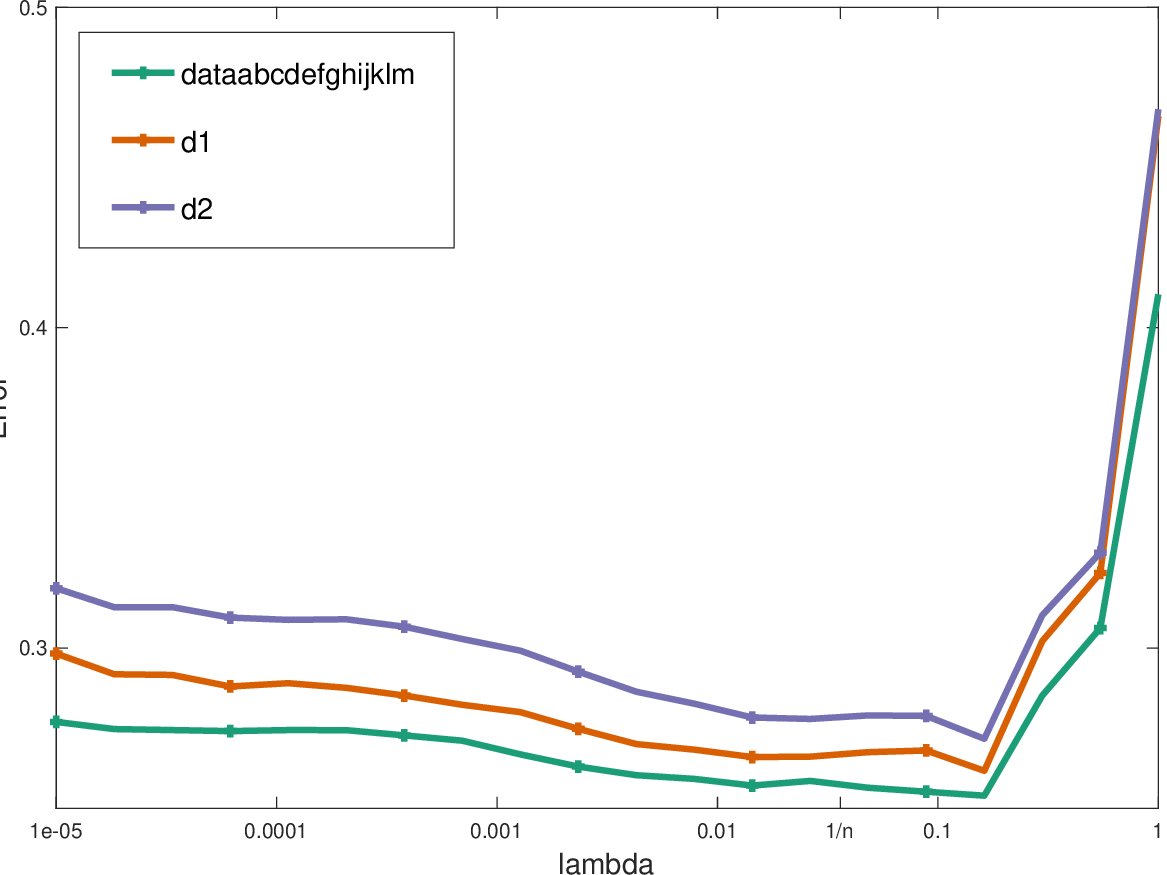}\vspace{0.13cm}
    \caption{Diabetes dataset}
 \end{subfigure}
 \hfill
 \begin{subfigure}[t]{0.31\textwidth}
    \includegraphics[width=\textwidth]{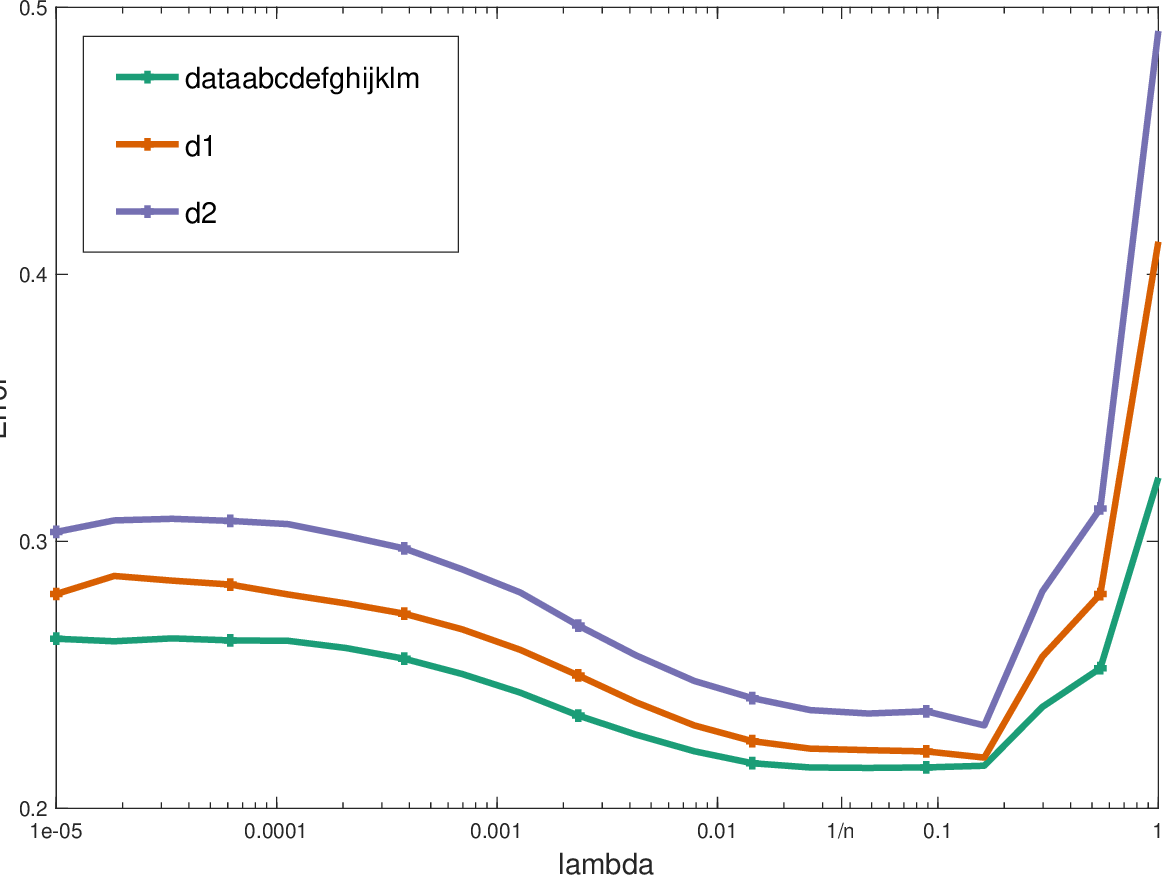}\vspace{0.13cm}
    \caption{Blood transfusion dataset}
 \end{subfigure}\vspace{-0.cm}
 \caption{Performance of RMBoost method using multiple values of ${\lambda}$.}
 \label{fig:lambda} \vspace{-0.4cm}
 \end{figure*}

 \vspace{-0.2cm}
 \subsection{Hyperparameter sensitivity}\label{sec-sensitivity}
\vspace{-0.2cm}
In the fifth set of additional results, we asses the sensitivity of the RMBoost method to the choice of hyperparameter $\lambda$. 
These numerical results are obtained computing for each noise level the classification error over $200$ random stratified partitions with 10\% test samples. 
Figure~\ref{fig:lambda} shows the classification error in 3 datasets obtained by varying the hyperparameter $\lambda$ in cases without label noise and with 10\% and 20\% uniform and symmetric noise. The figure shows that the performance is not significantly affected by the choice of hyperparameter $\lambda$. Although better results can be obtained by tuning the value of $\lambda$, the default value of $1/\sqrt{n}$  achieves adequate results in general.


\end{document}